%% file: main.tex
\documentclass{article}

\PassOptionsToPackage{numbers, compress}{natbib}

\usepackage{pdt_report}

\input{commands}
\definecolor{boilermakergold}{HTML}{cfb991}
\definecolor{steel}{HTML}{555960}
\definecolor{coolgray}{HTML}{6f727b}
\definecolor{dust}{HTML}{EBD99F}
\usepackage{cleveref}
\crefname{corollary}{corollary}{corollaries}
\Crefname{corollary}{Corollary}{Corollaries}

\title{On Variance Reduction in Learning Mean Flows}

\author{%
  Juanwu~Lu \\
  Purdue University \\
  \texttt{juanwu@purdue.edu}
  \And
  Ziran Wang \\
  Purdue University \\
  \texttt{ziran@purdue.edu}
}

\begin{document}

\maketitle

% -----------------------------------------
% Main contents (separate bibliography unit)
\begin{bibunit}[unsrt]
\input{contents/abstract}
\input{contents/introduction}
\input{contents/method}
\input{contents/related}
\input{contents/experiment}
\input{contents/conclusion}
\input{contents/postscripts}
{%
    \small
    \putbib[references]
}
\end{bibunit}

% Comment this out if using `pdt_report' style
% \newpage
% \input{contents/checklist}

% -----------------------------------------
% Appendix (separate bibliography unit)
\begin{bibunit}[unsrt]
\input{contents/supplementary}
\end{bibunit}

\end{document}

%% file: commands.tex
\usepackage{bibunits}
\defaultbibliographystyle{unsrt}
\defaultbibliography{references}

\usepackage{algorithm}
\usepackage{algpseudocode}
\usepackage{amsmath}
\usepackage{amssymb}
\usepackage{amsthm}
\usepackage[toc,page]{appendix}
\usepackage{array}
\usepackage{booktabs} % for professional tables
\usepackage{bm}
\usepackage{cancel}
\usepackage{graphicx}
\usepackage{lipsum}
\usepackage{mathtools}
\usepackage[framemethod=default]{mdframed}
\usepackage{multicol}
\usepackage{subcaption}
\usepackage[most]{tcolorbox}
\usepackage{titletoc}
\usepackage{tikz}
\usepackage{url}
\usepackage{wrapfig}
\usepackage[dvipsnames]{xcolor}
\usepackage[textsize=tiny]{todonotes}

\usetikzlibrary{bayesnet}

\theoremstyle{plain}
\newtheorem{theorem}{Theorem}
\newtheorem{proposition}{Proposition}
\newtheorem{lemma}[theorem]{Lemma}

\theoremstyle{definition}

\theoremstyle{remark}

\definecolor{softteal}{RGB}{70, 190, 180} % Adjust RGB for your preferred shade
\usepackage{microtype}

\def\eqref#1{equation~\ref{#1}}
\def\1{\bm{1}}

\def\rvx{{\mathbf{x}}}

\def\vtheta{{\bm{\theta}}}

\def\vb{{\bm{b}}}

\def\vf{{\bm{f}}}
\def\vg{{\bm{g}}}

\def\vr{{\bm{r}}}

\def\vu{{\bm{u}}}
\def\vv{{\bm{v}}}

\def\vx{{\bm{x}}}

\def\mA{{\bm{A}}}

\def\mG{{\bm{G}}}

\def\mI{{\bm{I}}}
\def\mJ{{\bm{J}}}

\def\mM{{\bm{M}}}

\DeclareMathAlphabet{\mathsfit}{\encodingdefault}{\sfdefault}{m}{sl}
\SetMathAlphabet{\mathsfit}{bold}{\encodingdefault}{\sfdefault}{bx}{n}

\def\gD{{\mathcal{D}}}

\def\gL{{\mathcal{L}}}

\def\gN{{\mathcal{N}}}
\def\gO{{\mathcal{O}}}

\def\gU{{\mathcal{U}}}

\def\gX{{\mathcal{X}}}

\newcommand{\pdata}{p_{\rm{data}}}
\newcommand{\E}{\mathbb{E}}

\newcommand{\R}{\mathbb{R}}

\newcommand{\Var}{\mathrm{Var}}

\newcommand{\Cov}{\mathrm{Cov}}
\usepackage{makecell}

\DeclareMathOperator{\Tr}{Tr}

\makeatletter
\newcommand{\namedlabel}[2]{\begingroup
    \def\@currentlabel{#2}%
    \label{#1}\textbf{#2}%
\endgroup}
\makeatother

%% file: contents/abstract.tex
\begin{abstract}
    \label{sec: abstract}
    One-step generative modeling has emerged as a leading approach for amortizing the inference cost of diffusion and flow-matching models. Among distillation-free methods, \emph{MeanFlow} training is notoriously unstable, with non-decreasing loss and unbounded gradient variance. In this work, we establish a theory that attributes this pathology to a misuse of the conditional velocity field. We show that the conditional velocity plays two distinct statistical roles in the loss: both as an unbiased regression target and as a Monte Carlo control variate in a Jacobi-vector product, with the original MeanFlow loss assigning the wrong coefficient to the latter. We derive the optimal coefficient in closed form and show that a family of fixes in concurrent works corresponds to different practical realizations of the same optimum. A controlled sweep of this coefficient on two-dimensional benchmarks and on a latent Diffusion Transformer recovers the predicted bias-variance ordering. Our DiT experiment also reveals a quantitative \emph{FID-MSE landscape mismatch}. Specifically, although the gradient-MSE is minimized at an interior coefficient value near $\beta\!=\!0.94$, the coefficient that minimizes FID prefers to use conditional velocity directly at the unbiased corner. Our analysis therefore explains \emph{why} MeanFlow is unstable and unifies its concurrent remedies, and shows that the variance-optimal coefficient need not coincide with the quality-optimal one.
\end{abstract}

\keywords{One-step Generative Models, Mean Flows, Variance Reduction, Control Variate}

%% file: contents/introduction.tex
\section{Introduction}
\label{sec: introduction}

%%%%%%%%%%%%%%%%%%%%%%%%%%%%%%%%%%%%%%%%%
% paragraph 1: background
Deep generative models that leverage neural networks to parametrize and sample from unknown data distributions have achieved considerable success~\cite{ho2020denoising,song2021scorebased,karras2022elucidating,dhariwal2021diffusion,esser2024scaling}. Among these models, flow-based generative models~\cite{chen2018neural,rezende2015variational,dinh2017density,grathwohl2019ffjord,lipman2023flowmatchinggenerativemodeling,tong2024improvinggeneralizingflowbasedgenerative,albergo2023buildingnormalizingflowsstochastic} learn a continuous-time transformation between a simple prior and the data distribution. Conditional flow matching~\cite{lipman2023flowmatchinggenerativemodeling,tong2024improvinggeneralizingflowbasedgenerative} and stochastic interpolants~\cite{albergo2023buildingnormalizingflowsstochastic,albergo2025stochasticinterpolants} further enable simulation-free training by regressing a velocity field, achieving sample quality competitive with diffusion models~\cite{ho2020denoising,song2019generative,song2021scorebased}. However, integration at inference time remains a challenge due to its computational costs and numerical instability, motivating a series of subsequent works on \emph{one-step generative models}, including progressive distillation~\cite{salimans2022progressivedistillationfastsampling}, distribution matching~\cite{yin2024distributionmatchingdistillation,yin2024improveddistributionmatchingdistillation}, consistency models~\cite{song2023consistencymodels,lu2025simplifyingstabilizingscalingcontinuoustime}, and flow map matching~\cite{boffi2025flowmapmatchingstochastic}. Nevertheless, most of them require a pre-trained distillation teacher.

%%%%%%%%%%%%%%%%%%%%%%%%%%%%%%%%%%%%%%%%%
% paragraph 2: meanflow and its pathology
Mean Flows~\cite{geng2025meanflowsonestepgenerative} provide a \emph{distillation-free} alternative by learning a two-parameter average velocity field through least-squares regression of a total-derivative identity between the average and instantaneous velocity. In principle, this identity yields exact self-supervision. In practice, however, training is plagued by \emph{non-decreasing loss} and prohibitively \emph{high gradient variance}~\cite{zhang2025alphaflowunderstandingimprovingmeanflow,geng2025improvedmeanflowschallenges}. A plethora of concurrent works have proposed remedies that target different mechanisms. AlphaFlow~\cite{zhang2025alphaflowunderstandingimprovingmeanflow} attributes slow convergence to gradient conflict between flow-matching and total-derivative consistency components and proposes an $\alpha$-curriculum. Improved MeanFlow~\cite{geng2025improvedmeanflowschallenges} replaces the conditional-velocity tangent with a learned marginal-velocity head. Re-MeanFlow~\cite{zhang2026overcomingcurvaturebottleneckmeanflow} preprocesses the data with rectified-flow straightening to reduce trajectory curvature. Terminal Velocity Matching~\cite{zhou2026terminalvelocitymatching} sidesteps the spatial Jacobian by differentiating with respect to the terminal time. Functional Mean Flows~\cite{li2025functionalmeanflowhilbert} extend MeanFlow to Hilbert spaces and study marginal-conditional consistency for functional data. Each fix is empirically effective in isolation, but no theory explains \emph{why} the original objective is unstable or \emph{what} these fixes have in common.

%%%%%%%%%%%%%%%%%%%%%%%%%%%%%%%%%%%%%%%%%
% paragraph 3: the research question + the framing answer
In this paper, we establish our theory around a single research question:
\begin{mdframed}[%
    leftline=true, linecolor=BrickRed, linewidth=2pt, topline=false,
    rightline=false, bottomline=false, backgroundcolor=dust!15
]
\noindent\textbf{Why do stochastic tangents destabilize learning Mean Flows?}
\end{mdframed}
\noindent We answer this question through the lens of \emph{variance reduction}. Specifically, we identify two statistically distinct roles for the conditional velocity in the MeanFlow loss. As the regression target, it is an \textit{unbiased} estimator of the marginal velocity. As the tangent inside the Jacobian-vector product (JVP) of the total-derivative identity, however, it acts as a Monte Carlo \textit{control variate} for the inaccessible marginal velocity, with a statistically suboptimal coefficient. The bias-variance trade-off induced by this substitution governs the variance of the gradient through a quadratic term. A Jacobi factor in JVP amplifies sample-based conditional fluctuations at each gradient step, and stop-gradient further hides this amplification from the optimizer, leading to the empirical pathology.

%%%%%%%%%%%%%%%%%%%%%%%%%%%%%%%%%%%%%%%%%
% paragraph 4: contributions
Following the theory, we derive the optimal tangent control-variate coefficient in closed form and show that it converges to a deterministic-tangent estimator in the practical regime where a marginal-velocity estimator is available. This result \textit{explains} why deterministic tangents (\emph{e.g., learned velocity heads}) help stabilize training. They are essentially different practical instantiations of the same statistical optimum. To empirically validate the framework, we perform a controlled sweep of the coefficient across two-dimensional datasets and the ImageNet dataset using latent diffusion Transformers (DiTs). Direct measurement of per-step gradient variance recovers the predicted non-decreasing loss component. Meanwhile, we observe a $1.2\!\times\!\!\sim\!\!4.3\!\times$ reduction with deterministic tangents on the two-dimensional datasets. The same sweep on ImageNet also reveals a quantitative \emph{FID-MSE landscape mismatch}: the FID ordering aligns with the bias-variance prediction across all four coefficient values. Nonetheless, the FID-axis offset ratio across large coefficients is super-linear in the MSE-axis, hence the FID-optimal coefficient shifts past the gradient-MSE interior minimizer to the \emph{unbiased} corner. Our contributions in this paper are as follows:
\begin{description}
    \item[\namedlabel{cb: thm}{C1}] We establish a theory to prove that per-step gradient variance in the original MeanFlow scales unboundedly with the spatial-Jacobi factor in JVP and the stop-gradient operator induces a \emph{semi-gradient gap} that hides this term from the optimizer.
    \item[\namedlabel{cb: cv}{C2}] We frame the JVP tangent as a \emph{control variate} for the marginal velocity, derive the optimal coefficient in closed form, and unify concurrent works by showing that they each correspond to a different practical realization of the optimal coefficient.
    \item[\namedlabel{cb: validate}{C3}] We empirically validate the framework through a controlled sweep of the coefficient on two-dimensional datasets, dense Gaussian mixtures, and ImageNet with latent DiTs. We observe $1.2\!\times\!\!\sim\!\!4.3\!\times$ direct gradient-variance reduction, sample-quality gains on low-dimensional benchmarks where the deterministic-tangent proxy is accurate (up to $54\%$ SW$_{1}$ on \texttt{swiss\_roll}), and a converged FID ordering across the four coefficient values consistent with the bias-variance prediction. The same DiT measurement also reveals a quantitative FID-MSE landscape mismatch: the coefficient that minimizes gradient-MSE is near $0.94$, whereas the one that minimizes FID leans toward the unbiased corner $0$.
\end{description}

%% file: contents/method.tex
%%%%%%%%%%%%%%%%%%%%%%%%%%%%%%%%%%%%%%%%%
% Preliminary
%
% In this section, I will briefly discuss conditional flow-matching and Mean Flows for one-step generation
%
\section{Preliminaries}
\label{sec: preliminary}

\noindent\textbf{Flow-Matching.} Let $\vx$ denote a data sample in $\gX\subset\mathbb{R}^{d}$ . Flow-matching generative models~\cite{lipman2023flowmatchinggenerativemodeling,tong2024improvinggeneralizingflowbasedgenerative} learn a time-dependent diffeomorphism $\psi(\vx,t):\mathbb{R}^{d}\!\times\![0,1]\to\mathbb{R}^{d}$ from a simple distribution $\rvx_{1}\!\sim\!{p}(\rvx,1)$ to the target $\rvx_{0}\!\sim\!{p}(\rvx,0)\!=\!\pdata$ by parameterizing a velocity field $\vv(\vx,t)\!\triangleq\!\frac{\mathrm{d}}{\mathrm{d}t}\psi(\vx,t)$. Since the exact \textit{marginal velocity field} $\vv(\vx,t)$ is generally inaccessible, conditional flow-matching~\cite{lipman2023flowmatchinggenerativemodeling} instead learns a conditional field $\vv_{\text{cond}}\!=\!\vx_{1}-\vx_{0}$ with $\vx_{t}\!=\!(1-t)\vx_{0}+t\vx_{1}$. The following lemma justifies the substitution. See~\Cref{appx: first} for its proof.
\begin{lemma}
    Given vector fields $\vv_{\text{cond}}$ generating conditional probability paths $p(\vx,t\mid\vx_{0})$, for any $\rvx_{0}\!\sim\!p(\rvx_{0})$, the expected conditional velocity field is equal to the marginal velocity field
    \begin{equation}
        \vv(\vx,t)=\E_{\vx_{0}\sim{p(\vx_{0}\mid\vx_{t}=\vx)}}\!\left[\vv(\vx,t\mid\vx_{0})\right].
        \label{eq: margv-condv-relation}
    \end{equation}
\end{lemma}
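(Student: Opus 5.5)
The plan is to characterize the marginal velocity field through the continuity equation. We then show that the posterior-averaged conditional field satisfies the same equation with the same density. Concretely, we define the marginal path $p(\vx,t)=\int p(\vx,t\mid\vx_{0})\,p(\vx_{0})\,\mathrm{d}\vx_{0}$. We take $\vv(\vx,t)$ to be the velocity field generating this path, so that $\partial_{t}p+\nabla\cdot(p\,\vv)=0$. By hypothesis, each conditional field satisfies $\partial_{t}p(\vx,t\mid\vx_{0})+\nabla\cdot\big(p(\vx,t\mid\vx_{0})\,\vv(\vx,t\mid\vx_{0})\big)=0$.

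\textbf{Key steps.} First, we differentiate the marginal in time and exchange $\partial_{t}$ with the integral over $\vx_{0}$. Substituting the conditional continuity equation then gives
\begin{equation*}
\partial_{t}p(\vx,t)=-\nabla\cdot\int \vv(\vx,t\mid\vx_{0})\,p(\vx,t\mid\vx_{0})\,p(\vx_{0})\,\mathrm{d}\vx_{0}.
\end{equation*}
Second, by Bayes' rule, $p(\vx,t\mid\vx_{0})p(\vx_{0})=p(\vx_{0}\mid\vx_{t}=\vx)\,p(\vx,t)$. The integral is therefore $p(\vx,t)\,\E_{\vx_{0}\sim p(\vx_{0}\mid\vx_{t}=\vx)}[\vv(\vx,t\mid\vx_{0})]$. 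It follows that the posterior-averaged field $\tilde{\vv}$ satisfies the continuity equation for the marginal path $p(\vx,t)$.

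Third, we identify $\tilde{\vv}$ with $\vv$. This is cleanest if we read the statement the way conditional flow matching does: the marginal velocity is \emph{defined} as the field generating $p(\vx,t)$, and the argument above constructs it. If instead $\vv$ is given independently as $\frac{\mathrm{d}}{\mathrm{d}t}\psi$, we would argue directly for the linear interpolant. In that case, $\vv(\vx,t)=\E[\dot{\rvx}_{t}\mid\rvx_{t}=\vx]=\E[\rvx_{1}-\rvx_{0}\mid\rvx_{t}=\vx]$. This holds because the marginal velocity of a stochastic interpolant is the conditional expectation of its time derivative. We can verify this via the weak form: for test functions $\phi$, $\frac{\mathrm{d}}{\mathrm{d}t}\E[\phi(\rvx_{t})]=\E[\nabla\phi(\rvx_{t})\cdot\dot{\rvx}_{t}]=\E[\nabla\phi(\rvx_{t})\cdot\E[\dot{\rvx}_{t}\mid\rvx_{t}]]$, by the tower property.

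\textbf{Main obstacle.} The hard step is uniqueness and regularity. The continuity equation determines a velocity field only up to divergence-free (with respect to $p$) perturbations. So the equality must either be read as a definition of the marginal field generating $p_t$, or be justified by the weak-form/tower-property argument above, which picks out the conditional expectation canonically. The interchange of $\partial_{t}$, $\nabla\cdot$ and $\int$ also needs integrability assumptions: bounded conditional fields, and $p(\vx,t)>0$ so that Bayes' division is legal. I would state these as standing mild regularity assumptions rather than prove them.
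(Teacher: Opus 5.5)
Your proposal is correct and its main line is the paper's proof: differentiate $p(\vx,t)=\int p(\vx,t\mid\vx_{0})\,p(\vx_{0})\,\mathrm{d}\vx_{0}$ in $t$, substitute the conditional continuity equation, use Bayes' rule to factor out $p(\vx,t)\,\E_{\vx_{0}\sim p(\vx_{0}\mid\vx_{t}=\vx)}[\vv(\vx,t\mid\vx_{0})]$, and then identify this field with $\vv$ through the continuity equation. One caution: your weak-form/tower-property branch, like the main argument, only shows that the conditional mean is \emph{a} field generating $p(\vx,t)$ and does not remove the divergence-free non-uniqueness you point out, so the identification step really rests on the definitional reading, which the paper also uses implicitly.
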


%%%%%%%%%%%%%%%%%%%%%%%%%%%%%%%%%%%%%%%%%
% MeanFlow
%%%%%%%%%%%%%%%%%%%%%%%%%%%%%%%%%%%%%%%%%
\noindent\textbf{One-step Mean Flows.} Iterative integration of $\vv(\vx,t)$ at inference is expensive and can be numerically unstable. To bypass it, Mean Flows~\cite{geng2025meanflowsonestepgenerative} learn a two-parameter average velocity field $\vu(\rvx,r,t)$ through regressing an identity along the flow trajectory $\vx_{\tau}\!=\!\psi(\vx_{r},\tau)$ with terminal at $\vx_{t}\!=\!\vx$:
\begin{equation}
    (t-r)\,\vu(\vx,r,t)=\int_{r}^{t}\vv(\vx_{\tau},\tau)\,\mathrm{d}\tau.
    \label{eq: meanflow-identity}
\end{equation}
Differentiating with respect to the terminal time $t$ along the same trajectory yields
\begin{equation}
    \vu(\vx,r,t)+(t-r)\tfrac{\mathrm{d}}{\mathrm{d}t}\vu(\vx,r,t)=\vv(\vx,t),\qquad
    \tfrac{\mathrm{d}}{\mathrm{d}t}\vu=\partial_{\vx}\vu\!\cdot\!\vv(\vx,t)+\partial_{t}\vu,
    \label{eq: td-meanflow-identity}
\end{equation}
where the total derivative $\frac{\mathrm{d}}{\mathrm{d}t}\vu$ is essentially a JVP, denoted by $\texttt{JVP}(\vu,(\vx,r,t),(\vv(\vx,t),0,1))$. The original MeanFlow paper~\cite{geng2025meanflowsonestepgenerative} replaces \emph{both} occurrences of the marginal field with the conditional field $\vv_{\text{cond}}\!=\!\vx_{1}\!-\!\vx_{0}$, giving the MeanFlow loss
\begin{equation}
    \mathcal{L}_{\text{MF}}(\vtheta)=
    \E_{r,t,\vx_{0},\vx_{1}}\!\left[\left\|\vu_{\vtheta}+(t-r)\,\texttt{sg}\!\left[\texttt{JVP}(\vu_{\vtheta},(\vx_{t},r,t),(\vv_{\text{cond}},0,1))\right]-\vv_{\text{cond}}\right\|_{2}^{2}\right],
    \label{eq: mf-loss}
\end{equation}
with $r,t\!\sim\!\gU(0,1)$, $r\!\leq\!t$, $\vx_{0}\!\sim\!\pdata$, $\vx_{1}\!\sim\!\gN(0,\mI_{d})$, $\vx_{t}\!=\!(1-t)\vx_{0}+t\vx_{1}$, and the stop-gradient $\texttt{sg}[\cdot]$ avoiding double backpropagation through the JVP.

%%%%%%%%%%%%%%%%%%%%%%%%%%%%%%%%%%%%%%%%%%%%%%%%%%%%%%%%%%%%%%%%%%%%%%%%%%%%%%%%
% Variance Reduction in Learning Mean Flows
%%%%%%%%%%%%%%%%%%%%%%%%%%%%%%%%%%%%%%%%%%%%%%%%%%%%%%%%%%%%%%%%%%%%%%%%%%%%%%%%
\section{Variance Reduction in Learning Mean Flows}
\label{sec: method}

The MeanFlow loss in~\eqref{eq: mf-loss} is empirically known to be \textit{non-decreasing} and to suffer from \textit{high-variance gradients}~\cite{zhang2025alphaflowunderstandingimprovingmeanflow,geng2025improvedmeanflowschallenges}. We investigate the statistical mechanism behind this pathology. Importantly, our analysis isolates the role of the conditional velocity field $\vv_{\text{cond}}$ when it appears as the JVP tangent, and identifies that it acts as a Monte Carlo control variate for the inaccessible marginal velocity, with a coefficient that the original MeanFlow fixes to a suboptimal value. We derive the optimal coefficient and establish a connection to practical fixes in concurrent works.

%%%%%%%%%%%%%%%%%%%%%%%%%%%%%%%%%%%%%%%%%
\subsection{Two roles of the conditional velocity field}
\label{subsec: two-roles}

With Reynolds decomposition, we express the conditional velocity by $\vv_{\text{cond}}\!=\!\vv(\vx,t)+\vv^{\prime}$ with $\E_{\vx_{0}\mid\vx_{t}}[\vv^{\prime}]\!=\!\bm{0}$ according to~\eqref{eq: margv-condv-relation} and $\Sigma_{\vv^{\prime}}\!\triangleq\!\Cov_{\vx_{0}\mid\vx_{t}}[\vv^{\prime}]$. The MeanFlow loss in~\eqref{eq: mf-loss} uses $\vv_{\text{cond}}$ both as the \textit{regression target} and as the \textit{tangent} inside the total derivative. Carrying the decomposition through per-sample loss $\ell_{\text{MF}}(\vtheta)$ yields:
\begin{equation}
    \E_{\vx_{0}\mid\vx_{t}}\!\bigl[\ell_{\text{MF}}(\vtheta)\bigr]
    =\bigl\|\vr_{\vtheta}^{\texttt{sg}}\bigr\|^{2}_{2}
    +\texttt{sg}\!\bigl[\Tr(\mJ\Sigma_{\vv^{\prime}}\mJ^{\!\top})\bigr],
    \label{eq: per-sample-expand}
\end{equation}
where $\mJ\!\triangleq\!(t{-}r)\partial_{\vx_{t}}\vu_{\vtheta}\!-\!\mI_{d}\!\in\!\mathbb{R}^{d\times d}$ is a \textit{Jacobi factor} determined by the Jacobian matrix $\partial_{\vx_{t}}\vu_{\vtheta}$ and $\vr_{\vtheta}^{\texttt{sg}}\!\triangleq\!\vu_{\vtheta}\!+\!(t{-}r)\,\texttt{sg}[\partial_{\vx_{t}}\vu_{\vtheta}\!\cdot\!\vv\!+\!\partial_{t}\vu_{\vtheta}]\!-\!\vv$ is the deterministic mean-field residual. Since the residual $\vr_{\vtheta}^{\texttt{sg}}$ is the exact loss we aim to minimize given by~\eqref{eq: td-meanflow-identity}, the trace term leads to the non-decreasing loss and eliminates $\textit{iff}$ the Jacobi satisfies $\partial_{\vx_{t}}\vu_{\vtheta}=\frac{1}{t-r}\mI_{d}$ (equivalently, $\mJ\!=\!\bm{0}$). We identify two roles of the conditional velocity field in this decomposition:

\noindent\textbf{Unbiased Target.} As the regression target, $\vv_{\text{cond}}$ injects $\vv^{\prime}$ \textit{linearly} into the residual, contributing the \emph{irreducible} per-sample noise floor $\Tr(\Sigma_{\vv^{\prime}})$. However, since this noise is independent of $\vtheta$ and $\vv^{\prime}$ is zero-mean, $\vv_{\text{cond}}$ can serve as an unbiased target for learning Mean Flows.

\noindent\textbf{Variance Amplifier.} As the JVP tangent, the same $\vv_{\text{cond}}$ injects $\vv^{\prime}$ \textit{multiplicatively} through the Jacobi factor $\mJ$, contributing the \emph{amplified} term $\Tr(\mJ\Sigma_{\vv^{\prime}}\mJ^{\!\top})$. It scales unboundedly with the spectral magnitude of $\mJ$. This amplification is the dominant driver of variance at the gradient level:

\begin{theorem}[Jacobian Variance Amplification]
    Let $\vg\!\triangleq\!\nabla_{\vtheta}\vu_{\vtheta}(\vx_{t},r,t)\!\in\!\mathbb{R}^{d\times p}$ be the parameter Jacobian of the average velocity. The trace of the conditional gradient covariance (\textit{i.e., the total variance}) of $\ell_{\text{MF}}(\vtheta)$ in~\eqref{eq: mf-loss} satisfies
    \begin{equation}
        \boxed{
            \Tr\!\left(\Cov\!\left[\nabla_{\vtheta}\ell_{\text{MF}}\mid\vx_{t}\right]\right)
            \;\propto\;
            \Tr\!\left(\vg^{\!\top}\,\mJ\,\Sigma_{\vv^{\prime}}\,\mJ^{\!\top}\,\vg\right).
        }
    \end{equation}
    \label[theorem]{thm: jacobi-variance}
\end{theorem}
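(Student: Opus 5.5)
We will compute the per-sample gradient explicitly and isolate the part that depends on the conditional fluctuation $\vv^{\prime}$. Condition on $(\vx_{t},r,t)$ so that $\vu_{\vtheta}$, its Jacobians, $\vg$ and $\mJ$ are fixed, and the only randomness is $\vx_{0}\mid\vx_{t}$, which enters through $\vv_{\text{cond}}=\vv+\vv^{\prime}$. Because the JVP is linear in its tangent, we have
\[
\texttt{JVP}(\vu_{\vtheta},(\vx_{t},r,t),(\vv_{\text{cond}},0,1))=\partial_{\vx_{t}}\vu_{\vtheta}\,\vv+\partial_{t}\vu_{\vtheta}+\partial_{\vx_{t}}\vu_{\vtheta}\,\vv^{\prime}.
\]
The per-sample residual inside the norm in~\eqref{eq: mf-loss} therefore splits as $\vr_{\vtheta}^{\texttt{sg}}+\mJ\vv^{\prime}$, where $\mJ=(t-r)\partial_{\vx_{t}}\vu_{\vtheta}-\mI_{d}$ collects both appearances of $\vv^{\prime}$ (the JVP tangent and the regression target). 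This is the same algebra that yields~\eqref{eq: per-sample-expand}; we reuse it at the level of the residual vector rather than its squared norm.

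Next we differentiate. Since the JVP sits inside $\texttt{sg}[\cdot]$, the only $\vtheta$-dependence that survives is through the leading $\vu_{\vtheta}$, so
\[
\nabla_{\vtheta}\ell_{\text{MF}}=2\,\vg^{\!\top}\bigl(\vr_{\vtheta}^{\texttt{sg}}+\mJ\vv^{\prime}\bigr).
\]
This is the semi-gradient: $\mJ$ is treated as a constant, which is exactly what makes the gradient affine in $\vv^{\prime}$. The first term $2\vg^{\!\top}\vr_{\vtheta}^{\texttt{sg}}$ is deterministic given $\vx_{t}$, since $\vr_{\vtheta}^{\texttt{sg}}$ involves only the marginal $\vv$. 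Hence
\[
\Cov[\nabla_{\vtheta}\ell_{\text{MF}}\mid\vx_{t}]=4\,\vg^{\!\top}\mJ\,\Cov[\vv^{\prime}\mid\vx_{t}]\,\mJ^{\!\top}\vg=4\,\vg^{\!\top}\mJ\Sigma_{\vv^{\prime}}\mJ^{\!\top}\vg.
\]
Taking the trace gives the boxed relation with proportionality constant $4$ (or $1$ under a $\tfrac12$-scaled loss convention).

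The main obstacle is justifying that $\vg$, $\mJ$ and $\vr_{\vtheta}^{\texttt{sg}}$ are indeed constant under $\vx_{0}\mid\vx_{t}$. This holds only after conditioning on the full network input $(\vx_{t},r,t)$, not on $\vx_{t}$ alone. So we will state that $r,t$ are conditioned on as well, or read ``$\mid\vx_{t}$'' as shorthand for this.

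A second issue is whether the $\vv^{\prime}$-dependence is really affine. If stop-gradient were absent, the backward pass through the JVP would add a term $(t-r)\nabla_{\vtheta}(\partial_{\vx_{t}}\vu_{\vtheta}\vv^{\prime})^{\!\top}(\cdot)$ that is quadratic in $\vv^{\prime}$, and the clean identity would break. We will remark that this is where the stop-gradient enters, which connects to the semi-gradient gap in~\ref{cb: thm}.

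Finally, we will note that $\Tr(\vg^{\!\top}\mJ\Sigma_{\vv^{\prime}}\mJ^{\!\top}\vg)\le\|\vg\|_{2}^{2}\,\Tr(\mJ\Sigma_{\vv^{\prime}}\mJ^{\!\top})$. This links the result to the trace term in~\eqref{eq: per-sample-expand} and exhibits the unbounded scaling with the spectrum of $\mJ$.
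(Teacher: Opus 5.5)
Your proposal is correct and follows the paper's proof essentially step for step. The residual splits as $\vr_{\vtheta}^{\texttt{sg}}+\mJ\vv^{\prime}$, the stop-gradient leaves the affine semi-gradient $2\vg^{\!\top}(\vr_{\vtheta}^{\texttt{sg}}+\mJ\vv^{\prime})$, and the zero-mean term gives covariance $4\,\vg^{\!\top}\mJ\Sigma_{\vv^{\prime}}\mJ^{\!\top}\vg$; you keep the full matrix where the paper goes straight to the trace.

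Your side remarks are sound and slightly sharper than the paper. The paper leaves the conditioning on $(r,t)$ alongside $\vx_{t}$ implicit. Your bound $\Tr(\vg^{\!\top}\mJ\Sigma_{\vv^{\prime}}\mJ^{\!\top}\vg)\le\|\vg\|_{2}^{2}\,\Tr(\mJ\Sigma_{\vv^{\prime}}\mJ^{\!\top})$ is the accurate form of the paper's final step, which claims proportionality to $\Tr(\mJ\Sigma_{\vv^{\prime}}\mJ^{\!\top})$ from $\mG_{\vtheta}\succeq\bm{0}$ alone.
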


See the proof in~\Cref{appx: theorem-proof}. The theorem identifies that the total gradient variance grows with $\|\mJ\|^{2}$ for fixed $\vg$ and $\Sigma_{\vv^{\prime}}$, saturates only at the irreducible noise floor when $\mJ\!\to\!\bm{0}$, and is \emph{uncontrolled} by any term in the loss due to the use of the stop-gradient in the original MeanFlow. Meanwhile, without the stop-gradient operator, the optimizer could in principle reduce variance by driving $\mJ\!\to\!\bm{0}$. This establishes a \textit{semi-gradient gap}, formally,

\begin{theorem}[Semi-Gradient Gap]
    The gradient of the MeanFlow loss $\gL_{\text{MF}}$ with and without the stop-gradient operator differ by
    \begin{equation}
        \boxed{
            \underbrace{2(t{-}r)\,\E\!\left[\bigl(\nabla_{\vtheta}\!\left(\partial_{\vx_{t}}\vu_{\vtheta}\!\cdot\!\vv+\partial_{t}\vu_{\vtheta}\right)\bigr)^{\!\top}\vr_{\vtheta}\right]}_{\text{mean-field gradient difference}}
            +\underbrace{\E\!\left[\nabla_{\vtheta}\Tr\!\left(\mJ\Sigma_{\vv^{\prime}}\mJ^{\!\top}\right)\right]}_{\text{variance-driven gradient difference}}.
        }
    \end{equation}
    \label[theorem]{thm: semi-gradient-gap}
\end{theorem}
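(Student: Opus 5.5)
The plan is to compute both gradients explicitly from the per-sample Reynolds decomposition and subtract them. Write $\vv_{\text{cond}}=\vv+\vv^{\prime}$ and use linearity of the JVP in its tangent: $\texttt{JVP}(\vu_{\vtheta},(\vx_{t},r,t),(\vv_{\text{cond}},0,1))=\partial_{\vx_{t}}\vu_{\vtheta}\cdot\vv+\partial_{t}\vu_{\vtheta}+\partial_{\vx_{t}}\vu_{\vtheta}\cdot\vv^{\prime}$. The per-sample residual inside the norm of~\eqref{eq: mf-loss} without stop-gradient is then
\[
\vr_{\vtheta}+\mJ\vv^{\prime},\qquad \vr_{\vtheta}\triangleq\vu_{\vtheta}+(t-r)\bigl(\partial_{\vx_{t}}\vu_{\vtheta}\cdot\vv+\partial_{t}\vu_{\vtheta}\bigr)-\vv .
\]
With stop-gradient, the numerical value of the residual is the same, $\vr^{\texttt{sg}}_{\vtheta}+\mJ\vv^{\prime}$. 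However, only the $\vu_{\vtheta}$ term (and nothing from the JVP, including the $(t-r)\partial_{\vx_{t}}\vu_{\vtheta}\vv^{\prime}$ part of $\mJ\vv^{\prime}$) carries $\vtheta$-dependence.

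The first step is the gradient without stop-gradient. Here I will take the conditional expectation over $\vx_{0}\mid\vx_{t}$ first, exactly as in~\eqref{eq: per-sample-expand}, but with no \texttt{sg}. The cross term $2\vr_{\vtheta}^{\top}\mJ\vv^{\prime}$ vanishes because $\E[\vv^{\prime}\mid\vx_{t}]=\bm{0}$ and $\vr_{\vtheta},\mJ$ depend only on $(\vx_{t},r,t)$. This gives $\E_{\vx_{0}\mid\vx_{t}}[\ell]=\|\vr_{\vtheta}\|^{2}+\Tr(\mJ\Sigma_{\vv^{\prime}}\mJ^{\top})$. Differentiating, under the usual regularity assumption that allows interchange of $\nabla_{\vtheta}$ and $\E$, yields
\[
2(\nabla_{\vtheta}\vu_{\vtheta})^{\top}\vr_{\vtheta}+2(t-r)\bigl(\nabla_{\vtheta}(\partial_{\vx_{t}}\vu_{\vtheta}\cdot\vv+\partial_{t}\vu_{\vtheta})\bigr)^{\top}\vr_{\vtheta}+\nabla_{\vtheta}\Tr(\mJ\Sigma_{\vv^{\prime}}\mJ^{\top}).
\]
Here $\Sigma_{\vv^{\prime}}$ does not depend on $\vtheta$.

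The second step is the gradient with stop-gradient. It is $2\g^{\top}(\vr^{\texttt{sg}}_{\vtheta}+\mJ\vv^{\prime})$ per sample, so conditionally on $\vx_{t}$ its expectation is $2\g^{\top}\vr_{\vtheta}$ again by $\E[\vv^{\prime}\mid\vx_{t}]=\bm{0}$. Here I use that $\vr^{\texttt{sg}}_{\vtheta}$ and $\vr_{\vtheta}$ agree in value. This is consistent with~\eqref{eq: per-sample-expand}, where the trace term sits inside \texttt{sg} and so contributes zero gradient.

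Subtracting and taking the outer expectation over $(r,t,\vx_{t})$ gives exactly the two boxed terms. The $2\g^{\top}\vr_{\vtheta}$ parts cancel, leaving the mean-field difference from the $\vtheta$-dependence of the JVP evaluated at the marginal tangent, plus the variance-driven term $\E[\nabla_{\vtheta}\Tr(\mJ\Sigma_{\vv^{\prime}}\mJ^{\top})]$.

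The main obstacle is bookkeeping rather than analysis. I must be careful that the \texttt{sg} gradient really loses the $\vv^{\prime}$-dependent part of the JVP. This requires that the conditional-expectation step in the non-sg case is performed \emph{before} differentiation, which is legitimate since the $\vx_{0}\mid\vx_{t}$ law is $\vtheta$-free. I also need $\mJ$ to be independent of $\vx_{0}$ given $\vx_{t}$. I will state dominated-convergence and differentiability assumptions on $\vu_{\vtheta}$ explicitly to justify swapping $\nabla_{\vtheta}$ with the expectation.
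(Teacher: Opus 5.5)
Your proposal is correct and follows essentially the same route as the paper. Both condition on $\vx_{0}\mid\vx_{t}$ to get $\|\vr_{\vtheta}\|^{2}+\Tr(\mJ\Sigma_{\vv^{\prime}}\mJ^{\top})$, differentiate without stop-gradient, subtract the stop-gradient gradient $2\vg^{\top}\vr_{\vtheta}$, and expand $\nabla_{\vtheta}\vr_{\vtheta}$ to isolate the JVP term $2(t-r)\bigl(\nabla_{\vtheta}(\partial_{\vx_{t}}\vu_{\vtheta}\cdot\vv+\partial_{t}\vu_{\vtheta})\bigr)^{\top}\vr_{\vtheta}$; you are somewhat more careful than the paper, which reads the stop-gradient gradient off the decomposition with the trace wrapped in $\texttt{sg}$, because you average $2\vg^{\top}(\vr^{\texttt{sg}}_{\vtheta}+\mJ\vv^{\prime})$ per sample and justify both the vanishing cross term and the gradient--expectation interchange.
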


In the original MeanFlow, the stop-gradient operator prevents $\mJ$ from being passed to the optimizer, resulting in an empirically non-decreasing loss. Meanwhile, the mean-field difference vanishes at convergence ($\vr_{\vtheta}\!\to\!\bm{0}$) and the variance-driven term $\Tr(\mJ\Sigma_{\vv^{\prime}}\mJ^{\!\top})$ dominates. To better illustrate the idea, \Cref{fig: cm-gap} visualizes the spatial distribution of $\sqrt{\E_{\vx_{0}\mid{\vx_{t}}}\left\|\vv^{\prime}\right\|^{2}}$ on a three-mode, two-dimensional Gaussian mixture. It shows that the total variance magnitude is non-zero, concentrates in mode-mixing regions where conditional paths overlap, and grows toward the latent endpoint ($t\!\to\!1$).

\begin{figure}[t]
    \centering
    \includegraphics[width=\textwidth]{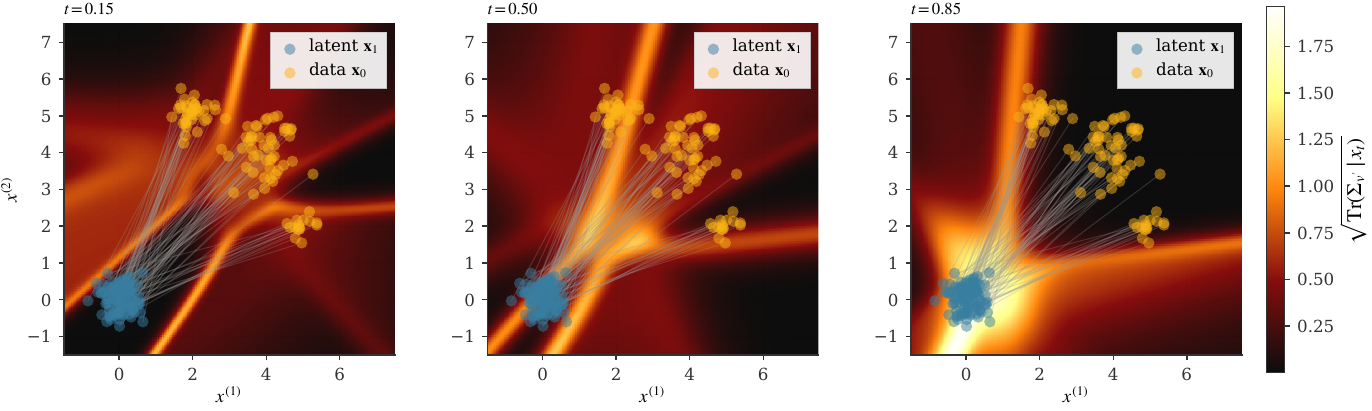}
    \caption{Spatial Distribution of $\sqrt{\Tr(\Sigma_{\vv^{\prime}}\!\mid\!\vx_{t})}\!=\!\sqrt{\E_{\vx_{0}\mid\vx_{t}}\|\vv^{\prime}\|^{2}}$ at Three Timesteps on a 2-D Gaussian Mixture. Conditional variances concentrate in mode-mixing regions.}
    \label{fig: cm-gap}
    \vspace{-1em}
\end{figure}

%%%%%%%%%%%%%%%%%%%%%%%%%%%%%%%%%%%%%%%%%
\subsection{Rethinking tangent as a control variate}
\label{subsec: control-variate}

The amplification in~\Cref{thm: jacobi-variance} arises due to using $\vv_{\text{cond}}\!=\!\vv\!+\!\vv^{\prime}$ as the JVP tangent injects the zero-mean fluctuation $\vv^{\prime}$. Hence, $\vv_{\text{cond}}$ is analogous to the canonical structure of a Monte Carlo \textit{control variate}~\cite{glasserman2003montecarlo}, where $\vv_{\text{cond}}$ is a stochastic estimator of the inaccessible $\vv$ with known mean, and the coefficient on the noise term $\vv^{\prime}$ determines the bias-variance trade-off. To make this explicit, we introduce a tangent-mixing coefficient $\beta\!\in\![0,1]$:
\begin{equation}
    \tilde{\vv}_{\text{tang}}(\beta)\;\triangleq\;(1-\beta)\,\vv_{\text{cond}} + \beta\,\hat{\vv}\;=\;\vv + (1-\beta)\,\vv^{\prime} + \beta\,\vb,
    \label{eq: tangent-mix}
\end{equation}
where $\hat{\vv}$ can be any deterministic proxy for the marginal velocity (\textit{e.g., a velocity network}) with bias $\vb\!\triangleq\!\hat{\vv}-\vv$ that is deterministic given $\vx_{t}$. Herein, $\beta\!=\!0$ recovers vanilla MeanFlow and $\beta\!=\!1$ replaces the tangent entirely with the deterministic proxy.

\noindent\textbf{Bias-variance trade-off.} Substituting $\tilde{\vv}_{\text{tang}}(\beta)$ for $\vv_{\text{cond}}$ in the JVP tangent and propagating through the loss, the per-sample gradient writes
\begin{equation}
    \vg^{(\beta)} \;=\; 2\,\vg^{\!\top}\!\Bigl[\vr_{\vtheta} \;+\; \bigl((1{-}\beta)\,\mJ - \beta\,\mI\bigr)\,\vv^{\prime} \;+\; \beta\,(\mJ{+}\mI)\,\vb\Bigr],
    \label{eq: g-beta}
\end{equation}
where the noise term scales the conditional fluctuation $\vv^{\prime}$ by the matrix $((1{-}\beta)\,\mJ - \beta\,\mI)$, and the bias term scales the proxy error $\vb$ by $\beta\,(\mJ{+}\mI)$. In this case, we can balance the bias-variance trade-off by solving for the optimal coefficient $\beta^{\ast}$ that minimizes the least-squares between the expected per-sample gradient in~\eqref{eq: g-beta} and the target gradients, given by
\begin{equation}
    M(\beta)\;\triangleq\;\E_{\vv^{\prime}\mid{\vx_{t}}}\left[\left\|\vg^{(\beta)}-\nabla_{\vtheta}\|\vr_{\vtheta}\|_{2}^{2}\right\|_{2}^{2}\right]\;=\;\E_{\vv^{\prime}\mid{\vx_{t}}}\left[\left\|\vg^{(\beta)}-2\vg^{\top}\vr_{\vtheta}\right\|_{2}^{2}\right].
    \label{eq: beta-target}
\end{equation}
We derive the following theorem.

\begin{theorem}[Optimal control-variate coefficient]
    Under the scalar-isotropic approximation $\Sigma_{\vv^{\prime}}\!\approx\!\sigma^{2}\mI_{d}$, $\mJ\!\approx\!\kappa\mI_{d}$, and the parameter-isotropy approximation $\vg\vg^{\!\top}\!\propto\!\mI_{d}$ (which reduces $M$ to a per-component MSE; cf.~\Cref{appx: theorem-proof}),
    \begin{equation}
        M(\beta) \;\propto\; \beta^{2}(\kappa{+}1)^{2}\|\vb\|^{2} + \sigma^{2}d\bigl((1{-}\beta)\kappa - \beta\bigr)^{2}.
    \end{equation}
    For $\kappa\!>\!0$, $M(\beta)$ admits a unique minimizer
    \begin{equation}
        \boxed{
            \beta^{\ast} \;=\; \underbrace{\frac{\kappa}{\kappa+1}}_{\text{noise-cancellation}}\;\cdot\;\underbrace{\frac{\sigma^{2}d}{\sigma^{2}d + \|\vb\|^{2}}}_{\text{shrinkage}},
        }
        \label{eq: beta-minimizer}
    \end{equation}
    with optimum value $M(\beta^{\ast})\!\propto\!\sigma^{2}d\,\kappa^{2}\,\|\vb\|^{2}/(\sigma^{2}d+\|\vb\|^{2})$.
    % For all $\kappa,\|\vb\|^{2}\!>\!0$, $M(\beta^{\ast})$ strictly dominates both corner MSEs $M(0)\!\propto\!\sigma^{2}\kappa^{2}d$ (vanilla MeanFlow) and $M(1)\!\propto\!(\kappa{+}1)^{2}\|\vb\|^{2}+\sigma^{2}d$ (deterministic-tangent estimator with bias). The boundary case $\kappa\!=\!-1$ (which corresponds to $\partial_{\vx_{t}}\vu_{\vtheta}\!=\!\bm{0}$ at initialization) leaves $M(\beta)$ constant and is excluded.
    \label[theorem]{thm: control-variate-optimum}
\end{theorem}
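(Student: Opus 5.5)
Starting from the per-sample gradient in~\eqref{eq: g-beta}, subtract the target gradient $2\vg^{\top}\vr_{\vtheta}$ so that the error is
\[
\vg^{(\beta)}-2\vg^{\top}\vr_{\vtheta}=2\vg^{\top}\vec{e}_{\beta},\qquad \vec{e}_{\beta}\triangleq\bigl((1-\beta)\mJ-\beta\mI\bigr)\vv^{\prime}+\beta(\mJ+\mI)\vb .
\]
Taking $\E_{\vv^{\prime}\mid\vx_{t}}$ of $\|2\vg^{\top}\vec{e}_{\beta}\|^{2}=4\,\vec{e}_{\beta}^{\top}\vg\vg^{\top}\vec{e}_{\beta}$ and using the parameter-isotropy approximation $\vg\vg^{\top}\propto\mI_{d}$ reduces $M(\beta)$, up to a positive constant, to $\E\|\vec{e}_{\beta}\|^{2}$. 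Since $\vb$ is deterministic given $\vx_{t}$ and $\E[\vv^{\prime}\mid\vx_{t}]=\bm{0}$ (by~\eqref{eq: margv-condv-relation}), the cross term vanishes. This is a bias-variance split
\[
\E\|\vec{e}_{\beta}\|^{2}=\beta^{2}\|(\mJ+\mI)\vb\|^{2}+\Tr\!\bigl(\mA_{\beta}\Sigma_{\vv^{\prime}}\mA_{\beta}^{\top}\bigr),\qquad \mA_{\beta}=(1-\beta)\mJ-\beta\mI .
\]
Inserting $\mJ\approx\kappa\mI_{d}$ and $\Sigma_{\vv^{\prime}}\approx\sigma^{2}\mI_{d}$ gives $\mA_{\beta}=((1-\beta)\kappa-\beta)\mI_{d}$, and the trace becomes $\sigma^{2}d((1-\beta)\kappa-\beta)^{2}$. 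This yields the stated form of $M(\beta)$.

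For the minimizer, I would write $(1-\beta)\kappa-\beta=\kappa-\beta(\kappa+1)$, so that with $a\triangleq(\kappa+1)^{2}$ and $B\triangleq\|\vb\|^{2}$,
\[
M(\beta)\propto aB\beta^{2}+\sigma^{2}d\,(\kappa-(\kappa+1)\beta)^{2}.
\]
This is a quadratic in $\beta$ with leading coefficient $(\kappa+1)^{2}(B+\sigma^{2}d)$. The coefficient is strictly positive when $\kappa>0$, since $\kappa+1\neq0$ and we assume $\sigma^{2}d+B>0$, so $M$ is strictly convex and has a unique stationary point. Setting the derivative to zero gives
\[
(\kappa+1)^{2}B\beta=\sigma^{2}d(\kappa+1)\bigl(\kappa-(\kappa+1)\beta\bigr),
\]
hence
\[
\beta^{\ast}=\frac{\kappa}{\kappa+1}\cdot\frac{\sigma^{2}d}{\sigma^{2}d+B}.
\]
For $\kappa>0$ both factors lie in $[0,1)$, so $\beta^{\ast}\in[0,1)$ is feasible. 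The unconstrained minimizer is therefore also the minimizer over $[0,1]$.

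For the optimal value, substitute back. We have $(\kappa+1)\beta^{\ast}=\kappa s$ with $s=\sigma^{2}d/(\sigma^{2}d+B)$, and $\kappa-(\kappa+1)\beta^{\ast}=\kappa(1-s)=\kappa B/(\sigma^{2}d+B)$. Then
\[
M(\beta^{\ast})\propto\kappa^{2}\bigl(Bs^{2}+\sigma^{2}d(1-s)^{2}\bigr)=\kappa^{2}\,\frac{B\sigma^{2}d(\sigma^{2}d+B)}{(\sigma^{2}d+B)^{2}},
\]
which simplifies to the claimed $\sigma^{2}d\,\kappa^{2}B/(\sigma^{2}d+B)$.

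The main obstacle is not the algebra but justifying the reduction from $\|\vg^{\top}\vec{e}_{\beta}\|^{2}$ to $\|\vec{e}_{\beta}\|^{2}$. One must note that $\vg\vg^{\top}\propto\mI_{d}$ is precisely what makes this an isotropic weighting. Without it, a matrix-weighted version of the same computation goes through with $\vg\vg^{\top}$ inserted, and only the constants change under the scalar approximations. The cross term vanishes exactly because $\vb$ is $\vx_{t}$-measurable, and I would state that explicitly.
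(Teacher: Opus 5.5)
Your proof is correct and follows the paper's argument: the same bias--variance split of the gradient error, with the cross term removed by $\E[\vv^{\prime}\mid\vx_{t}]=\bm{0}$ and the $\vx_{t}$-measurability of $\vb$, and then minimization of the resulting quadratic in $\beta$. The paper differs only in writing the general matrix-weighted quadratic $\alpha_{0}-2\alpha_{1}\beta+\alpha_{2}\beta^{2}$ first (which gives $\beta^{\ast}_{\text{matrix}}=\alpha_{1}/\alpha_{2}$) before substituting, whereas you impose $\vg\vg^{\top}\propto\mI_{d}$ at the start. One small slip: the shrinkage factor lies in $[0,1]$, not $[0,1)$, since it equals $1$ when $\vb=\bm{0}$; your conclusion $\beta^{\ast}\in[0,1)$ still holds because $\kappa/(\kappa+1)<1$.
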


See the proof in~\Cref{appx: theorem-proof}. The two components of $\beta^{\ast}$ play distinct roles: $\kappa/(\kappa{+}1)$ is the coefficient that \emph{cancels the Jacobian-amplified noise} through destructive interference between the $\mJ\vv^{\prime}$ and $\mI\vv^{\prime}$ contributions in~\eqref{eq: g-beta}, while $\sigma^{2}d/(\sigma^{2}d{+}\|\vb\|^{2})$ is the standard James-Stein shrinkage~\cite{james1961estimation} that pulls $\beta^{\ast}$ toward the unbiased corner when the proxy bias $\|\vb\|$ is large. Therefore, as the proxy bias shrinks ($\|\vb\|^{2}\!\to\!0$), $\beta^{\ast}\!\to\!\kappa/(\kappa{+}1)$, and as the Jacobian magnitude grows ($\kappa\!\to\!\infty$), $\beta^{\ast}\!\to\!1$. Following the theory, we explain the high variance of gradients in the original MeanFlow and establish a connection to practical fixes in concurrent work.

\noindent\textbf{High-variance gradient with suboptimal coefficient.} The original MeanFlow uses $\beta\!=\!0$ which is optimal only when $\kappa\!=\!0$ (i.e., $\partial_{\vx_{t}}\vu_{\vtheta}\!=\!\frac{1}{t-r}\mI_{d}$) or when no deterministic proxy is available ($\|\vb\|\!=\!\infty$). Both conditions fail in practice: trained backbones produce $\partial_{\vx_{t}}\vu_{\vtheta}$ matrices that are spectrally far from $\frac{1}{t-r}\mI_{d}$, and there are multiple viable deterministic proxies for the marginal $\vv$.

\input{assets/tables/concurrent}

\noindent\textbf{Concurrent works as control variates.} Our theory unifies several remedies for MeanFlow's training pathology across concurrent works into a single one-parameter family. Each empirical fix corresponds to a different practical realization of the optimum $\beta^{\ast}$, distinguished by \emph{which proxy} they use for $\vv$ and \emph{which value of $\beta$} the construction implicitly selects. We summarize them in~\cref{tab: concurrent-positioning}.

%% file: assets/tables/concurrent.tex
\begin{table}[!t]
    \centering
    \caption{Concurrent MeanFlow remedies as special cases in our control-variate framework. Each method either drives $\beta\!\to\!1$ with a deterministic proxy, directly reduces the norm of amplification factor $\|\mJ\|^{2}$, or replaces the JVP construction altogether.}
    \label{tab: concurrent-positioning}
    \begin{tabular}{@{}lll@{}}
        \toprule
        \textbf{Method} & \textbf{Mechanism in our framework} & \textbf{Practical realization} \\
        \midrule
        AlphaFlow~\cite{zhang2025alphaflowunderstandingimprovingmeanflow} & avoid high-$\kappa$ regimes & $\alpha$-curriculum schedule \\
        Improved MF~\cite{geng2025improvedmeanflowschallenges} & deterministic tangent ($\beta\!\to\!1$) & separate velocity head + sg \\
        Modular MF~\cite{you2025modularmeanflow} & deterministic tangent ($\beta\!\to\!1$) & gradient-modulated loss family \\
        Kim~\textit{et al.}~\cite{kim2025understandingacceleratingmeanflow} & deterministic tangent ($\beta\!\to\!1$) & staged training schedule \\
        TVM~\cite{zhou2026terminalvelocitymatching} & remove the spatial Jacobian ($\mJ\!\to\!\bm{0}$) & terminal-time differentiation \\
        Re-MeanFlow~\cite{zhang2026overcomingcurvaturebottleneckmeanflow} & reduce $\|\mJ\|$ & rectified-flow preprocessing \\
        Decoupled MF~\cite{lee2025decoupledmeanflow} & bypass JVP construction & flow-map conditioning on later $t$ \\
        \bottomrule
    \end{tabular}
    \vspace{-1em}
\end{table}

%% file: contents/related.tex
\section{Related Work}
\label{sec: review}

\noindent\textbf{Variance reduction in learning flow-based models.} Several recent works address variance in learning flow-based models from complementary perspectives. Stable Velocity~\cite{yang2026stablevelocityvarianceperspective} reveals a two-regime variance structure in flow matching and proposes composite conditioning; TPC~\cite{maduabuchi2026temporalpairconsistencyvariancereduced} reduces variance through temporal pair coupling; and preconditioned flow matching~\cite{ahamed2026preconditionedscoreflowmatching} addresses ill-conditioning via anisotropic preconditioning. Bertrand et al.~\cite{bertrand2025closedformflowmatchinggeneralization} argue that stochastic-target variance is negligible for standard flow matching, consistent with our analysis, since the problematic amplification we identify is specific to the MeanFlow JVP, not the flow-matching loss itself. Rectified flows~\cite{liu2022flowstraightfastlearning,lee2024improvingrectifiedflows} reduce trajectory curvature by iterative straightening, which by~\Cref{thm: jacobi-variance} reduces $\|\mJ\|^{2}$ and hence the amplified noise term.

\noindent\textbf{Control variates and target networks.} The control-variate framework we apply has classical roots in Monte Carlo estimation~\cite{glynn2002some}; it is also the analytical structure underlying target networks in deep RL~\cite{mnih2015human} and self-distillation in self-supervised learning~\cite{grill2020bootstrap}. We make the connection explicit by interpreting the EMA-derived tangent in MeanFlow as a target-network-style proxy that drives the control-variate coefficient toward its optimum.

%% file: contents/experiment.tex
\section{Experiments}
\label{sec: experiment}

We empirically validate our theory in~\Cref{sec: method} by probing the bias-variance trade-offs. Specifically, we sweep the tangent-mixing coefficient $\beta\!\in\![0,1]$ end-to-end and compare the results to the closed-form prediction based on~\Cref{thm: control-variate-optimum}.

%%%%%%%%%%%%%%%%%%%%%%%%%%%%%%%%%%%%%%%%%
\noindent\textbf{Instantiation of the $\beta\!=\!1$ corner.}
To validate the framework empirically, we need a concrete realization of $\beta$ that we can sweep at scale. We replace the JVP tangent by an exponential moving-average~\cite{polyak1992acceleration} deterministic proxy $\hat{\vv}\!\gets\!\vu_{\bar{\vtheta}}(\vx_{t},t,t)$, and use a small flow-matching loss at $r\!=\!t$ to keep the proxy bias $\vb\!=\!\hat{\vv}-\vv$ small and bound the bias term $\beta(\mJ{+}\mI)\vb$ in~\eqref{eq: g-beta}. We keep the regression target $\vv_{\text{cond}}$ unchanged, exploiting the role asymmetry identified in~\cref{subsec: two-roles}. \Cref{appx: implementation} provides details about the full loss, training algorithm, and three propositions characterizing the resulting gradient form, residual bias, and the necessity of the target-tangent role split.

\noindent\textbf{Experiment Setup.} We evaluate our theory across three regimes: six two-dimensional toy datasets, dense Gaussian mixtures (DGMM) with various dimensions $d\!\in\!\{2,4,8,16,32,64\}$, and DiT-B/4~\cite{peebles2023scalablediffusion} trained on ImageNet with Stable Diffusion latents\footnote{\url{https://huggingface.co/enterprise-explorers/sd-vae-ft-mse-flax}}. On the 2-D toy and DGMM datasets, we use a three-layer MLP with $128$ hidden units, train for $200{,}000$ steps with batch size $256$ and three random seeds $\{42, 0, 1\}$, and sweep the eleven-point grid $\beta\!\in\!\{0.0,0.1,\dots,1.0\}$ for the sample-quality, evaluated by final-step sliced Wasserstein-$1$ distance ($\mathrm{SW}_1$) on $4096$ samples with $500$ random projections under a fixed projection seed. On the toy datasets, we additionally run a sub-sweep at $\beta\!\in\!\{0,0.25,0.5,0.75,1\}$ that investigates the per-step total gradient variance $\Tr(\Cov[\nabla_{\vtheta}\ell_{\text{MF}}])$ using $K\!=\!8$ replica of mini-batches, each with $256$ samples, every $2{,}000$ steps and tail-averages over the second half of training. For the DiT, we perform a four-point sweep $\beta\!\in\!\{0,0.25,0.5,1\}$ and train the model for $300$k steps following the recipe in the original MeanFlow~\cite{geng2025meanflowsonestepgenerative}. We defer the full setup, FID table, and direct matrix-form measurement of $\beta^{\ast}$ to~\Cref{appx: dit-results}.

\begin{figure}[!t]
    \centering
    \includegraphics[width=\textwidth]{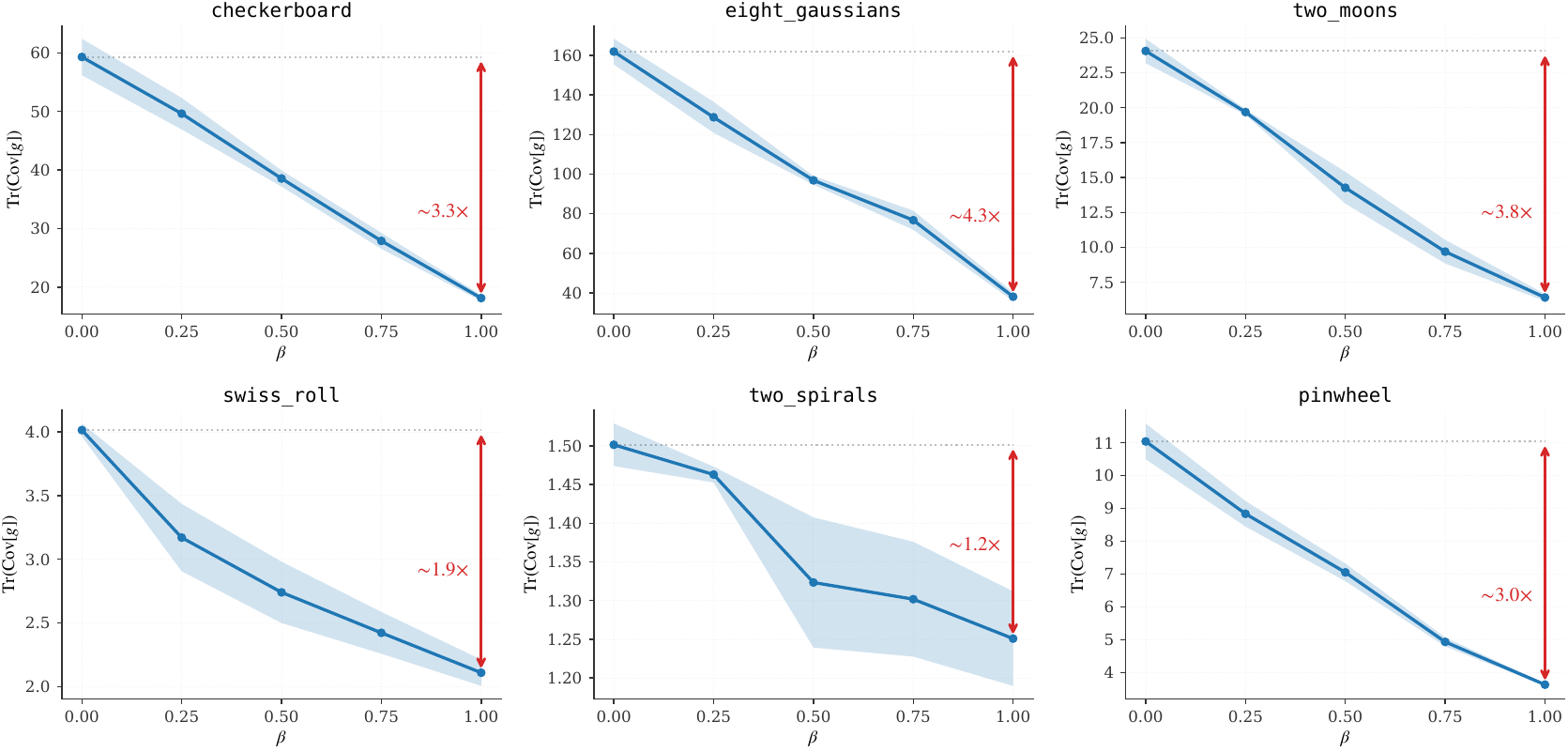}
    \caption{Total Gradient Variance $\Tr(\Cov[\nabla_{\vtheta}\ell_{\text{MF}}])$ on toy datasets with $\beta\!\in\!\{0,0.25,0.5,0.75,1\}$. Variance decreases monotonically in $\beta$ on almost every dataset.}
    \label{fig: nr-vs-beta}
    \vspace{-1em}
\end{figure}

\subsection{Variance reduction}
\label{subsec: exp-grad-var}

\Cref{fig: nr-vs-beta} reports the total gradient variances $\Tr(\Cov[\nabla_{\vtheta}\ell_{\text{MF}}])$ on all six 2-D datasets. We observe monotonically decreasing variances with respect to $\beta$ on almost every dataset. The empirical ordering tracks the variance term $\sigma^{2}d\bigl((1{-}\beta)\kappa - \beta\bigr)^{2}$ from~\Cref{thm: jacobi-variance} and~\Cref{thm: control-variate-optimum}. The result confirms that the conditional fluctuation $\vv^{\prime}$ is the dominant variance source in the MeanFlow gradients. Meanwhile, we observe that the magnitude of the reduction varies with the spectral magnitude $\|\mJ\|$ of the backbone model. The largest reductions appear on the \emph{high-curvature} datasets (\emph{e.g.,} \texttt{eight\_gaussians}, \texttt{checkerboard}), and the smallest ($1.20\!\times$) on \texttt{two\_spirals}. The observation is consistent with our theory: low geometric curvature pushes the network toward the flat regime $\partial_{\vx_{t}}\vu_{\vtheta}\!\to\!\frac{1}{t-r}\mI_{d}$, corresponding to $\kappa\!\approx\!0$, and our~\cref{thm: control-variate-optimum} predicts both a small per-step variance reduction and a left-shift of optimal coefficient $\beta^{\ast}\!\to\!0$ given by the noise-cancellation factor $\kappa/(\kappa{+}1)$.

\begin{figure}[!t]
    \centering
    \includegraphics[width=\textwidth]{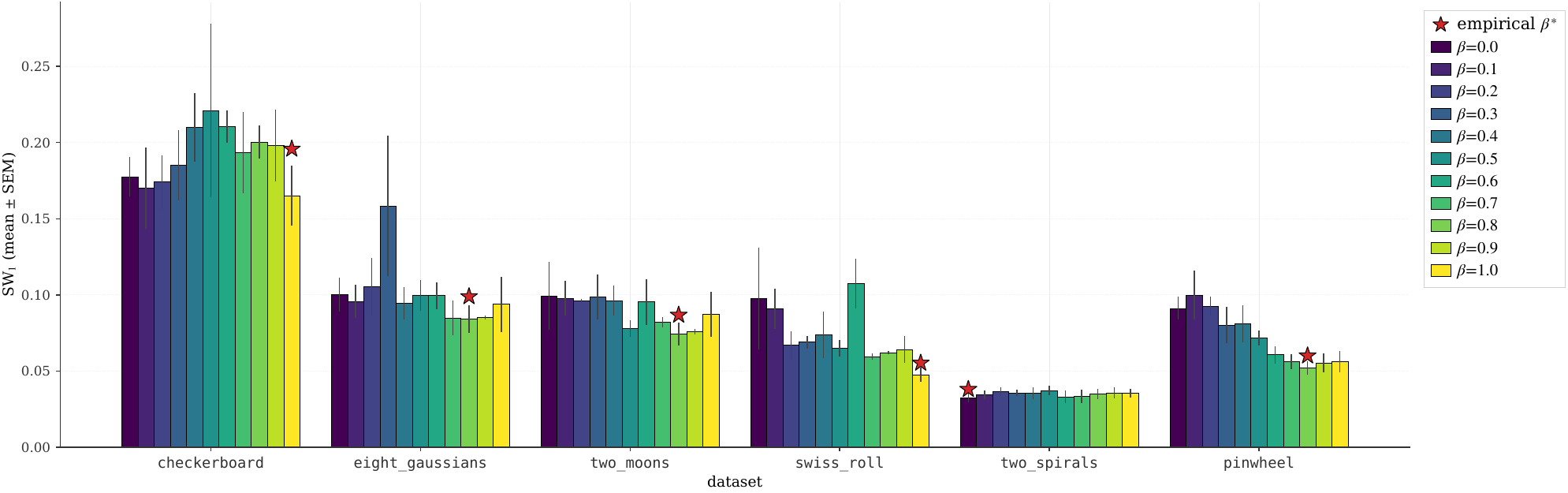}
    \caption{Sample Quality (Sliced Wasserstein-$1$ Distance $\mathrm{SW}_{1}(\beta)$) on toy datasets with $\beta\!\in\!\{0.0,0.1,\dots,1.0\}$. Bars are grouped by dataset and colored by $\beta$ (standard error bars). The red star marks the empirical optimum $\beta^{\ast}$.}
    \label{fig: sw1-vs-beta}
\end{figure}

\subsection{Bias-variance trade-offs}
\label{subsec: exp-m-beta}

Whereas $\Tr(\Cov[\nabla_{\vtheta}\ell_{\text{MF}}])$ reveals the variance reduction induced by $\beta$, the sample-quality measured by $\mathrm{SW}_{1}(\beta)$ exposes the full bias-variance trade-off. \Cref{fig: sw1-vs-beta} traces $\mathrm{SW}_{1}(\beta)$ across the full $11$-point grid on six 2-D toy datasets. On the five high-curvature datasets, the empirical $\beta^{\ast}$ falls in between $\{0.8,1.0\}$, recovering the predicted $\beta\!\to\!1$ location with a small bias norm $\|\vb\|_{2}^{2}$ when the EMA proxy is accurate. On the lower-curvature \texttt{two\_spirals} dataset, consistent with~\cref{subsec: exp-grad-var}, the geometry drives $\partial_{\vx_{t}}\vu_{\vtheta}\!\to\!\frac{1}{t-r}\mI_{d}$, essentially pushing $\kappa$ toward zero. The predicted $\beta^{\ast}$ then shrinks toward the unbiased corner $\beta\!=\!0$ through the noise-cancellation factor $\kappa/(\kappa{+}1)$ in~\Cref{thm: control-variate-optimum}, and the empirical sweep result concurs. Meanwhile, we are surprised that the $\beta^{\ast}=1$ instantiation reduces SW$_{1}$ by $54\%$ compared to the original MeanFlow ($\beta=0$) on the \texttt{swiss\_roll} dataset.

\input{assets/tables/dgmm-beta-summary}

\subsection{Scaling with feature dimensions}
\label{subsec: exp-dgmm}

We investigate whether our theory holds as the feature dimension grows by sweeping the coefficient $\beta\!\in\!\{0,0.1,\dots,1\}$ on dense Gaussian mixture (DGMM) datasets in $\mathbb{R}^{d}$, varying $d\!\in\!\{2,4,8,16,32,64\}$. \Cref{tab: dgmm-beta-summary} summarizes the sample-quality gap between the original MeanFlow with $\beta\!=\!0$ and the empirical $\beta^{\ast}$ at each $d$. We report full results in~\Cref{tab: dgmm-beta-full} of~\Cref{appx: results}. On low-dimensional datasets with $d\!\in\!\{2,4,8,16,32\}$, SW$_{1}$ is essentially \emph{$\beta$-insensitive}, where the quality gap between $\beta\!=\!0$ and $\beta^{\ast}$ stays within one standard error of the mean (SEM). This is consistent with the prediction from~\cref{thm: control-variate-optimum} when the EMA proxy yields small $\|\vb\|_{2}^{2}$. The noise-cancellation factor $\kappa/(\kappa{+}1)$ saturates near $1$ across the whole $\beta$ range, leading to flat SW$_{1}(\beta)$. At $d\!=\!64$ the data hints at an interior minimum at $\beta^{\ast}\!=\!0.4$ with a $\sim\!9\%$ reduction over $\beta\!=\!0$, consistent with $\kappa$ coming off saturation with increasing data dimension $d$. In~\cref{subsec: exp-dit}, we show that the same observation holds for a larger and more complex DiT model, where $\beta^{\ast}_{\mathrm{matrix}}\!\approx\!0.94$ also sits short of the deterministic-tangent corner.

\subsection{Latent Diffusion Transformers on ImageNet}
\label{subsec: exp-dit}

We further investigate whether our theory generalizes to large models with high-dimensional features. \Cref{fig: dit-fid-curves} shows two complementary measurements from the experiments with DiT-B/4 on ImageNet. The left panel in~\Cref{fig: dit-fid-curves} validates that the gradient-variance reduction predicted by~\Cref{thm: jacobi-variance} carries over from 2-D toy benchmark datasets to DiT. Herein, we directly measure the per-step loss variance using the model trained with the original MeanFlow loss, using two distinct tangents: the stochastic conditional-velocity tangent $\vv_{\text{cond}}$ and the deterministic EMA tangent $\vu_{\bar{\vtheta}}(\vx_{t},t,t)$. We observe that the loss variance with deterministic-tangent stays roughly flat across $t$, while the stochastic-tangent variance grows by two orders of magnitude. The widening gap directly validates the Jacobi-factor amplification of~\Cref{thm: jacobi-variance}, where the stochastic tangent passes the conditional-velocity noise through $\mJ\!=\!(t{-}r)\partial_{\vx_{t}}\vu{-}\mI$, which the deterministic tangent eliminates by construction. We observe the same pattern using the checkpoint of the $\beta\!=\!1$ instantiation (see~\cref{appx: dit-results}), confirming that the amplification is a property of the architecture and data.

The right panel in~\Cref{fig: dit-fid-curves} visualizes the converged FID as a function of $\beta$, with $y$-axis the empirical $\Delta\mathrm{FID}(\beta)\!\triangleq\!\mathrm{FID}(\beta)\!-\!\mathrm{FID}(\beta=0)$. The result shows that the coarse ordering $\mathrm{FID}(\beta\!=\!1)\!\gg\!\mathrm{FID}(\beta\!\le\!0.5)$ and $\mathrm{FID}(\beta\!=\!0.5)\!>\!\mathrm{FID}(\beta\!=\!0)$ holds at every matched-step checkpoint, and the full four-point ordering holds at convergence; we discuss the seed-sensitivity of the finer $\beta\!=\!0$ vs $\beta\!=\!0.25$ gap in~\Cref{appx: dit-results}. We report per-checkpoint values in~\Cref{tab: dit-fid} of~\Cref{appx: dit-results}. The result establishes the following observation:

\begin{figure}[t]
    \centering
    \includegraphics[width=\textwidth]{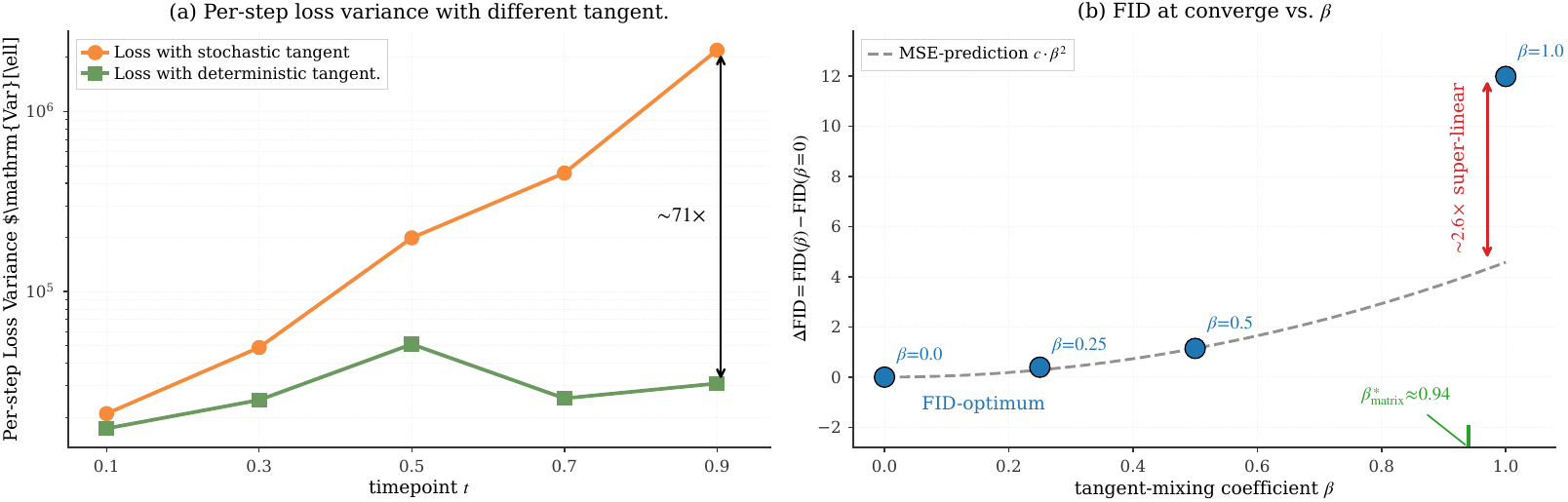}
    \caption{DiT-B/4 on ImageNet-$256$. \emph{Left:} Per-step loss variance under the stochastic ($\beta\!=\!0$) versus deterministic ($\beta\!=\!1$) tangent, measured on the baseline ($\beta=0$) checkpoint with matched data and noise. \emph{Right:} Converged FID versus $\beta$, reported as $\Delta\mathrm{FID}(\beta)\!=\!\mathrm{FID}(\beta)\!-\!\mathrm{FID}(\beta=0)$ at matched step $295$k; the dashed gray curve is the quadratic $c\beta^{2}$.}
    \label{fig: dit-fid-curves}
    \vspace{-1em}
\end{figure}

\noindent\textbf{FID-MSE landscape mismatch.} We report the direct measurement of both ingredients in~\Cref{thm: control-variate-optimum} on the DiT checkpoints, including the EMA-tracking proxy $\widehat{\|\vb\|^{2}}$ and the irreducible noise floor $\sigma^{2}d$, in~\Cref{appx: dit-results}. The result predicts an optimal $\beta^{\ast}$ in the interior range $[0.46,\,0.50]$ under the scalar-isotropic approximation, while the matrix-form $\beta^{\ast}_{\text{matrix}}$ is even larger (see below). The empirical ordering of FID at convergence is hence consistent with the prediction. To test whether FID tracks the gradient-MSE \emph{quantitatively}, we fit a MSE-prediction curve $c\beta^{2}$ and compare the $\beta\!=\!1$ offset, where $c$ is given by
\begin{equation}
    c\gets\underset{c\in\R}{\arg\min}\left\|\Delta\mathrm{FID}(\beta)-c\beta^{2}\right\|_{2}^{2},\qquad\beta\in\left\{0.0, 0.25, 0.5\right\}.
\end{equation}
\Cref{fig: dit-fid-curves} shows a $\sim\!2.6\!\times$ super-linear amplification along the $\Delta\mathrm{FID}$ axis, indicating that the FID over-penalizes bias relative to gradient MSE. As a consequence, the FID-optimal $\beta$ shifts past the gradient-MSE interior minimizer all the way to the unbiased corner $\beta\!=\!0$.

To verify that our isotropic approximation in~\cref{thm: control-variate-optimum} is not artifactual, we directly measured a matrix-form upper bound on $\beta^{\ast}_{\text{matrix}}$ on the baseline checkpoint by sampling actual $\vv^{\prime}$ values from the conditional distribution (see~\Cref{appx: theorem-proof}). The estimator omits the bias term $\|(\mJ{+}\mI)\vb\|^{2}$ from the denominator, giving
\begin{equation}
    \beta^{\ast}_{\text{no bias}}\!\triangleq\!\Tr(\mJ\Sigma_{\vv^{\prime}}(\mJ{+}\mI)^{\!\top})/\Tr((\mJ{+}\mI)\Sigma_{\vv^{\prime}}(\mJ{+}\mI)^{\!\top})\;\geq\;|\beta^{\ast}_{\text{matrix}}|,
\end{equation}
since the omitted bias term is \emph{non-negative}. We evaluate the bound at $t\!\in\!\{0.1,0.3,0.5,0.7,0.9\}$ with fixed gap $t-r\!=\!0.25$, and $512$ samples per $t$ using $\vu_{\vtheta}(\vx_{t},t,t)$ as the marginal-velocity proxy. The numerator is \emph{strictly positive} at every probed $t$, ruling out the unbiased-corner regime $\beta^{\ast}_{\text{matrix}}\!\le\!0$. Aggregated across $t$, we have $\beta^{\ast}_{\text{no bias}}\!\approx\!0.94$, which is close to the deterministic-tangent corner and \emph{larger} than the scalar-isotropic estimate $\beta^{\ast}\!\in\![0.46,0.50]$. The full $\beta^{\ast}_{\text{matrix}}$ then lies in $(0,0.94]$. Herein, we do not pin down the exact value due to the inaccessible true model bias $\vb\!=\!\vu_{\vtheta}(\vx_{t},t,t)-\vv(\vx_{t},t)$ on ImageNet without access to the conditional distribution $p(\vx_{0}\!\mid\!\vx_{t})$. The matrix-form measurement further strengthens, rather than weakens, the FID-MSE mismatch: the gradient-MSE optimum sits in $(0,0.94]$ while the FID optimum sits at the unbiased corner $\beta\!=\!0$.

\noindent\textbf{Independent validation of the small-bias regime.} The estimate above uses the boundary $\vu_{\vtheta}(\vx_{t},t,t)$ of the model as a proxy. Hence, the measured bias is referenced to the same network. To break this circularity, we train a separate, unconditional flow-matching model on the \emph{same latents and backbone} $\vv_{\text{ref}}$. It is therefore a reference model in the low-bias regime, with a boundary residual matching the analytic noise floor to within $\sim\!1.7\%$. We then re-estimating a non-cirular bias by $\|\vu_{\vtheta}(\vx_{t},t,t)-\vv_{\text{ref}}\|^{2}$, which gives a ratio $\|\vb\|^{2}/\sigma^{2}d\!\in\![0.006,0.043]$ across flow time $t$. Meanwhile, the matrix-form $\beta^{\ast}$ recomputed with this independent bias stays in $[0.901,0.935]$ (see \Cref{appx: vref}). The model bias is therefore genuinely small and $\beta^{\ast}_{\text{matrix}}$ sits near the top of $(0,0.94]$ on an independent footing, sharpening the FID-MSE mismatch.

%% file: assets/tables/dgmm-beta-summary.tex
\begin{table}[!t]
    \centering
    \caption{DGMM $\beta$-sweep summary across dimensions $d\!\in\!\{2,4,8,16,32,64\}$. The quality gap column marks reduction that exceed one standard error mean with $\dagger$. $\mathrm{SW}_{1}$ is essentially $\beta$-insensitive at $d\!\le\!32$, while only $d\!=\!64$ shows a tentative interior minimum.}
    \label{tab: dgmm-beta-summary}
    \begin{tabular}{@{}c|cccc@{}}
        \toprule
        $d$ & $\mathrm{SW}_{1}(\beta\!=\!0)$ & $\mathrm{SW}_{1}(\beta=\beta^{\ast})$ & \textbf{Quality Gap} & $\beta^{\ast}$ \\
        \midrule
        $2$  & $0.152{\scriptstyle\pm0.022}$ & $0.146{\scriptstyle\pm0.015}$ & $\sim\!4\%$  & $1.0$ \\
        $4$  & $0.083{\scriptstyle\pm0.011}$ & $0.083{\scriptstyle\pm0.011}$ & $<\!1\%$    & $0.8$ \\
        $8$  & $0.062{\scriptstyle\pm0.003}$ & $0.062{\scriptstyle\pm0.003}$ & $\sim\!1\%$ & $1.0$ \\
        $16$ & $0.044{\scriptstyle\pm0.002}$ & $0.044{\scriptstyle\pm0.003}$ & $<\!1\%$    & $1.0$ \\
        $32$ & $0.034{\scriptstyle\pm0.001}$ & $0.033{\scriptstyle\pm0.001}$ & $<\!1\%$    & $0.1$ \\
        $64$ & $0.028{\scriptstyle\pm0.003}$ & $0.025{\scriptstyle\pm0.001}$ & $\sim\!9\%^{\dagger}$ & $0.4$ \\
        \bottomrule
    \end{tabular}
\end{table}

%% file: contents/conclusion.tex
\section{Conclusion}
\label{sec: conclusion}
In this paper, we establish a theory, from the perspective of Monte Carlo control variates, that isolates the statistical structure of training pathology in the original MeanFlow. We identify two distinct roles the conditional velocity plays in the loss, and the original objective assigns the wrong coefficient to one of them. Following the theory, we derive the optimal tangent-mixing coefficient for the control variate in closed form. In our experiment, we conduct a controlled sweep of the coefficient to recover the predicted bias-variance trade-offs, scaling from two-dimensional toy datasets to a DiT-B/4 model on ImageNet-$256$. In addition, our experiment with DiT reveals a quantitative \emph{FID-MSE landscape mismatch}, where the optimal coefficient given by minimizing matrix-form MSE sits at $\beta^{\ast}\!\approx\!0.94$. In contrast, the optimal coefficient minimizing the FID leans towards the unbiased corner $\beta\!=\!0$, with a super-linear relationship between the empirical $\mathrm{FID}(\beta\!=\!1)$ and the bias. We discuss practical insights and limitations below.

\noindent\textbf{Practical insights.} Selecting $\beta$ trades off the data-dependent $\kappa$ against the model bias $\|\vb\|^{2}$ of~\Cref{thm: control-variate-optimum}. The deterministic-tangent corner ($\beta\!\to\!1$), realized by an EMA proxy at $O(1)$ cost, minimizes the \emph{per-step} gradient variance, but this does not by itself confer optimization stability: because the EMA tangent lags the online parameters, a large learning rate makes it stale and injects a compounding bias, and in a learning-rate sweep on \texttt{eight\_gaussians} and \texttt{checkerboard} (\Cref{appx: stability-pilot}) the $\beta\!=\!1$ instantiation in fact diverged at a \emph{lower} learning rate than vanilla MeanFlow, consistent with our framework, since the variance reduction is a per-step property while the EMA-induced bias enters multiplicatively through $\beta(\mJ{+}\mI)\vb$. When sample quality is the goal, our DiT experiment instead favors $\beta$ near $0$, or annealing $\beta$ with $\|\vb\|_{2}^{2}$ until the FID bias penalty vanishes. A metric-aware schedule that interpolates between these regimes and monitors proxy drift, rather than assuming the deterministic tangent is uniformly more stable, is a promising direction.

\noindent\textbf{Limitations.} We acknowledge several limitations of this work: \emph{(i) Scalar-isotropic approximation.} Our~\Cref{thm: control-variate-optimum} is derived under $\mJ\!\approx\!\kappa\mI_{d}$ and $\Sigma_{\vv^{\prime}}\!\approx\!\sigma^{2}\mI_{d}$. In the full diagonal matrix-valued setting the optimum is direction-dependent: a per-direction $\beta_{i}\!=\!\frac{\mJ_{ii}}{\mJ_{ii}+1}\!\cdot\!\frac{\sigma_{i}^{2}}{\sigma_{i}^{2}+b_{i}^{2}}$ would minimize the per-component gradient MSE, and a single global $\beta$ trades off residual variance across directions. This assumption may be valid for backbones with a concentrated Jacobian spectrum, but it may fail for more complex models or highly anisotropic data. \emph{(ii) FID-aware loss design.} Our experiment characterizes the FID-MSE mismatch but does not derive an FID-aware loss or $\beta$ schedule, which remains an open question. \emph{(iii) Non-Euclidean data.} We conduct our experiment in the Euclidean space $\mathbb{R}^{d}$. Extending the closed-form $\beta^{\ast}$ to manifold MeanFlow~\cite{woo2026riemannianmeanflow} is straightforward at the gradient level but requires care in defining the second moment $\Tr(\Sigma_{\vv^{\prime}})$ on tangent spaces.

\noindent\textbf{Future work.} We see three directions worth pursuing. First, we argue that our control-variate diagnosis applies to any self-supervised one-step generative model that bootstraps a parametrized map as the JVP-style total derivative in MeanFlow, including consistency models~\cite{song2023consistencymodels} and shortcut models~\cite{frans2025stepdiffusionshortcutmodels}. Second, a per-sample scalar coefficient, driven by online estimates of $\kappa$ and $\|\vb\|_{2}^{2}$, can potentially close the global-vs-per-sample and scalar-vs-matrix gaps during training. Finally, an adaptive schedule that interpolates between the gradient-MSE optimum and the FID-optimal corner, calibrated by either a learned proxy or proxy-quality validation FID, could resolve the FID-MSE mismatch we identified.

%% file: contents/postscripts.tex
\begin{ack}
    The authors appreciate the Google TPU Research Cloud (TRC) for supporting access to TPUs and Dr. Jiaru Zhang for his insightful discussion.
\end{ack}

%% file: contents/supplementary.tex
\newpage
\appendix
\setcounter{theorem}{0}
\begin{appendices}
    % 1. Start a new local contents list
    \startcontents[appendices]

    % 2. Print the local table of contents (only for what follows)
    \printcontents[appendices]{l}{1}{\setcounter{tocdepth}{2}}
    \newpage

    %%%%%%%%%%%%%%%%%%%%%%%%%%%%%%%%%%%%%%%%%%%%%%%%%%%%%%%%%%%%%%%%%%%%
    \section{Notations}
    \label[appendix]{appx: notation}
    %%%%%%%%%%%%%%%%%%%%%%%%%%%%%%%%%%%%%%%%%%%%%%%%%%%%%%%%%%%%%%%%%%%%

    For readability, \cref{tab: notation} lists all notations used in this work.
    \input{assets/tables/notation}

    %%%%%%%%%%%%%%%%%%%%%%%%%%%%%%%%%%%%%%%%%%%%%%%%%%%%%%%%%%%%%%%%%%%%
    \section{Proofs}
    \label[appendix]{appx: theorem-proof}
    %%%%%%%%%%%%%%%%%%%%%%%%%%%%%%%%%%%%%%%%%%%%%%%%%%%%%%%%%%%%%%%%%%%%

    %%%%%%%%%%%%%%%%%%%%%%%%%%%%%%%%%%%%%%%%%%%%%%%%%%%%%%%%%%%%%%%%%%%%
    \subsection{Proof of Lemma 1}
    \label[appendix]{appx: first}
    %%%%%%%%%%%%%%%%%%%%%%%%%%%%%%%%%%%%%%%%%%%%%%%%%%%%%%%%%%%%%%%%%%%%

    \begin{lemma}
        Given vector fields $\vv_{\text{cond}}$ generating conditional probability paths $p(\vx,t\mid\vx_{0})$, for any $\rvx_{0}\!\sim\!p(\rvx_{0})$, the expected conditional velocity field is equal to the marginal velocity field
        \begin{equation*}
            \vv(\vx,t)=\E_{\vx_{0}\sim{p(\vx_{0}\mid\vx_{t}=\vx)}}\!\left[\vv(\vx,t\mid\vx_{0})\right].
        \end{equation*}
    \end{lemma}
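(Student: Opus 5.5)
We prove the lemma with the standard marginalization argument of conditional flow matching. The plan is to show that the right-hand side of the claimed identity, viewed as a vector field, produces the marginal probability path $p(\vx,t)=\int p(\vx,t\mid\vx_{0})\,p(\vx_{0})\,\mathrm{d}\vx_{0}$ through the continuity equation. We then identify it with $\vv(\vx,t)$, the field whose flow $\psi$ transports $p(\rvx,1)$ to $\pdata$ along that path.

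\textbf{Step 1: conditional continuity equations.} For each fixed $\vx_{0}$, the conditional field $\vv(\vx,t\mid\vx_{0})$ generates the conditional path $p(\vx,t\mid\vx_{0})$ by hypothesis. It therefore satisfies
\begin{equation*}
    \partial_{t}p(\vx,t\mid\vx_{0})+\nabla_{\vx}\!\cdot\!\bigl(p(\vx,t\mid\vx_{0})\,\vv(\vx,t\mid\vx_{0})\bigr)=0 .
\end{equation*}

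\textbf{Step 2: integrate against $p(\vx_{0})$.} Integrate this equation against $p(\vx_{0})$ and exchange $\partial_{t}$ and $\nabla_{\vx}\cdot$ with the $\vx_{0}$-integral. This gives $\partial_{t}p(\vx,t)+\nabla_{\vx}\cdot\bigl(\int p(\vx,t\mid\vx_{0})\,\vv(\vx,t\mid\vx_{0})\,p(\vx_{0})\,\mathrm{d}\vx_{0}\bigr)=0$.

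\textbf{Step 3: apply Bayes' rule.} Use $p(\vx,t\mid\vx_{0})\,p(\vx_{0})=p(\vx,t)\,p(\vx_{0}\mid\vx_{t}=\vx)$ to rewrite the flux as $p(\vx,t)\,\E_{\vx_{0}\mid\vx_{t}=\vx}[\vv(\vx,t\mid\vx_{0})]$. So the conditional expectation is a velocity field whose continuity equation is solved by the marginal path.

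\textbf{Step 4: identify with $\vv$.} Define the marginal velocity $\vv(\vx,t)$ as the field generating $p(\vx,t)$, i.e.\ the field whose flow $\psi$ pushes $p(\rvx,1)$ to $\pdata$ through these marginals; this is the convention of \cite{lipman2023flowmatchinggenerativemodeling}. The equality then follows wherever $p(\vx,t)>0$.

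For the linear interpolant $\vx_{t}=(1-t)\vx_{0}+t\vx_{1}$, there is a more direct route. Write $\vv(\vx,t)$ as $\E[\dot{\vx}_{t}\mid\vx_{t}=\vx]$ and note $\dot{\vx}_{t}=\vx_{1}-\vx_{0}=\vv_{\text{cond}}$. Since $\vx_{1}=(\vx_{t}-(1-t)\vx_{0})/t$ is determined by $(\vx_{t},\vx_{0})$ for $t>0$, conditioning on $\vx_{t}$ reduces to averaging over $p(\vx_{0}\mid\vx_{t})$, which gives the same identity.

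\textbf{Main obstacle.} The delicate point is Step 4, together with the regularity needed in Step 2. We need enough integrability and smoothness of $p(\vx,t\mid\vx_{0})\,\vv(\vx,t\mid\vx_{0})$ to interchange differentiation and integration; the Gaussian $\vx_{1}$ gives smooth conditional densities for $t>0$, so dominated convergence handles this. The continuity equation alone does not determine its field uniquely, because divergence-free perturbations are allowed. We therefore argue via the definition of the marginal velocity as the conditional expectation of $\dot{\vx}_{t}$ given $\vx_{t}$, rather than claiming uniqueness. The endpoint $t=0$, where $p(\vx_{0}\mid\vx_{t})$ degenerates, is excluded or handled by continuity.
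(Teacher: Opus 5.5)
Your Steps 1--3 follow the paper's argument exactly: integrate the conditional continuity equations against $p(\vx_{0})$, interchange $\partial_{t}$ and the divergence with the $\vx_{0}$-integral, and use Bayes' rule to write the flux as $p(\vx,t)\,\E_{\vx_{0}\mid\vx_{t}=\vx}[\vv(\vx,t\mid\vx_{0})]$. The only difference is the final identification: the paper just ``combines'' the two continuity equations, tacitly assuming they determine the field, whereas you rightly note that divergence-free perturbations break uniqueness and close the step by taking $\vv(\vx,t)=\E[\dot{\vx}_{t}\mid\vx_{t}=\vx]$. This is the more careful version, so state Step~4 that way rather than as ``the field generating $p(\vx,t)$.''
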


    \begin{proof}
        We follow the standard flow-matching construction~\cite{lipman2023flowmatchinggenerativemodeling,tong2024improvinggeneralizingflowbasedgenerative}. A sufficient and necessary condition for a velocity field $\vv(\vx,t)$ to generate a probability density path is given by the continuity equation~\cite{villani2009optimaltransport}, which writes:
        \begin{equation}
            \frac{\partial}{\partial{t}}p(\vx,t)+\mathrm{div}\left(p(\vx,t)\vv(\vx,t)\right)=0.
            \label{eq: continuity-eq}
        \end{equation}
        Meanwhile, by the law of total probability and the definition of the conditional velocity field,
        \begin{equation}
            p(\vx,t)=\int_{\gX}p(\vx,t\mid\vx_{0})\,p(\vx_{0})\,\mathrm{d}\vx_{0},\qquad\forall\,p(\vx_{0}).
        \end{equation}
        Taking the derivative with respect to time step $t$ on both sides and applying Bayes' rule yields
        \begin{equation}
            \begin{aligned}
                \frac{\partial}{\partial{t}}p(\vx,t)
                &=\int_{\gX}\frac{\partial}{\partial{t}}p(\vx,t\mid{\vx_{0}})\,p(\vx_{0})\mathrm{d}\vx_{0}\\
                &=\int_{\gX}-\mathrm{div}\left(p(\vx,t\mid{\vx_{0}})\vv(\vx,t\mid{\vx_{0}})\right)\cdot p(\vx_{0})\mathrm{d}\vx_{0}\\
                &=-\mathrm{div}\cdot\left(\int_{\gX}\vv(\vx,t\mid{\vx_{0}})\!\cdot\!p(\vx,t)\!\cdot\!p(\vx_{0}\mid{\vx_{t}=\vx})\mathrm{d}\boldsymbol{x}_{0}\right)\\
                &=-\mathrm{div}\left(p(\vx,t)\E_{\vx_{0}\sim{p(\vx_{0}\mid\vx_{t}=\vx)}}\!\left[\vv(\vx,t\mid\vx_{0})\right]\right).
            \end{aligned}
            \label{eq: derivative-of-marginal-prob}
        \end{equation}
        Combining~\eqref{eq: continuity-eq} and~\eqref{eq: derivative-of-marginal-prob} recovers the relationship. Proof completes.
    \end{proof}

    \subsection{Proof of~\Cref{thm: jacobi-variance} (Jacobian Variance Amplification)}

    \begin{theorem}[Jacobian Variance Amplification]
        Let $\vg\!\triangleq\!\nabla_{\vtheta}\vu_{\vtheta}(\vx_{t},r,t)\!\in\!\mathbb{R}^{d\times p}$ be the parameter Jacobian of the average velocity. The trace of the conditional gradient covariance (\textit{i.e., the total variance}) of $\ell_{\text{MF}}(\vtheta)$ in~\eqref{eq: mf-loss} satisfies
        \begin{equation*}
            \boxed{
                \Tr\!\left(\Cov\!\left[\nabla_{\vtheta}\ell_{\text{MF}}\mid\vx_{t}\right]\right)
                \;\propto\;
                \Tr\!\left(\vg^{\!\top}\,\mJ\,\Sigma_{\vv^{\prime}}\,\mJ^{\!\top}\,\vg\right).
            }
        \end{equation*}
    \end{theorem}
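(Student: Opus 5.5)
The plan is to compute the per-sample gradient of $\ell_{\text{MF}}$ explicitly with the stop-gradient in place, and then take the covariance over $\vx_{0}\mid\vx_{t}$ after substituting the Reynolds decomposition $\vv_{\text{cond}}=\vv+\vv^{\prime}$. In the integrand of~\eqref{eq: mf-loss}, the tangent $\vv_{\text{cond}}$ enters the JVP linearly. We can therefore write $\texttt{JVP}(\vu_{\vtheta},(\vx_{t},r,t),(\vv_{\text{cond}},0,1))=\partial_{\vx_{t}}\vu_{\vtheta}\cdot\vv+\partial_{t}\vu_{\vtheta}+\partial_{\vx_{t}}\vu_{\vtheta}\cdot\vv^{\prime}$. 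The per-sample residual then splits as $\vr_{\vtheta}^{\texttt{sg}}+\mJ\vv^{\prime}$, with $\mJ=(t{-}r)\partial_{\vx_{t}}\vu_{\vtheta}-\mI_{d}$: the $-\mI_{d}$ part comes from the target $-\vv^{\prime}$ and the $(t{-}r)\partial_{\vx_{t}}\vu_{\vtheta}$ part from the tangent. This is the same split that produced~\eqref{eq: per-sample-expand}.

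Next, differentiate in $\vtheta$. Under $\texttt{sg}$, only the leading $\vu_{\vtheta}$ term carries gradient, so $\nabla_{\vtheta}\ell_{\text{MF}}=2\vg^{\top}\bigl(\vr_{\vtheta}^{\texttt{sg}}+\mJ\vv^{\prime}\bigr)$. Everything depending on $\vx_{0}$ only through $\vx_{t}$ is fixed under conditioning on $\vx_{t}$; this covers $\vg$, $\mJ$, $\vr_{\vtheta}^{\texttt{sg}}$ and $\vv$. The only random quantity is therefore $\vv^{\prime}$. The conditional mean of the gradient is $2\vg^{\top}\vr_{\vtheta}^{\texttt{sg}}$, since $\E_{\vx_{0}\mid\vx_{t}}[\vv^{\prime}]=\bm{0}$ by~\eqref{eq: margv-condv-relation}. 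The fluctuation is $2\vg^{\top}\mJ\vv^{\prime}$, so
\begin{equation*}
\Cov\!\left[\nabla_{\vtheta}\ell_{\text{MF}}\mid\vx_{t}\right]=4\,\vg^{\top}\mJ\,\Sigma_{\vv^{\prime}}\,\mJ^{\top}\vg .
\end{equation*}
Taking the trace gives the boxed statement with proportionality constant $4$. The cyclic identity $\Tr(\vg^{\top}\mJ\Sigma\mJ^{\top}\vg)=\Tr(\mJ\Sigma\mJ^{\top}\vg\vg^{\top})$ exposes the dependence on $\|\mJ\|^{2}$ that the text relies on afterwards.

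The main obstacle is a bookkeeping subtlety rather than an analytic one. The regression target contributes $-\vv^{\prime}$ and the tangent contributes $(t{-}r)\partial_{\vx_{t}}\vu_{\vtheta}\vv^{\prime}$, and one must check that both are absorbed into the single factor $\mJ$. This holds because the stop-gradient does not remove $\vv^{\prime}$ from the residual that multiplies $\vg$; it only removes the $\vtheta$-derivative of the JVP. A second point to justify is conditioning: $\vx_{t}$ is fixed, while $(r,t)$ are treated as given and $\vx_{1}$ is determined by $(\vx_{0},\vx_{t},t)$. Finally, if one also averages over $r,t$, or uses a minibatch, the statement should be read per-sample, and the constant $4$ becomes $4/B$ for batch size $B$. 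This justifies writing $\propto$ rather than equality.
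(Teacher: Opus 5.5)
Your proposal is correct and follows the paper's proof step for step. You substitute $\vv_{\text{cond}}=\vv+\vv^{\prime}$ to write the residual as $\vr_{\vtheta}^{\texttt{sg}}+\mJ\vv^{\prime}$, differentiate under the stop-gradient to get $2\vg^{\top}(\vr_{\vtheta}^{\texttt{sg}}+\mJ\vv^{\prime})$, and use $\E[\vv^{\prime}\mid\vx_{t}]=\bm{0}$ to read off $\Cov[\nabla_{\vtheta}\ell_{\text{MF}}\mid\vx_{t}]=4\,\vg^{\top}\mJ\Sigma_{\vv^{\prime}}\mJ^{\top}\vg$. Stopping at this exact identity with constant $4$ is what the boxed statement needs, and you rightly skip the paper's extra reduction to $\propto\Tr(\mJ\Sigma_{\vv^{\prime}}\mJ^{\top})$, which would require $\mG_{\vtheta}\propto\mI_{d}$ rather than just $\mG_{\vtheta}\succeq\bm{0}$.
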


    \begin{proof}
        By substituting $\vv_{\text{cond}}\!=\!\vv(\vx,t)+\vv^{\prime}$, the per-step loss $\ell_{\text{MF}}$ expands:
        \begin{equation}
            \begin{aligned}
            \ell_{\text{MF}} &=\left\|\vu_{\vtheta}+(t-r)\,\texttt{sg}\!\left[\texttt{JVP}(\vu_{\vtheta},(\vx_{t},r,t),(\vv_{\text{cond}},0,1))\right]-\vv_{\text{cond}}\right\|_{2}^{2} \\
            &= \left\|\vu_{\vtheta}+(t-r)\texttt{sg}\left[\partial_{\vx_{t}}\vu_{\vtheta}\!\cdot\!\vv(\vx,t)+\partial_{t}\vu_{\vtheta}\right]-\vv(\vx,t)+\left[(t-r)\partial_{\vx_{t}}\vu_{\vtheta}-\mI_{d}\right]\vv^{\prime}\right\|_{2}^{2} \\
            &= \left\|\vr^{\texttt{sg}}_{\vtheta}+\mJ\vv^{\prime}\right\|_{2}^{2},
            \end{aligned}
        \end{equation}
        where $\vr_{\vtheta}^{\texttt{sg}}$ is the mean-field residual and $\mJ=(t-r)\partial_{\vx_{t}}\vu_{\vtheta}-\mI_{d}$ is the Jacobi factor. With the stop-gradient operator, the gradient of the per-sample loss passes through only the mean-field residual $\vr_{\vtheta}^{\texttt{sg}}$. Let $\vg\!\triangleq\!\nabla_{\vtheta}\vu_{\vtheta}\in\mathbb{R}^{d\times p}$ denote the parameter Jacobian of the network output. The per-step gradient is then given by
        \begin{equation}
            \nabla_{\vtheta}\ell_{\text{MF}}=2\vg^{\!\top}\!\left(\vr_{\vtheta}^{\texttt{sg}}+\mJ\vv^{\prime}\right),
            \label{eq: per-step-grad}
        \end{equation}
        where all three quantities ($\vg$, $\vr_{\vtheta}^{\texttt{sg}}$, and $\mJ$) are deterministic conditioned on $\vx_{t}$. Since $\E_{\vx_{0}\mid\vx_{t}}[\vv^{\prime}]=\bm{0}$ by~\eqref{eq: margv-condv-relation}, the conditional mean gradient then writes
        \begin{equation}
            \E\!\left[\nabla_{\vtheta}\ell\mid\vx_{t}\right]=2\vg^{\!\top}\left(\vr_{\vtheta}^{\texttt{sg}}+\bm{0}\right)=2\vg^{\!\top}\vr^{\texttt{sg}}_{\vtheta}.
        \end{equation}
        The centered gradient is $\nabla_{\vtheta}\ell-\E[\nabla_{\vtheta}\ell\mid\vx_{t}]=2(\nabla_{\vtheta}\vu_{\vtheta})^{\!\top}\mJ\vv^{\prime}$, so the conditional variance is:
        \begin{align}
            \Var\!\left[\nabla_{\vtheta}\ell\mid\vx_{t}\right]
            &=\E\!\left[\left\|2(\nabla_{\vtheta}\vu_{\vtheta})^{\!\top}\mJ\vv^{\prime}\right\|^{2}\mid\vx_{t}\right] \nonumber\\
            &=4\,\E\!\left[(\vv^{\prime})^{\!\top}\mJ^{\!\top}\underbrace{(\nabla_{\vtheta}\vu_{\vtheta})(\nabla_{\vtheta}\vu_{\vtheta})^{\!\top}}_{\mG_{\vtheta}\;\succeq\;\bm{0}}\mJ\vv^{\prime}\;\middle|\;\vx_{t}\right] \nonumber\\
            &=4\,\Tr\!\left(\mG_{\vtheta}\,\mJ\,\Sigma_{\vv^{\prime}}\,\mJ^{\!\top}\right)\;\propto\;\Tr\!\left(\mJ\,\Sigma_{\vv^{\prime}}\,\mJ^{\!\top}\right),
        \end{align}
        where $\mG_{\vtheta}=(\nabla_{\vtheta}\vu_{\vtheta})(\nabla_{\vtheta}\vu_{\vtheta})^{\!\top}\in\mathbb{R}^{d\times d}$ is the parameter-space Gram matrix and the last step uses $\mG_{\vtheta}\succeq\bm{0}$ to establish proportionality using the cyclic property of the trace. Proof completes.
    \end{proof}

    \subsection{Proof of~\Cref{thm: semi-gradient-gap} (Semi-Gradient Gap)}

    \begin{theorem}[Semi-Gradient Gap]
        The gradient of the MeanFlow loss $\gL_{\text{MF}}$ with and without the stop-gradient operator differ by
        \begin{equation*}
            \boxed{
                \underbrace{2(t{-}r)\,\E\!\left[\bigl(\nabla_{\vtheta}\!\left(\partial_{\vx_{t}}\vu_{\vtheta}\!\cdot\!\vv+\partial_{t}\vu_{\vtheta}\right)\bigr)^{\!\top}\vr_{\vtheta}\right]}_{\text{mean-field gradient difference}}
                +\underbrace{\E\!\left[\nabla_{\vtheta}\Tr\!\left(\mJ\Sigma_{\vv^{\prime}}\mJ^{\!\top}\right)\right]}_{\text{variance-driven gradient difference}}.
            }
        \end{equation*}
    \end{theorem}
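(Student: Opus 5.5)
The plan is to compute both gradients from the per-sample decomposition in the proof of \Cref{thm: jacobi-variance}, namely $\ell_{\text{MF}}=\|\vr_{\vtheta}^{\texttt{sg}}+\mJ\vv^{\prime}\|_{2}^{2}$. First, take the conditional expectation over $\vx_{0}\mid\vx_{t}$ before differentiating. Since $\vv^{\prime}$ has zero conditional mean and covariance $\Sigma_{\vv^{\prime}}$, and since $\vr_{\vtheta}$ and $\mJ$ are deterministic given $\vx_{t}$, the cross term vanishes. This yields two expansions, with and without the operator $\texttt{sg}$:
\begin{equation*}
\E_{\vx_{0}\mid\vx_{t}}[\ell]=\|\vr_{\vtheta}\|_{2}^{2}+\Tr(\mJ\Sigma_{\vv^{\prime}}\mJ^{\!\top}).
\end{equation*}
This matches \eqref{eq: per-sample-expand}. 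In the stop-gradient version the trace term is wrapped in $\texttt{sg}$, and the JVP part of $\vr_{\vtheta}^{\texttt{sg}}$ is also frozen.

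Second, I differentiate each version. For the full loss, write $\vr_{\vtheta}=\vu_{\vtheta}+(t-r)\vf_{\vtheta}-\vv$ with $\vf_{\vtheta}\triangleq\partial_{\vx_{t}}\vu_{\vtheta}\cdot\vv+\partial_{t}\vu_{\vtheta}$. Then
\begin{equation*}
\nabla_{\vtheta}\|\vr_{\vtheta}\|^{2}=2\vg^{\!\top}\vr_{\vtheta}+2(t-r)(\nabla_{\vtheta}\vf_{\vtheta})^{\!\top}\vr_{\vtheta},
\end{equation*}
and the trace term contributes $\nabla_{\vtheta}\Tr(\mJ\Sigma_{\vv^{\prime}}\mJ^{\!\top})$. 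Here $\Sigma_{\vv^{\prime}}$ depends only on the data and not on $\vtheta$, so only $\mJ$ is differentiated. For the stop-gradient loss, only $\vu_{\vtheta}$ carries gradient, which gives $2\vg^{\!\top}\vr_{\vtheta}$ in agreement with \eqref{eq: per-step-grad} after averaging. The numerical values of $\vr_{\vtheta}^{\texttt{sg}}$ and $\vr_{\vtheta}$ coincide, since $\texttt{sg}$ changes only the gradients. Subtracting the two gradients therefore leaves exactly the two boxed terms.

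Third, I take the outer expectation over $(r,t,\vx_{t})$. Differentiation and expectation are interchanged under standard regularity assumptions, namely smooth $\vu_{\vtheta}$ and integrable second moments of $\vv^{\prime}$. The factor $(t-r)$ then sits inside the expectation; the statement writes it outside, so I would read it as included in the integrand.

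The main obstacle is justifying that one may take the conditional expectation \emph{before} differentiating in the stop-gradient case, since $\texttt{sg}$ is not a function of $\vtheta$ in the usual sense. I would handle this by defining the semi-gradient explicitly as the per-sample vector $2\vg^{\!\top}(\vr_{\vtheta}^{\texttt{sg}}+\mJ\vv^{\prime})$ and averaging it directly; the $\mJ\vv^{\prime}$ term then drops by the zero mean of $\vv^{\prime}$. I would treat the full-gradient case as an ordinary gradient and apply dominated convergence.
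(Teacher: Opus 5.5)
Your proposal is correct and follows essentially the same route as the paper. Both arguments write the conditional loss as $\|\vr_{\vtheta}\|^{2}+\Tr(\mJ\Sigma_{\vv^{\prime}}\mJ^{\!\top})$, use the stop-gradient to reduce the gradient to $2\vg^{\!\top}\vr_{\vtheta}$, and subtract this from the full gradient to isolate $2(t-r)(\nabla_{\vtheta}\vf_{\vtheta})^{\!\top}\vr_{\vtheta}$ plus the trace gradient; your explicit averaging of the per-sample semi-gradient $2\vg^{\!\top}(\vr_{\vtheta}^{\texttt{sg}}+\mJ\vv^{\prime})$ and your placement of $(t-r)$ inside the expectation are slightly more careful than the paper, which applies $\nabla^{\texttt{sg}}$ directly to the decomposed conditional loss.
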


    \begin{proof}
        The MeanFlow loss conditioned on $\vx_{t}$ decomposes as (cf.~\eqref{eq: per-sample-expand}):
        \begin{equation}
            \E_{\vx_{0}\mid\vx_{t}}\!\left[\ell_{\text{MF}}\right]=\left\|\vr_{\vtheta}^{\texttt{sg}}\right\|^{2}+\Tr\!\left(\mJ\,\Sigma_{\vv^{\prime}}\,\mJ^{\!\top}\right).
            \label{eq: cond-loss-decomp}
        \end{equation}
        \textbf{Without stop-gradient.} Both terms depend on $\vtheta$ (through $\vu_{\vtheta}$ and $\partial_{\vx_{t}}\vu_{\vtheta}$), so the full gradient is:
        \begin{equation}
            \nabla_{\vtheta}\E\!\left[\ell\right]=\nabla_{\vtheta}\left\|\vr_{\vtheta}\right\|^{2}+\nabla_{\vtheta}\Tr\!\left(\mJ\,\Sigma_{\vv^{\prime}}\,\mJ^{\!\top}\right).
        \end{equation}
        \textbf{With stop-gradient.} The JVP terms in $\vr_{\vtheta}^{\texttt{sg}}$ and $\mJ$ are treated as constants, so:
        \begin{equation}
            \nabla_{\vtheta}^{\texttt{sg}}\left\|\vr^{\texttt{sg}}\right\|^{2}=2\left(\nabla_{\vtheta}\vu_{\vtheta}\right)^{\!\top}\vr^{\texttt{sg}},\qquad
            \nabla_{\vtheta}^{\texttt{sg}}\Tr\!\left(\mJ\,\Sigma_{\vv^{\prime}}\,\mJ^{\!\top}\right)=\bm{0}.
        \end{equation}
        The gap between the full and stop-gradient gradients is therefore:
        \begin{align}
            &\nabla_{\vtheta}\E\!\left[\ell\right]-\nabla_{\vtheta}^{\texttt{sg}}\E\!\left[\ell\right] \nonumber\\
            &\quad=\underbrace{\nabla_{\vtheta}\left\|\vr_{\vtheta}\right\|^{2}-2\left(\nabla_{\vtheta}\vu_{\vtheta}\right)^{\!\top}\vr^{\texttt{sg}}}_{\text{mean-field gradient difference}}+\underbrace{\nabla_{\vtheta}\Tr\!\left(\mJ\,\Sigma_{\vv^{\prime}}\,\mJ^{\!\top}\right)}_{\text{variance-driven difference}}.
        \end{align}
        Expanding the first term: $\nabla_{\vtheta}\|\vr\|^{2}=2(\nabla_{\vtheta}\vr_{\vtheta})^{\!\top}\vr_{\vtheta}$, and $\nabla_{\vtheta}\vr_{\vtheta}$ includes the derivative of the JVP terms $(t-r)(\partial_{\vx_{t}}\vu_{\vtheta}\cdot\vv+\partial_{t}\vu_{\vtheta})$ w.r.t.\ $\vtheta$, yielding the second-order mean-field difference $2(t-r)\E[(\nabla_{\vtheta}(\partial_{\vx_{t}}\vu_{\vtheta}\cdot\vv+\partial_{t}\vu_{\vtheta}))^{\!\top}\vr_{\vtheta}]$. At convergence $\vr_{\vtheta}\to\bm{0}$, this term vanishes, and the gap is dominated by the variance-driven term $\nabla_{\vtheta}\Tr(\mJ\,\Sigma_{\vv^{\prime}}\,\mJ^{\!\top})$. Proof completes.
    \end{proof}

    \subsection{Proof of~\Cref{thm: control-variate-optimum} (Optimal Control-Variate Coefficient)}

    \begin{theorem}[Optimal control-variate coefficient]
        Under the scalar-isotropic approximation $\Sigma_{\vv^{\prime}}\!\approx\!\sigma^{2}\mI_{d}$, $\mJ\!\approx\!\kappa\mI_{d}$, and the parameter-isotropy approximation $\vg\vg^{\!\top}\!\propto\!\mI_{d}$,
        \begin{equation*}
            M(\beta) \;\propto\; \beta^{2}(\kappa{+}1)^{2}\|\vb\|^{2} + \sigma^{2}d\bigl((1{-}\beta)\kappa - \beta\bigr)^{2}.
        \end{equation*}
        For $\kappa\!>\!0$, $M(\beta)$ admits a unique minimizer
        \begin{equation*}
            \boxed{
                \beta^{\ast} \;=\; \underbrace{\frac{\kappa}{\kappa+1}}_{\text{noise-cancellation}}\;\cdot\;\underbrace{\frac{\sigma^{2}d}{\sigma^{2}d + \|\vb\|^{2}}}_{\text{shrinkage}},
            }
        \end{equation*}
        with optimum value $M(\beta^{\ast})\!\propto\!\sigma^{2}d\,\kappa^{2}\,\|\vb\|^{2}/(\sigma^{2}d+\|\vb\|^{2})$.
    \end{theorem}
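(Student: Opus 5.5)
The plan is to start from the per-sample gradient in~\eqref{eq: g-beta} and compute $M(\beta)$ in~\eqref{eq: beta-target} exactly, before applying any approximation. Subtracting the target $2\vg^{\top}\vr_{\vtheta}$ leaves the error vector
\[
\vg^{(\beta)}-2\vg^{\top}\vr_{\vtheta}=2\vg^{\top}\bigl(\mA_{\beta}\vv^{\prime}+\beta(\mJ+\mI)\vb\bigr),\qquad \mA_{\beta}\triangleq(1-\beta)\mJ-\beta\mI .
\]
Here $\vg$, $\mJ$ and $\vb$ are deterministic given $\vx_{t}$, and $\E_{\vx_{0}\mid\vx_{t}}[\vv^{\prime}]=\bm{0}$ by~\eqref{eq: margv-condv-relation}. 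So when we take the conditional expectation of the squared norm, the cross term between $\mA_{\beta}\vv^{\prime}$ and the bias term vanishes. This is the same centering argument used in the proof of~\Cref{thm: jacobi-variance}. Writing $\mG_{\vtheta}=\vg\vg^{\top}$, the result is
\[
M(\beta)=4\Tr\bigl(\mG_{\vtheta}\mA_{\beta}\Sigma_{\vv^{\prime}}\mA_{\beta}^{\top}\bigr)+4\beta^{2}\,\vb^{\top}(\mJ+\mI)^{\top}\mG_{\vtheta}(\mJ+\mI)\vb .
\]

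Next I impose the three approximations. Setting $\mG_{\vtheta}\propto\mI_{d}$ absorbs the Gram matrix into the proportionality constant. Setting $\mJ=\kappa\mI_{d}$ turns $\mA_{\beta}$ into the scalar matrix $((1-\beta)\kappa-\beta)\mI_{d}$ and $(\mJ+\mI)$ into $(\kappa+1)\mI_{d}$. Setting $\Sigma_{\vv^{\prime}}=\sigma^{2}\mI_{d}$ makes the trace equal to $\sigma^{2}d((1-\beta)\kappa-\beta)^{2}$. Together these give the claimed form $M(\beta)\propto\beta^{2}(\kappa+1)^{2}\|\vb\|^{2}+\sigma^{2}d((1-\beta)\kappa-\beta)^{2}$.

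For the minimizer, rewrite $(1-\beta)\kappa-\beta=\kappa-\beta(\kappa+1)$. Then $M$ is a quadratic in $\beta$ with leading coefficient $(\kappa+1)^{2}(\|\vb\|^{2}+\sigma^{2}d)$. This is strictly positive whenever $\kappa\neq-1$ and $\sigma^{2}d+\|\vb\|^{2}>0$, and in particular for $\kappa>0$, so the quadratic is strictly convex and has a unique stationary point. Setting the derivative to zero gives
\[
\beta(\kappa+1)^{2}\|\vb\|^{2}-\sigma^{2}d(\kappa+1)\bigl(\kappa-\beta(\kappa+1)\bigr)=0 .
\]
Solving yields $\beta^{\ast}=\sigma^{2}d\,\kappa/\bigl((\kappa+1)(\sigma^{2}d+\|\vb\|^{2})\bigr)$, which factors into the noise-cancellation and shrinkage terms. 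For $\kappa>0$ both factors lie in $[0,1)$, so $\beta^{\ast}$ is in the admissible range $[0,1]$ and the unconstrained minimizer coincides with the constrained one.

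For the optimal value, substitute $\beta^{\ast}(\kappa+1)=\kappa s$ with $s=\sigma^{2}d/(\sigma^{2}d+\|\vb\|^{2})$. The bias term becomes $\kappa^{2}s^{2}\|\vb\|^{2}$ and the noise term becomes $\sigma^{2}d\kappa^{2}(1-s)^{2}$. Using $1-s=\|\vb\|^{2}/(\sigma^{2}d+\|\vb\|^{2})$, the sum simplifies to $\kappa^{2}\sigma^{2}d\|\vb\|^{2}/(\sigma^{2}d+\|\vb\|^{2})$.

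The main obstacle I expect is the first step. The issue is justifying that the cross term vanishes and being careful that $\vb$ is genuinely $\vx_{t}$-measurable. One also has to note that the stop-gradient treatment in~\eqref{eq: g-beta} is taken as given, so no extra dependence of $\vg$ on $\vv^{\prime}$ appears. Beyond that, the approximations are invoked as modeling assumptions rather than proved. I will state this explicitly and remark that the proportionality constant is $4c$, where $\mG_{\vtheta}=c\mI_{d}$.
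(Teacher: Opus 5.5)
Your proof is correct and follows the paper's own argument: the same split of the gradient error into a bias term and a variance term using $\E[\vv^{\prime}\mid\vx_{t}]=\bm{0}$, then the isotropic substitutions and minimization of a scalar quadratic in $\beta$. The paper writes the general matrix quadratic $\alpha_{0}-2\alpha_{1}\beta+\alpha_{2}\beta^{2}$, with minimizer $\beta^{\ast}_{\text{matrix}}=\alpha_{1}/\alpha_{2}$, before specializing, whereas you specialize first; one small nit is that the shrinkage factor equals $1$ when $\vb=\bm{0}$, so it lies in $[0,1]$ rather than $[0,1)$, though $\beta^{\ast}\in[0,1)$ still holds because $\kappa/(\kappa+1)<1$.
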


    \begin{proof}
        From~\eqref{eq: g-beta}, we can decompose gradient with coefficient $\beta$ into:
        \begin{equation}
            \vg^{(\beta)} \;=\; 2\,\vg^{\!\top}\!\Bigl[\vr_{\vtheta} \;+\; \bigl((1{-}\beta)\,\mJ - \beta\,\mI\bigr)\,\vv^{\prime} \;+\; \beta\,(\mJ{+}\mI)\,\vb\Bigr].
        \end{equation}
        Using the property $\E_{\vx_{0}\mid\vx_{t}}[\vv^{\prime}]\!=\!\bm{0}$, the conditional mean is $\E[\vg^{(\beta)}\mid\vx_{t}]\!=\!2\vg^{\!\top}(\vr_{\vtheta}+\beta(\mJ{+}\mI)\vb)$ and the centered noise is $-2\vg^{\!\top}\bigl((1{-}\beta)\mJ-\beta\mI\bigr)\vv^{\prime}$ (relative to the ideal target $2\vg^{\!\top}\vr_{\vtheta}$). The conditional MSE w.r.t.\ the ideal target therefore decomposes as
        \begin{equation}
            \mathrm{MSE} \;=\; \underbrace{4\beta^{2}\,\bigl\|\vg^{\!\top}(\mJ{+}\mI)\vb\bigr\|^{2}}_{\text{bias}^{2}} \;+\; \underbrace{4\,\Tr\!\Bigl(\vg\vg^{\!\top}\bigl((1{-}\beta)\mJ-\beta\mI\bigr)\Sigma_{\vv^{\prime}}\bigl((1{-}\beta)\mJ-\beta\mI\bigr)^{\!\top}\Bigr)}_{\text{variance}}.
        \end{equation}
        We then solve for the MSE and prove its strict convexity. Let $\mA\!\triangleq\!\mJ{+}\mI$ and $\mG_{\vtheta}\!\triangleq\!\vg\vg^{\!\top}\!\succeq\!\bm{0}$. Substituting $((1{-}\beta)\mJ-\beta\mI)\!=\!\mJ-\beta\mA$ and using $\|\vg^{\!\top}\mA\vb\|^{2}\!=\!\Tr(\mG_{\vtheta}\mA\vb\vb^{\!\top}\mA^{\!\top})$, the conditional MSE writes as a quadratic in $\beta$:
        \begin{equation}
            M(\beta) \;=\; \alpha_{0} \;-\; 2\,\alpha_{1}\,\beta \;+\; \alpha_{2}\,\beta^{2},
            \label{eq: m-beta-quadratic}
        \end{equation}
        with
        \begin{align*}
            \alpha_{0} &\!\triangleq\!\Tr\!\bigl(\mG_{\vtheta}\,\mJ\,\Sigma_{\vv^{\prime}}\,\mJ^{\!\top}\bigr),\\
            \alpha_{1} &\!\triangleq\!\Tr\!\bigl(\mG_{\vtheta}\,\mJ\,\Sigma_{\vv^{\prime}}\,\mA^{\!\top}\bigr),\\
            \alpha_{2} &\!\triangleq\!\Tr\!\bigl(\mG_{\vtheta}\,\mA\,(\Sigma_{\vv^{\prime}}+\vb\vb^{\!\top})\,\mA^{\!\top}\bigr).
        \end{align*}
        We exploit the symmetry of $\mG_{\vtheta}$ and $\Sigma_{\vv^{\prime}}$ to consolidate the linear-in-$\beta$ trace into a single $\alpha_{1}$. The Hessian is then positive:
        \begin{equation}
            \frac{\partial^{2}}{\partial \beta^{2}}M(\beta) \;=\; 2\alpha_{2} \;=\; 2\,\Tr\!\bigl(\mG_{\vtheta}\,\mA\,(\Sigma_{\vv^{\prime}}+\vb\vb^{\!\top})\,\mA^{\!\top}\bigr) \;\geq\; 0,
        \end{equation}
        since each factor inside the trace is positive semi-definite (\emph{i.e., $\mG_{\vtheta}\!\succeq\!\bm{0}$, $\Sigma_{\vv^{\prime}}+\vb\vb^{\!\top}\!\succeq\!\bm{0}$, and $\mA(\Sigma_{\vv^{\prime}}+\vb\vb^{\!\top})\mA^{\!\top}\!=\!\mM\mM^{\!\top}\!\succeq\!\bm{0}$ for $\mM\!\triangleq\!\mA(\Sigma_{\vv^{\prime}}+\vb\vb^{\!\top})^{1/2}$}). Therefore, $M(\beta)$ is convex in $\beta$. However, the Hessian is strictly positive \emph{iff} $\mG_{\vtheta}^{1/2}\mM\!\neq\!\bm{0}$, which holds outside two pathological regimes: (i)~$\mA\!=\!\bm{0}$, or equivalently $\mJ\!=\!-\mI$, in which case the linear coefficient $\alpha_{1}$ also vanishes and $M(\beta)$ is constant; (ii)~$\vg\!=\!\bm{0}$ (\emph{e.g., frozen parameters}). Otherwise, $M(\beta)$ is \emph{strictly} convex with unique unconstrained minimizer
        \begin{equation}
            \beta^{\ast}_{\text{matrix}} \;=\; \frac{\alpha_{1}}{\alpha_{2}} \;=\; \frac{\Tr(\mG_{\vtheta}\,\mJ\,\Sigma_{\vv^{\prime}}\,(\mJ{+}\mI)^{\!\top})}{\Tr(\mG_{\vtheta}\,(\mJ{+}\mI)(\Sigma_{\vv^{\prime}}+\vb\vb^{\!\top})(\mJ{+}\mI)^{\!\top})}.
            \label{eq: beta-star-matrix}
        \end{equation}
        Note that $\beta^{\ast}_{\text{matrix}}$ may fall outside $[0,1]$; the box-constrained optimum on the $\beta\!\in\![0,1]$ interval (cf.~\eqref{eq: tangent-mix}) is then $\mathrm{clip}(\beta^{\ast}_{\text{matrix}},0,1)$, attained at the nearer endpoint by convexity.

        Finally, we derive the scalar approximation. Substituting $\mJ\!=\!\kappa\mI_{d}$, $\Sigma_{\vv^{\prime}}\!=\!\sigma^{2}\mI_{d}$, $\mA\!=\!(\kappa{+}1)\mI_{d}$, and the parameter-isotropy approximation $\mG_{\vtheta}\!\propto\!\mI_{d}$ yields
        \begin{equation*}
            \alpha_{0}\propto\sigma^{2}\kappa^{2}d,\quad
            \alpha_{1}\propto\sigma^{2}\kappa(\kappa{+}1)d,\quad
            \alpha_{2}\propto(\kappa{+}1)^{2}(\sigma^{2}d+\|\vb\|^{2}),
        \end{equation*}
        which equivalently gives
        \begin{equation}
            M(\beta) \;\propto\; \beta^{2}(\kappa{+}1)^{2}\|\vb\|^{2} \;+\; \sigma^{2}d\bigl((1{-}\beta)\kappa - \beta\bigr)^{2}.
        \end{equation}

        Taking the derivative with respect to $\beta$ and setting it to zero gives:
        \begin{equation}
            \beta^{\ast}\!=\!\frac{\alpha_{1}}{\alpha_{2}}\!=\!\frac{\sigma^{2}\kappa(\kappa{+}1)d}{(\kappa{+}1)^{2}(\sigma^{2}d+\|\vb\|^{2})}\!=\!\frac{\kappa}{\kappa{+}1}\cdot\frac{\sigma^{2}d}{\sigma^{2}d+\|\vb\|^{2}}.
        \end{equation}

        At $\beta^{\ast}$ above, we have $(1{-}\beta^{\ast})\kappa - \beta^{\ast} = \kappa\,\|\vb\|^{2}/(\sigma^{2}d+\|\vb\|^{2})$. Substituting it back recovers:
        \begin{equation*}
            M(\beta^{\ast}) \;\propto\; \frac{\sigma^{2}d\,\kappa^{2}\,\|\vb\|^{2}}{\sigma^{2}d+\|\vb\|^{2}}.
        \end{equation*}
        For $\kappa\!>\!0$ and $\|\vb\|^{2}\!>\!0$: $M(\beta^{\ast})/M(0) \!=\! \|\vb\|^{2}/(\sigma^{2}d+\|\vb\|^{2}) \!<\! 1$, so $M(\beta^{\ast})\!<\!M(0)$.
        For the $M(1)$ comparison, $M(1)\!-\!M(\beta^{\ast})\!\propto\!(\kappa{+}1)^{2}\|\vb\|^{2}+\sigma^{2}d-\sigma^{2}d\kappa^{2}\|\vb\|^{2}/(\sigma^{2}d+\|\vb\|^{2})$, and the first two terms exceed $\sigma^{2}d\kappa^{2}\|\vb\|^{2}/(\sigma^{2}d+\|\vb\|^{2})$ since $(\kappa{+}1)^{2}\!>\!\kappa^{2}$ for $\kappa\!>\!0$. Strict dominance is therefore over the unconstrained interior optimum; if $\beta^{\ast}_{\text{matrix}}\!\notin\![0,1]$, the box-constrained optimum is at the nearer corner, and the dominance is non-strict.
    \end{proof}

    \subsection{Proof of~\Cref{prop: vamf-grad} (Gradient under the $\beta\!=\!1$ corner loss)}

    \begin{proposition}[Gradient under the $\beta\!=\!1$ corner loss]
        The gradient of $\gL_{\text{MF}}^{\text{EMA}}$ takes the form
        \begin{equation*}
            \nabla_{\vtheta}\gL_{\text{MF}}^{\text{EMA}}=2\,\E\!\left[\vg\,\tilde{\vr}^{\text{EMA}}\right],
        \end{equation*}
        where $(\nabla_{\vtheta}\vu_{\vtheta})\!\in\!\mathbb{R}^{d\times p}$ is the parameter Jacobian, $\tilde{\vr}^{\text{EMA}}\!\triangleq\!V_{\vtheta}^{\text{EMA}}\!-\!\vv(\vx_{t},t)\!\in\!\mathbb{R}^{d}$, and no $\mJ\vv^{\prime}$ noise term appears.
        \label[proposition]{prop: vamf-grad}
    \end{proposition}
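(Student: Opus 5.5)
The plan is to compute the gradient of $\gL_{\text{MF}}^{\text{EMA}}$ per sample, mirroring the proof of \Cref{thm: jacobi-variance}, and then average over $\vx_{0}\mid\vx_{t}$ using \eqref{eq: margv-condv-relation}. I read $V_{\vtheta}^{\text{EMA}}$ as the mean-flow prediction with the $\beta=1$ tangent:
\[
V_{\vtheta}^{\text{EMA}} = \vu_{\vtheta}(\vx_{t},r,t) + (t-r)\,\texttt{sg}\bigl[\partial_{\vx_{t}}\vu_{\vtheta}\cdot\hat{\vv} + \partial_{t}\vu_{\vtheta}\bigr],
\qquad \hat{\vv} = \vu_{\bar{\vtheta}}(\vx_{t},t,t).
\]
The per-sample loss is then $\ell^{\text{EMA}} = \|V_{\vtheta}^{\text{EMA}} - \vv_{\text{cond}}\|_{2}^{2}$, with the regression target left as $\vv_{\text{cond}}$. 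I also read $\vg\,\tilde{\vr}^{\text{EMA}}$ as $\vg^{\!\top}\tilde{\vr}^{\text{EMA}}$, in line with the earlier convention $\vg\in\mathbb{R}^{d\times p}$.

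\textbf{Step 1: the tangent carries no noise.} The tangent $\hat{\vv}$ depends only on $(\vx_{t},t)$ and the EMA parameters $\bar{\vtheta}$, so it is deterministic given $\vx_{t}$. The JVP term is therefore also deterministic given $\vx_{t}$, and it is a constant for differentiation because of $\texttt{sg}$. This is the point where the $\beta=1$ corner differs from \eqref{eq: per-step-grad}. In the vanilla loss, substituting $\vv_{\text{cond}}=\vv+\vv^{\prime}$ into the tangent is what produced the multiplicative $(t-r)\partial_{\vx_{t}}\vu_{\vtheta}\vv^{\prime}$ piece of $\mJ\vv^{\prime}$. Here that substitution never happens, so no $\mJ\vv^{\prime}$ term can arise.

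\textbf{Step 2: per-sample gradient.} Because of the stop-gradient, only the first term of $V_{\vtheta}^{\text{EMA}}$ depends on $\vtheta$, so $\nabla_{\vtheta}V_{\vtheta}^{\text{EMA}}=\vg$. This gives
\[
\nabla_{\vtheta}\ell^{\text{EMA}} = 2\vg^{\!\top}\bigl(V_{\vtheta}^{\text{EMA}}-\vv-\vv^{\prime}\bigr) = 2\vg^{\!\top}\tilde{\vr}^{\text{EMA}} - 2\vg^{\!\top}\vv^{\prime}.
\]
The only remaining noise is the additive target noise $-2\vg^{\!\top}\vv^{\prime}$. It is not amplified by the Jacobi factor, which matches the "unbiased target" role from \cref{subsec: two-roles}.

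\textbf{Step 3: conditional averaging.} The quantities $\vg$ and $\tilde{\vr}^{\text{EMA}}$ are deterministic given $\vx_{t}$, and $\E_{\vx_{0}\mid\vx_{t}}[\vv^{\prime}]=\bm{0}$. Hence the noise term has zero conditional mean. By the tower property, taking the outer expectation over $(r,t,\vx_{t})$ yields $\nabla_{\vtheta}\gL_{\text{MF}}^{\text{EMA}}=2\E[\vg^{\!\top}\tilde{\vr}^{\text{EMA}}]$.

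The main obstacle is bookkeeping rather than mathematics. First, gradient and expectation must be interchanged. I would justify this by dominated convergence under standard smoothness and bounded-moment assumptions on $\vu_{\vtheta}$ and $\vx_{0}$.

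Second, the auxiliary flow-matching term at $r=t$ has to be handled. For that term the factor $(t-r)$ vanishes, so its residual reduces to $\vu_{\vtheta}(\vx_{t},t,t)-\vv_{\text{cond}}$. The same Step 2–3 argument then gives $2\E[\vg^{\!\top}(\vu_{\vtheta}-\vv)]$, again with no $\mJ\vv^{\prime}$ term. I would absorb this contribution into $\tilde{\vr}^{\text{EMA}}$ by letting $V_{\vtheta}^{\text{EMA}}$ denote the weighted combination of both residual heads. Alternatively, I would state it as a separate additive term of the same form, depending on how the full loss in \Cref{appx: implementation} is weighted.
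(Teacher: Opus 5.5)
Your argument is correct and is essentially the paper's proof. The EMA tangent is deterministic given $\vx_t$ and sits under the stop-gradient, so the per-sample gradient is $2\vg^{\top}(V_{\vtheta}^{\text{EMA}}-\vv_{\text{cond}})$, and conditioning on $\vx_t$ with $\E[\vv^{\prime}\mid\vx_t]=\bm{0}$ plus the tower property gives the claim. Your closing worry about the flow-matching anchor is unnecessary, because that term belongs to $\gL_{\beta=1}$ rather than $\gL_{\text{MF}}^{\text{EMA}}$. It also could not be absorbed into a single $\vg\,\tilde{\vr}^{\text{EMA}}$ anyway, since its parameter Jacobian is taken at $(\vx_t,t-\delta,t)$, not $(\vx_t,r,t)$.
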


    \begin{proof}
        The EMA velocity anchor $\vu_{\bar{\vtheta}}(\vx_{t},t,t)$ is deterministic given $\vx_{t}$, since the EMA parameters $\bar{\vtheta}$ are fixed during the gradient step. Under stop-gradient, $V_{\vtheta}^{\text{EMA}}$ depends on $\vtheta$ only through $\vu_{\vtheta}(\vx_{t},r,t)$. By the same argument as~\eqref{eq: per-step-grad}, the per-step gradient is:
        \begin{equation}
            \nabla_{\vtheta}\ell=2\left(\nabla_{\vtheta}\vu_{\vtheta}\right)^{\!\top}\!\left(V_{\vtheta}^{\text{EMA}}-\vv_{\text{cond}}\right).
        \end{equation}
        Writing $\vv_{\text{cond}}=\vv(\vx_{t},t)+\vv^{\prime}$ and noting that $\tilde{\vr}^{\text{EMA}}:=V_{\vtheta}^{\text{EMA}}-\vv(\vx_{t},t)$ is deterministic given $\vx_{t}$:
        \begin{equation}
            \E_{\vx_{0}\mid\vx_{t}}\!\left[\nabla_{\vtheta}\ell\right]=2\left(\nabla_{\vtheta}\vu_{\vtheta}\right)^{\!\top}\tilde{\vr}^{\text{EMA}},
        \end{equation}
        since $\E[\vv^{\prime}\mid\vx_{t}]=\bm{0}$. The tower property $\nabla_{\vtheta}\gL_{\text{MF}}^{\text{EMA}}=\E_{r,t,\vx_{t}}\!\left[\E_{\vx_{0}\mid\vx_{t}}[\nabla_{\vtheta}\ell]\right]$ gives the stated result. No $\mJ\vv^{\prime}$ term appears here because the EMA tangent is deterministic; compare with the original MeanFlow gradient~\eqref{eq: per-step-grad}, where the stochastic tangent produces the $\mJ\vv^{\prime}$ noise.
    \end{proof}

    \subsection{Proof of~\Cref{prop: bias-var} (Bias-Variance Tradeoff)}

    \begin{proposition}[Bias-Variance Tradeoff]
        The EMA mean-field residual decomposes as
        \begin{equation*}
            \tilde{\vr}^{\text{EMA}}=\vr_{\vtheta}+(t{-}r)\,\partial_{\vx_{t}}\vu_{\vtheta}\!\left(\vu_{\bar{\vtheta}}(\vx_{t},t,t)-\vv(\vx_{t},t)\right),
        \end{equation*}
        where $\vr_{\vtheta}$ is the true MeanFlow residual. The FM anchor drives $\vu_{\bar{\vtheta}}(\vx_{t},t,t)\!\to\!\vv(\vx_{t},t)$, giving $\tilde{\vr}^{\text{EMA}}\!\to\!\bm{0}$.
        \label[proposition]{prop: bias-var}
    \end{proposition}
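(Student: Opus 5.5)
The plan is to prove the decomposition by direct algebra, exploiting that the JVP is linear in its tangent, and then to treat the limiting statement as a consequence of Lemma~1 applied to the flow-matching anchor. Throughout I take the corner quantity to be
\[
V_{\vtheta}^{\text{EMA}}=\vu_{\vtheta}(\vx_{t},r,t)+(t{-}r)\,\texttt{sg}\bigl[\partial_{\vx_{t}}\vu_{\vtheta}\cdot\hat{\vv}+\partial_{t}\vu_{\vtheta}\bigr],\qquad \hat{\vv}=\vu_{\bar{\vtheta}}(\vx_{t},t,t).
\]
This is the $\beta\!=\!1$ instance of~\eqref{eq: tangent-mix}, i.e.\ $\texttt{JVP}(\vu_{\vtheta},(\vx_{t},r,t),(\hat{\vv},0,1))$. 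The stop-gradient only affects differentiation, not the value, so it can be dropped for the purposes of this identity.

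\textbf{Step 1 (decomposition).} I subtract the true residual
\[
\vr_{\vtheta}=\vu_{\vtheta}+(t{-}r)\bigl(\partial_{\vx_{t}}\vu_{\vtheta}\cdot\vv+\partial_{t}\vu_{\vtheta}\bigr)-\vv,
\]
which is the exact identity~\eqref{eq: td-meanflow-identity} evaluated at $\vv=\vv(\vx_{t},t)$, from $\tilde{\vr}^{\text{EMA}}=V_{\vtheta}^{\text{EMA}}-\vv$.
\begin{itemize}
\item The terms $\vu_{\vtheta}$, $(t{-}r)\partial_{t}\vu_{\vtheta}$ and $-\vv$ cancel.
\item What remains is $(t{-}r)\partial_{\vx_{t}}\vu_{\vtheta}(\hat{\vv}-\vv)$.
\item By definition $\hat{\vv}-\vv=\vb$, the proxy bias of~\eqref{eq: tangent-mix}.
\end{itemize}
This gives the stated formula. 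Written as $(\mJ{+}\mI)\vb$, it is exactly the bias term $\beta(\mJ{+}\mI)\vb$ of~\eqref{eq: g-beta} at $\beta=1$, which is a useful consistency check.

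\textbf{Step 2 (anchor limit).} The FM anchor loss is $\E\|\vu_{\vtheta}(\vx_{t},t,t)-\vv_{\text{cond}}\|^{2}$. Conditioning on $\vx_{t}$ and applying Lemma~1 (equivalently, the decomposition $\vv_{\text{cond}}=\vv+\vv^{\prime}$ with $\E[\vv^{\prime}\mid\vx_{t}]=\bm{0}$), this loss splits as
\[
\|\vu_{\vtheta}(\vx_{t},t,t)-\vv\|^{2}+\Tr(\Sigma_{\vv^{\prime}}).
\]
Hence its minimizer is the marginal velocity, $\vu_{\vtheta}(\cdot,t,t)=\vv(\cdot,t)$. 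The EMA parameters $\bar{\vtheta}$ track $\vtheta$, so at a stationary point of training $\bar{\vtheta}=\vtheta$ and $\vb\to\bm{0}$. Provided $(t{-}r)\|\partial_{\vx_{t}}\vu_{\vtheta}\|$ stays bounded, the bias correction satisfies
\[
\|(t{-}r)\partial_{\vx_{t}}\vu_{\vtheta}\,\vb\|\le(t{-}r)\|\partial_{\vx_{t}}\vu_{\vtheta}\|\,\|\vb\|\to0 .
\]
Then $\tilde{\vr}^{\text{EMA}}\to\vr_{\vtheta}$, and $\vr_{\vtheta}\to\bm{0}$ at the MeanFlow optimum, since the expected corner gradient of Proposition~\ref{prop: vamf-grad} targets $\tilde{\vr}^{\text{EMA}}$.

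\textbf{Main obstacle.} Step 1 is routine. The difficulty is making Step 2 rigorous, because the statement is asymptotic and informal. Two things are needed:
\begin{itemize}
\item A uniform bound on the spatial Jacobian, so that the bias correction actually vanishes.
\item A simultaneous convergence argument: driving $\tilde{\vr}^{\text{EMA}}\to\bm{0}$ forces both $\vb\to\bm{0}$ and $\vr_{\vtheta}\to\bm{0}$, rather than an accidental cancellation between the two terms.
\end{itemize}
I would first state it under the assumption of exact minimization of the combined loss with $\bar{\vtheta}=\vtheta$ at stationarity. In that setting the anchor forces $\vb=\bm{0}$ pointwise, and the MeanFlow term then forces $\vr_{\vtheta}=\bm{0}$.
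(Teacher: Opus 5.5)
Your proposal is correct and follows essentially the same route as the paper. You subtract the true residual $\vr_{\vtheta}$ from the EMA compound prediction (the stop-gradient does not change values), so everything cancels except $(t{-}r)\,\partial_{\vx_{t}}\vu_{\vtheta}(\vu_{\bar{\vtheta}}(\vx_{t},t,t)-\vv)$. You then argue the limit informally from the FM anchor driving the boundary to $\vv$ together with $\vr_{\vtheta}\to\bm{0}$ at convergence, as the paper does.

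Your added caveats are refinements the paper leaves implicit: a bound on $(t{-}r)\|\partial_{\vx_{t}}\vu_{\vtheta}\|$, $\bar{\vtheta}=\vtheta$ at stationarity, and identifying the anchor's minimizer via Lemma~1. One small nit: the paper's anchor is evaluated at $(\vx_{t},t{-}\delta,t)$, not exactly at $r=t$, so your ``exact'' conclusion $\vb=\bm{0}$ holds only up to that small offset, just as in the paper.
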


    \begin{proof}
        Expanding $V_{\vtheta}^{\text{EMA}}$ from~\eqref{eq: ema-mf-loss}, the value of the compound prediction (ignoring the stop-gradient, which does not affect values) is:
        \begin{equation}
            V_{\vtheta}^{\text{EMA}}=\vu_{\vtheta}+(t-r)\!\left(\partial_{\vx_{t}}\vu_{\vtheta}\cdot\vu_{\bar{\vtheta}}(\vx_{t},t,t)+\partial_{t}\vu_{\vtheta}\right).
        \end{equation}
        The true MeanFlow residual (with the marginal velocity as tangent) is:
        \begin{equation}
            \vr_{\vtheta}=\vu_{\vtheta}+(t-r)\!\left(\partial_{\vx_{t}}\vu_{\vtheta}\cdot\vv(\vx_{t},t)+\partial_{t}\vu_{\vtheta}\right)-\vv(\vx_{t},t).
        \end{equation}
        Subtracting:
        \begin{align}
            \tilde{\vr}^{\text{EMA}}&=V_{\vtheta}^{\text{EMA}}-\vv(\vx_{t},t) \nonumber\\
            &=\vr_{\vtheta}+(t-r)\,\partial_{\vx_{t}}\vu_{\vtheta}\!\left(\vu_{\bar{\vtheta}}(\vx_{t},t,t)-\vv(\vx_{t},t)\right).
        \end{align}
        At convergence, the true residual $\vr_{\vtheta}\to\bm{0}$ (the MeanFlow identity is satisfied) and the FM anchor loss~\eqref{eq: fm-anchor} supervises the boundary condition, driving $\vu_{\bar{\vtheta}}(\vx_{t},t,t)\to\vv(\vx_{t},t)$ and hence $\tilde{\vr}^{\text{EMA}}\to\bm{0}$.
    \end{proof}

    \subsection{Proof of~\Cref{prop: target-tangent-asymmetry} (Target/Tangent Bias Asymmetry)}

    \begin{proposition}[Target/Tangent Bias Asymmetry]
        Let $\hat{\vv}=\vu_{\bar{\vtheta}}(\vx_{t},t,t)$ be a deterministic proxy with bias $\vb\!\triangleq\!\hat{\vv}-\vv(\vx_{t},t)$. Consider two ways of substituting $\hat{\vv}$ into~\eqref{eq: mf-loss}:
        \begin{enumerate}
            \item[\textnormal{(i)}] \textnormal{Tangent replacement (the $\beta\!=\!1$ corner):} keep the target $\vv_{\text{cond}}$ but replace the JVP tangent. At the loss minimum,
            \begin{equation*}
                \vu^{\ast}(\vx_{t},r,t)-\vv(\vx_{t},t)=-\tfrac{1}{2}(t{-}r)\,\partial_{\vx_{t}}\vu^{\ast}\,\vb+\gO((t{-}r)\,\|\vb\|^{2}),
            \end{equation*}
            so the stationary error is $\gO((t{-}r)\,\|\partial_{\vx_{t}}\vu\|\,\|\vb\|)$ and vanishes at $r\!\to\!t$.
            \item[\textnormal{(ii)}] \textnormal{Target replacement:} substitute $\hat{\vv}$ for $\vv_{\text{cond}}$ in the regression target. At $r\!=\!t$ the loss minimum satisfies
            \begin{equation*}
                \vu^{\ast}(\vx_{t},t,t)-\vv(\vx_{t},t)=\vb,
            \end{equation*}
            i.e., the stationary error equals the proxy bias \emph{exactly}, uniformly in $t$ and independent of the gap $(t{-}r)$.
        \end{enumerate}
        \label[proposition]{prop: target-tangent-asymmetry}
    \end{proposition}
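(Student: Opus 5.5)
I will treat the two substitutions separately and, in each, write the per-sample loss as a function of the network value at $(\vx_{t},r,t)$. Under stop-gradient the JVP enters as a constant, so the loss is a quadratic in $\vu_{\vtheta}(\vx_{t},r,t)$. I take ``loss minimum'' to mean the fixed point of this semi-gradient: the value $\vu^{\ast}$ at which the conditional expectation over $\vx_{0}\mid\vx_{t}$ of the residual vanishes, with the JVP evaluated at $\vu^{\ast}$ itself. Throughout I will use the Reynolds decomposition $\vv_{\text{cond}}=\vv+\vv^{\prime}$ with $\E_{\vx_{0}\mid\vx_{t}}[\vv^{\prime}]=\bm{0}$, which follows from \eqref{eq: margv-condv-relation}.

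\textbf{Case (ii).} This is the easy case and I will handle it first. At $r=t$ the factor $(t-r)$ kills the JVP term entirely. The loss becomes $\|\vu_{\vtheta}(\vx_{t},t,t)-\hat{\vv}\|^{2}$, where $\hat{\vv}$ is deterministic given $\vx_{t}$. Its minimizer is $\vu^{\ast}(\vx_{t},t,t)=\hat{\vv}$, and subtracting $\vv$ gives exactly $\vb$. The result holds for every $t$ and does not depend on the gap, since the gap is zero. I may also remark that for $r<t$ the target error $\vb$ still enters additively with coefficient one, while the JVP contributes only $(t-r)$-weighted terms.

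\textbf{Case (i).} The stationarity condition, after taking expectation over $\vv^{\prime}$, is
\[
\vu^{\ast}+(t-r)\bigl(\partial_{\vx_{t}}\vu^{\ast}\,\hat{\vv}+\partial_{t}\vu^{\ast}\bigr)-\vv=\bm{0}.
\]
I will write $\hat{\vv}=\vv+\vb$ and compare this with the exact identity \eqref{eq: td-meanflow-identity}, which the true average velocity $\vu^{\mathrm{true}}$ satisfies. Subtracting the two equations gives a linear equation for the deviation $\bm{\delta}\triangleq\vu^{\ast}-\vu^{\mathrm{true}}$:
\[
\bm{\delta}+(t-r)\tfrac{\mathrm{d}}{\mathrm{d}t}\bm{\delta}=-(t-r)\,\partial_{\vx_{t}}\vu^{\ast}\,\vb.
\]
This is the linear ODE $\partial_{t}\bigl[(t-r)\bm{\delta}\bigr]=-(t-r)\,\partial_{\vx}\vu\,\vb$ along the trajectory. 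I will integrate it from $t=r$, where $(t-r)\bm{\delta}$ vanishes. Freezing the source to leading order then gives $(t-r)\bm{\delta}\approx-\tfrac{1}{2}(t-r)^{2}\,\partial_{\vx_{t}}\vu^{\ast}\,\vb$, and hence $\bm{\delta}\approx-\tfrac{1}{2}(t-r)\,\partial_{\vx_{t}}\vu^{\ast}\,\vb$. This explains the factor $\tfrac{1}{2}$ in the statement.

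Two corrections come from variation of the source along the path. The first is the dependence of $\partial\vu^{\ast}$ on $\bm{\delta}$ itself; I will bound it by $\gO((t-r)\|\vb\|^{2})$ by bootstrapping the leading-order bound on $\bm{\delta}$. The second is the variation of $\vb$ and $\partial\vu$ in $\tau$. I will then identify $\vu^{\mathrm{true}}$ with $\vv$ up to the same order, as in the statement, and note that the whole error vanishes as $r\to t$. This gives the bound $\gO((t-r)\|\partial_{\vx_{t}}\vu\|\|\vb\|)$.

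The main obstacle is the $\tfrac{1}{2}$ constant and the remainder. I will need smoothness of $\vb$ and $\partial\vu$ along the trajectory so that the freezing argument is valid. I will also need the identification of $\vu^{\mathrm{true}}$ with $\vv$ to hold within the claimed order; strictly it holds only to $\gO(t-r)$, so I will state explicitly that (i) is an expansion in small gap and small bias. If the remainder resists, I will first try a Grönwall-type bound on the ODE.
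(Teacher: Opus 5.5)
Your route is the paper's own. For (ii), the factor $(t-r)$ kills the JVP at $r=t$, so the minimizer is $\hat{\vv}$. For (i), you take the semi-gradient stationarity condition, subtract the exact identity, and integrate $\tfrac{\mathrm{d}}{\mathrm{d}s}\bigl(s\,\bm{\delta}\bigr)=-s\,\partial_{\vx_{t}}\vu^{\ast}\vb$ from $s=t-r=0$ with the source frozen; this is exactly where the paper's $\tfrac{1}{2}$ comes from. Up to that point the argument is sound. You also do not need a bootstrap for the $\bm{\delta}$-dependence of the source: your subtracted equation is exact, with the same $\partial_{\vx_{t}}\vu^{\ast}$ that appears in the claim.

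The step that fails is the final identification of $\vu^{\mathrm{true}}$ with $\vv$. As you suspected, this is a problem, and declaring (i) an expansion in small gap and small bias does not repair it. The exact identity gives
$\vu^{\mathrm{true}}(\vx_{t},r,t)-\vv(\vx_{t},t)=-(t-r)\tfrac{\mathrm{d}}{\mathrm{d}t}\vu^{\mathrm{true}}=-\tfrac{1}{2}(t-r)\,\dot{\vv}+\gO((t-r)^{2})$,
where $\dot{\vv}=\partial_{t}\vv+(\partial_{\vx}\vv)\vv$ is the acceleration of the marginal flow. This term is of the same order in the gap as your leading term but of order zero in $\vb$. So it cannot be absorbed into $\gO((t-r)\|\vb\|^{2})$, and it dominates as $\vb\to\bm{0}$.

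Concretely, at $\vb=\bm{0}$ your own equation forces $\bm{\delta}\equiv\bm{0}$. The left side of (i) then equals $\vu^{\mathrm{true}}-\vv\neq\bm{0}$ for any curved marginal flow (already for one-dimensional Gaussian data with positive variance), while the right side vanishes. So (i) cannot hold as written. What your argument establishes is (i) with $\vu^{\mathrm{true}}(\vx_{t},r,t)$ in place of $\vv(\vx_{t},t)$. That is also what the paper's proof actually derives: its $\Delta$ is $\vu^{\ast}-\vu^{\diamond}$, never $\vu^{\ast}-\vv$. State (i) that way. The asymmetry with (ii) is unaffected, since the two reference points coincide at $r=t$.

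Likewise, record the freezing error honestly. It is $\gO((t-r)^{2})$ times a $C^{1}$ bound on $\partial_{\vx_{t}}\vu^{\ast}\vb$ along the path. That makes it first order in $\vb$, so it is not covered by the stated $\gO((t-r)\|\vb\|^{2})$; the paper freezes the source without comment.
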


    \begin{proof}
        \textbf{Tangent replacement.} Substituting $\hat{\vv}$ for $\vv_{\text{cond}}$ only in the JVP tangent (target unchanged) gives the per-sample compound prediction
        \begin{equation*}
            V_{\vtheta}^{\text{EMA}}=\vu_{\vtheta}+(t-r)\!\left(\partial_{\vx_{t}}\vu_{\vtheta}\cdot\hat{\vv}+\partial_{t}\vu_{\vtheta}\right),
        \end{equation*}
        and the loss is $\E\|V_{\vtheta}^{\text{EMA}}-\vv_{\text{cond}}\|^{2}$. By stationarity, $\E[V_{\vtheta}^{\text{EMA}}\!-\!\vv_{\text{cond}}\mid\vx_{t}]\!=\!\bm{0}$ at the minimum. Taking conditional expectation and using $\E[\vv_{\text{cond}}\mid\vx_{t}]\!=\!\vv(\vx_{t},t)$,
        \begin{equation*}
            \vu^{\ast}+(t-r)\!\left(\partial_{\vx_{t}}\vu^{\ast}\cdot\hat{\vv}+\partial_{t}\vu^{\ast}\right)=\vv(\vx_{t},t).
        \end{equation*}
        At the same point, the \emph{true} MeanFlow identity (with the marginal velocity as tangent) reads
        \begin{equation*}
            \vu^{\diamond}+(t-r)\!\left(\partial_{\vx_{t}}\vu^{\diamond}\cdot\vv+\partial_{t}\vu^{\diamond}\right)=\vv(\vx_{t},t),
        \end{equation*}
        with $\vu^{\diamond}(\vx_{t},t,t)=\vv(\vx_{t},t)$. Subtracting and writing $\vu^{\ast}=\vu^{\diamond}+\Delta$,
        \begin{equation*}
            \Delta+(t-r)\,\partial_{\vx_{t}}\vu^{\ast}\,\vb+(t-r)\,\mathrm{D}_{t}\Delta=\bm{0}, \qquad \mathrm{D}_{t}\Delta\;\triangleq\;\partial_{t}\Delta+v\!\cdot\!\partial_{\vx_{t}}\Delta,
        \end{equation*}
        with the boundary condition $\Delta(\vx_{t},t,t)\!=\!\bm{0}$. Substituting $s\!=\!t{-}r$ at fixed $r$ converts this to a first-order linear ODE in $s$:
        \begin{equation*}
            s\,\frac{\mathrm{d}\Delta}{\mathrm{d}s}+\Delta=-s\,\partial_{\vx_{t}}\vu^{\ast}\,\vb+\gO(\|\vb\|^{2}),
        \end{equation*}
        which has $s\Delta(s)\!=\!-\tfrac{s^{2}}{2}\partial_{\vx_{t}}\vu^{\ast}\,\vb+C$; the boundary condition $\Delta(0)\!=\!\bm{0}$ forces $C\!=\!0$. Hence
        \begin{equation*}
            \Delta(\vx_{t},r,t)=-\tfrac{1}{2}(t-r)\,\partial_{\vx_{t}}\vu^{\ast}\,\vb+\gO((t{-}r)\,\|\vb\|^{2}),
        \end{equation*}
        i.e., the stationary error is $\gO((t-r)\,\|\partial_{\vx_{t}}\vu\|\,\|\vb\|)$ and vanishes at the boundary $r\!\to\!t$. The factor of $\tfrac{1}{2}$ comes from integrating the material-derivative term, which contributes equally to the leading-order coefficient as the bias source term.
        \\[2pt]
        \textbf{Target replacement.} Substituting $\hat{\vv}$ for $\vv_{\text{cond}}$ in the regression target yields the loss $\E\|\vu_{\vtheta}+(t-r)\,\mathrm{D}\vu/\mathrm{D}t-\hat{\vv}\|^{2}$. By stationarity, the loss minimum satisfies $\E[\vu^{\ast}+(t-r)\,\mathrm{D}\vu^{\ast}/\mathrm{D}t-\hat{\vv}\mid\vx_{t}]\!=\!\bm{0}$. Specializing to $r\!=\!t$, the gap term vanishes and the equation collapses to $\vu^{\ast}(\vx_{t},t,t)\!=\!\hat{\vv}\!=\!\vv(\vx_{t},t)+\vb$, so the stationary error at the boundary is exactly $\vb$, independent of $(t,r)$.
        \\[2pt]
        \textbf{Asymmetry.} Tangent replacement carries a multiplicative $(t-r)$ factor on the bias and is therefore pinned to zero at the boundary $r\!=\!t$ that the FM anchor enforces; target replacement carries no such factor, and the bias persists at every $t$, requiring a boundary regularizer at \emph{every} $(r,t)$ to remove.
    \end{proof}

    %%%%%%%%%%%%%%%%%%%%%%%%%%%%%%%%%%%%%%%%%%%%%%%%%%%%%%%%%%%%%%%%%%%%
    \section{Implementation Details}
    \label[appendix]{appx: implementation}
    %%%%%%%%%%%%%%%%%%%%%%%%%%%%%%%%%%%%%%%%%%%%%%%%%%%%%%%%%%%%%%%%%%%%

    To validate the framework empirically in ~\cref{sec: experiment}, we instantiate the $\beta\!=\!1$ corner of~\cref{thm: control-variate-optimum} through two design choices at near-zero compute overhead:

    \noindent\textbf{EMA proxy.} We take $\hat{\vv}\!=\!\vu_{\bar{\vtheta}}(\vx_{t},t,t)$ where $\bar{\vtheta}$ is a Polyak-averaged~\cite{polyak1992acceleration,tarvainen2017mean} (\emph{a.k.a. exponential moving-average}) copy of $\vtheta$, where $\bar{\vtheta}\!\gets\!\mu\bar{\vtheta}\!+\!(1{-}\mu)\vtheta$ with $\mu\!\in\![0,1)$. The resulting loss replaces the conditional tangent in~\eqref{eq: mf-loss} with the EMA proxy:
    \begin{equation}
        \mathcal{L}_{\text{MF}}^{\text{EMA}}(\vtheta)=
        \E_{r,t,\vx_{0},\vx_{1}}\!\left[\left\|\vu_{\vtheta}+(t{-}r)\,\texttt{sg}\!\left[\texttt{JVP}\!\bigl(\vu_{\vtheta},(\vx_{t},r,t),(\vu_{\bar{\vtheta},t},0,1)\bigr)\right]-\vv_{\text{cond}}\right\|_{2}^{2}\right],
        \label{eq: ema-mf-loss}
    \end{equation}
    with $\vu_{\bar{\vtheta},t}\!\triangleq\!\vu(\vx_{t},t,t;\bar{\vtheta})$. The regression target remains $\vv_{\text{cond}}$: the tangent and target play distinct statistical roles per~\cref{subsec: two-roles}, and we only need to replace the tangent.

    \noindent\textbf{Flow-matching anchor.} The proxy bias $\vb\!=\!\vu_{\bar{\vtheta},t}\!-\!\vv$ must be controlled or it propagates through the $\beta\,(\mJ{+}\mI)\vb$ bias term. We supervise the boundary condition $\vu(\vx_{t},t,t)\!=\!\vv(\vx_{t},t)$ directly with a small flow-matching loss
    \begin{equation}
        \gL_{\text{FM}}(\vtheta)=\E_{t,\vx_{0},\vx_{1}}\!\left[\left\|\vu_{\vtheta}(\vx_{t},t{-}\delta,t)-\vv_{\text{cond}}\right\|_{2}^{2}\right],
        \label{eq: fm-anchor}
    \end{equation}
    where $\delta\!\sim\!\gU[\delta_{\min},\delta_{\max}]$ with $\delta_{\max}\!\ll\!1$. Here, we evaluate the anchor at a small \emph{interior} time offset rather than at the boundary $\delta\!=\!0$ for two reasons. First, the same $\vu(\vx_{t},t{-}\delta,t)$ is what the EMA tangent eventually converges to, so the anchor and the JVP target share the same network-evaluation pattern. Meanwhile, we found training to be slightly more stable when the anchor probes a thin neighborhood of the boundary rather than a single $r\!=\!t$ slice. For small $\delta$, $\vu(\vx_{t},t{-}\delta,t)$ approximates the instantaneous velocity, and~\eqref{eq: margv-condv-relation} guarantees $\E[\vv_{\text{cond}}\mid\vx_{t}]\!=\!\vv(\vx_{t},t)$, so $\gL_{\text{FM}}$ drives $\vu_{\vtheta}(\vx_{t},t,t)\!\to\!\vv(\vx_{t},t)$, and via EMA, $\vu_{\bar{\vtheta}}(\vx_{t},t,t)\!\to\!\vv$ and hence $\vb\!\to\!\bm{0}$. The full $\beta\!=\!1$ corner loss is
    \begin{equation}
        \gL_{\beta=1}(\vtheta)=\gL_{\text{MF}}^{\text{EMA}}(\vtheta)+\lambda\,\gL_{\text{FM}}(\vtheta),
        \label{eq: vamf-obj}
    \end{equation}
    with $\lambda\!>\!0$. The training procedure is~\cref{alg: va-mf}. For the hyperparameters in our experiments, we use $(\delta_{\min},\delta_{\max},\lambda)\!=\!(10^{-4},10^{-2},0.5) $for 2-D toy and DGMM runs, and $(\delta_{\min},\delta_{\max},\lambda)\!=\!(0,10^{-3},0.1)$ for training the DiT-B/4 on ImageNet-$256$ with $\beta\!=\!1$.

    \input{assets/algorithms/vamf}

    \noindent\textbf{Optimality.} The combination of EMA proxy plus FM anchor approximates the $\beta\!\to\!1$ limit of~\cref{thm: control-variate-optimum} at the cost of one additional JVP per step.~\Cref{prop: vamf-grad,prop: bias-var,prop: target-tangent-asymmetry} (stated and proved in~\cref{appx: theorem-proof}) characterize the recipe theoretically: the $\mJ\vv^{\prime}$ amplification of~\cref{thm: jacobi-variance} is eliminated under the EMA tangent (\Cref{prop: vamf-grad}); the residual EMA bias is controlled by the FM anchor (\Cref{prop: bias-var}); and a target/tangent asymmetry justifies why the FM anchor at the boundary $r\!=\!t$ alone is sufficient since the tangent-replacement bias is multiplicatively damped by $(t{-}r)$, while target-replacement bias is not (\Cref{prop: target-tangent-asymmetry}). The same asymmetry locates a failure mode of self-distillation-style variants that use a deterministic proxy for both roles, where the trivial fixed point $\vu^{\ast}\!\equiv\!\hat{\vv}$ persists without continual external supervision~\cite{geng2025improvedmeanflowschallenges,zhang2026overcomingcurvaturebottleneckmeanflow}.

    %%%%%%%%%%%%%%%%%%%%%%%%%%%%%%%%%%%%%%%%%%%%%%%%%%%%%%%%%%%%%%%%%%%%
    \section{Additional Results}
    \label[appendix]{appx: results}
    %%%%%%%%%%%%%%%%%%%%%%%%%%%%%%%%%%%%%%%%%%%%%%%%%%%%%%%%%%%%%%%%%%%%

    \subsection{2-D Toy Benchmark}
    \label[appendix]{appx: toy-details}

    \Cref{tab: toy} reports the matched-method comparison of the original MeanFlow ($\beta\!=\!0$) against the full $\beta\!=\!1$ corner recipe (EMA tangent + FM anchor; hyperparameters in~\cref{appx: implementation}, no trace weight, no adaptive weighting) on six 2-D datasets, averaged over three seeds at $200$k optimization steps. The $\beta\!=\!1$ recipe reduces SW$_{1}$ on five of six datasets, with the largest reduction ($54\%$) on \texttt{swiss\_roll}. \texttt{two\_spirals} is a known low-curvature outlier where the original MeanFlow already attains the lowest absolute SW$_{1}$; the empirical $\beta^{\ast}\!=\!0$ on the $\beta$-sweep agrees.

    \input{assets/tables/toy}

    \subsection{Full $\beta$-sweep grid on DGMM}

    \Cref{tab: dgmm-beta-full} reports SW$_{1}$ at convergence for every $(d,\beta)$ pair, with mean$\pm$SEM over three seeds. The bolded cell per row is the empirical $\beta^{\ast}\!=\!\arg\min_{\beta}\mathrm{SW}_{1}(\beta)$; the summary view in~\cref{tab: dgmm-beta-summary} (main text) compares only $\beta\!=\!0$ and $\beta^{\ast}$ per dimension. The grid makes the $\beta$-insensitivity of low-$d$ rows visible at full precision: SW$_{1}$ is uniform to three decimal places for $d\!\in\!\{4,8,16\}$.

    \input{assets/tables/dgmm-beta-full}

    \subsection{Stability pilot: learning-rate sweep}
    \label{appx: stability-pilot}

    \Cref{thm: control-variate-optimum} predicts that the deterministic EMA tangent ($\beta\!=\!1$) lowers the per-step gradient variance, which might suggest greater tolerance to large learning rates. We test this directly with a learning-rate sweep on \texttt{eight\_gaussians} and \texttt{checkerboard}, training the three-layer MLP for $50{,}000$ steps at each learning rate in $\{10^{-4},\,3{\times}10^{-4},\,10^{-3},\,3{\times}10^{-3},\,10^{-2},\,3{\times}10^{-2}\}$ for $\beta\!\in\!\{0,0.5,1\}$ over two seeds. A run is counted as diverged if its loss reaches NaN/inf or its final $\mathrm{SW}_1$ exceeds $10\times$ the best-learning-rate value at that dataset; the maximum stable learning rate is the top of the unbroken stable range.

    \begin{table}[!t]
        \centering
        \caption{Maximum stable learning rate per dataset and tangent-mixing coefficient $\beta$ in the stability pilot. A cell is stable iff neither seed diverged; the maximum stable learning rate is the top of the unbroken stable range over the grid $\{10^{-4},3{\times}10^{-4},10^{-3},3{\times}10^{-3},10^{-2},3{\times}10^{-2}\}$.}
        \label{tab: stability-pilot}
        \begin{tabular}{@{}lccc@{}}
            \toprule
            \textbf{Dataset} & $\beta\!=\!0$ & $\beta\!=\!0.5$ & $\beta\!=\!1$ \\
            \midrule
            \texttt{eight\_gaussians} & $10^{-2}$ & $10^{-2}$         & $3{\times}10^{-3}$ \\
            \texttt{checkerboard}     & $10^{-2}$ & $3{\times}10^{-3}$ & $3{\times}10^{-3}$ \\
            \bottomrule
        \end{tabular}
    \end{table}

    Contrary to the per-step-variance intuition, the EMA instantiation does not tolerate higher learning rates: on \texttt{eight\_gaussians} the maximum stable learning rate is $10^{-2}/10^{-2}/3{\times}10^{-3}$ for $\beta\!=\!0/0.5/1$, and on \texttt{checkerboard} it is $10^{-2}/3{\times}10^{-3}/3{\times}10^{-3}$; in both cases $\beta\!=\!1$ diverges at a learning rate at or below the vanilla $\beta\!=\!0$ value. The diagnostics show the mechanism: as the learning rate grows, the EMA parameters lag the online parameters, the tangent goes stale, and the noise ratio grows unboundedly before divergence. The result is consistent with our framework, in which we predict that the variance reduction is a per-step property. In contrast, the stale-tangent bias enters multiplicatively through $\beta(\mJ{+}\mI)\vb$. Meanwhile, it scopes our recommendation in~\cref{sec: conclusion}. We note this is a toy/DGMM-scale observation; we did not run a DiT-scale stability sweep.

    \subsection{Latent DiT-B/4 on ImageNet-$256$}
    \label[appendix]{appx: dit-results}

    We test whether the variance-reduction mechanism of~\cref{thm: jacobi-variance,thm: control-variate-optimum} scales beyond toys by training DiT~\cite{peebles2023scalablediffusion} variants on ImageNet in the latent space of a frozen Stable Diffusion VAE, following the recipe of~\cite{geng2025meanflowsonestepgenerative}. The four $\beta$ configurations share identical hyperparameters and code paths; only the loss differs (\emph{i.e., the original MeanFlow MSE for the interior $\beta$ runs versus the EMA-tangent loss in~\eqref{eq: vamf-obj} for the $\beta\!=\!1$ corner}).

    \noindent\textbf{Setup.} Logit-normal $(r,t)$ sampling with mean $-0.4$ and stddev $1$, overlap rate $0.75$, AdamW with constant learning rate $10^{-4}$, EMA decay $0.9999$, batch size $256$ across $32$ TPU v4 chips. All four configurations ($\beta\!\in\!\{0,0.25,0.5,1\}$) share the same DiT-B/4 backbone, data pipeline, optimizer, EMA schedule, and $(r,t)$ sampling distribution. The interior $\beta$ runs ($\beta\!\in\!\{0,0.25,0.5\}$) reuse the baseline configuration verbatim and add only the tangent-mixing rule of~\eqref{eq: tangent-mix}; the $\beta\!=\!1$ corner additionally turns off the Karras-style adaptive loss weighting and adds the FM anchor of~\eqref{eq: fm-anchor} (DiT anchor hyperparameters in~\cref{appx: implementation}). The extra EMA forward pass and FM-anchor evaluation per step add up to a $\sim\!22\%$ wall-clock overhead on TPU v4 ($2.20$ steps/s for $\beta\!=\!1$ versus $2.65$ steps/s for the baseline); the $\beta\!\in\!\{0.25,0.5\}$ interior runs incur only the EMA forward overhead ($\sim\!10\%$).

    \noindent\textbf{Matched-step ordering.} \Cref{tab: dit-fid} reports FID$_{50\text{k}}$ at representative $5$k-step checkpoints for all four runs. The coarse ordering is unambiguous and holds at every matched-step checkpoint after the early-training transient ($\le\!25$k steps): $\mathrm{FID}(\beta\!=\!1)$ sits far above the other three (a $+12\!\sim\!+14$ gap to the baseline across the converged regime), and $\mathrm{FID}(\beta\!=\!0.5)$ stabilizes a clear $+1.0\!\sim\!+1.5$ above the baseline. The fine gap between $\beta\!=\!0$ and $\beta\!=\!0.25$ is much smaller, tightening from $+9.5$ early in training to $+0.39$ at step $295$k. This sub-$0.5$ FID gap is comparable to the across-seed FID variation we observe for the baseline configuration, so we treat the full four-point ordering at every intermediate checkpoint as a \emph{within-run} consistency check rather than a seed-robust separation; an independently seeded baseline can reorder $\beta\!=\!0$ and $\beta\!=\!0.25$ at intermediate steps. At convergence (step $295$k), the four floors nonetheless recover the predicted ordering (below).

    \noindent\textbf{Converged FID floors and four-point ordering.} All four configurations completed their scheduled $300$k-step training and have a matched-step eval at step $295$k. The four FID floors at step $295$k recover the bias-variance ordering of~\cref{thm: control-variate-optimum}:
    \begin{align*}
        \mathrm{FID}(\beta\!=\!0)&\;=\;11.37\\
        \mathrm{FID}(\beta\!=\!0.25)&\;=\;11.76\\
        \mathrm{FID}(\beta\!=\!0.5)&\;=\;12.51\\
        \mathrm{FID}(\beta\!=\!1)&\;=\;23.36\\
        \mathrm{FID}(\beta\!=\!0)\;<\;\mathrm{FID}(\beta\!=\!0.25)&\;<\;\mathrm{FID}(\beta\!=\!0.5)\;<\;\mathrm{FID}(\beta\!=\!1)
    \end{align*}
    Quantitatively, taking offsets from the baseline floor $11.37$, $\Delta\mathrm{FID}(\beta\!=\!0.5)\!\approx\!1.14$ and $\Delta\mathrm{FID}(\beta\!=\!1)\!\approx\!11.99$. Anchoring the gradient-MSE prediction $\beta^{2}\!\cdot\!A$ at the $\beta\!=\!0.5$ point gives $A\!\approx\!4.58$, so the predicted offset at $\beta\!=\!1$ is $4.58$. The empirical $11.99$ exceeds this by $\sim\!2.6\!\times$. The FID landscape is therefore \emph{super-linear} in the MSE-axis bias: it penalizes large bias more aggressively than gradient MSE does, which pulls the FID-optimal $\beta$ past the gradient-MSE interior minimizer to the unbiased corner $\beta\!=\!0$. The large, seed-robust separations: the $\beta\!=\!1$ corner far above the interior runs and the super-linear $\sim\!2.6\!\times$ FID penalty relative to the gradient-MSE prediction. These are the clearest DiT-scale confirmations of the bias-variance decomposition.

    \input{assets/tables/dit-fid}

    \noindent\textbf{Bias and shrinkage trajectory.} To localize the predicted optimum we measure two diagnostics on the DiT checkpoints: an EMA-tracking proxy $\widehat{\|\vb\|^{2}}\!\triangleq\!\E_{\vx_{t}}\bigl[\|\vu_{\vtheta}(\vx_{t},t,t)\!-\!\vu_{\bar{\vtheta}}(\vx_{t},t,t)\|^{2}\bigr]$ (the boundary gap between current parameters and the EMA copy), and the irreducible noise floor $\sigma^{2}d\!\approx\!8.2{\times}10^{3}$ from $\Tr(\Sigma_{\vv^{\prime}})$ on the same batch.~\cref{tab: dit-bias-shrinkage} reports the trajectory: the EMA-tracking proxy \emph{decays} through training, and the shrinkage factor $\sigma^{2}d/(\sigma^{2}d{+}\widehat{\|\vb\|^{2}})$ stays near $1$ throughout converged training. We caveat that $\widehat{\|\vb\|^{2}}$ is the EMA-tracking gap, not the true model bias $\|\vu_{\vtheta}(\vx_{t},t,t)\!-\!\vv(\vx_{t},t)\|^{2}$; cleanly estimating the latter requires access to the conditional distribution $p(\vx_{0}\!\mid\!\vx_{t})$, which is intractable on ImageNet. Combined with $\kappa\!\approx\!1$ from the Frobenius-norm Hutchinson trace, the scalar-isotropic substitution gives $\beta^{\ast}\!\in\![0.46,\,0.50]$ across the full trajectory; this is a coarse first-cut prediction that the direct matrix-form measurement (see~\cref{subsec: exp-dit}) refines to $\beta^{\ast}_{\text{no bias}}\!\approx\!0.94$.

    \input{assets/tables/dit-bias-shrinkage}

    \noindent\textbf{Qualitative samples.} \Cref{fig: dit-qualitative-beta0,fig: dit-qualitative-beta05,fig: dit-qualitative-beta1} show $100$ class-conditional generations from the three available checkpoints at step $300$k, one figure per $\beta$ value. The three figures share the same noise key (seed $7$) and the same $100$ ImageNet class labels (drawn uniformly from $\{0,\dots,999\}$ with an independent label seed of $7$, no curation), so per-position differences across the three figures isolate the effect of the loss alone. The $\beta\!=\!0$ versus $\beta\!=\!1$ contrast is at the perceptual scale, consistent with the $\!\sim\!2\!\times$ FID gap; the intermediate $\beta\!=\!0.5$ figure is at the metric-difference scale and is not always visually separable from the baseline at this sample budget.

    \subsection{Independent validation of the marginal-velocity bias}
    \label{appx: vref}

    The matrix-form estimate of~\cref{subsec: exp-dit} uses the model's own boundary $\vu_{\vtheta}(\vx_{t},t,t)$ as the proxy for the marginal velocity, so the measured bias is referenced to the same network. To obtain a non-circular estimate, we train an independent, unconditional flow-matching model on the same Stable-Diffusion latents and DiT-B/4 backbone.
    %($55$k steps to convergence; all evaluations at the null class, index $1000$, from a pool of $2048$ latents with boundary $r\!=\!t$),
    We use this model purely as a reference $\vv_{\text{ref}}$ for the marginal velocity. To begin with, we confirm the reference is itself low-bias: at $t\!=\!0.1$ its boundary residual is $4165.27$ against the analytic noise floor $4096$ (data dimension $d\!=\!4096$), a $\sim\!1.7\%$ relative bias. We then recompute the bias as $\|\vu_{\vtheta}(\vx_{t},t,t)-\vv_{\text{ref}}(\vx_{t},t)\|^{2}$ at $t\!\in\!\{0.1,0.3,0.5,0.7,0.9\}$ with $1024$ samples per $t$. \Cref{fig: vref} shows the matrix-form $\beta^{\ast}$ and the bias-to-noise ratio per probed $t$.

    The ratio stays in $[0.006, 0.043]$ across $t$ (shrinkage $[0.959, 0.994]$), and the matrix-form $\beta^{\ast}$ recomputed with the independent bias lies in $[0.901, 0.935]$using. The result agrees with the one with the model's own boundary: the bias is small, and $\beta^{\ast}_{\text{matrix}}$ sits near the top of $(0,0.94]$. The independent ratio is larger than the EMA-self-proxy ratio (in $[0.001, 0.004]$ across $t$) but yields the same conclusion. We make the reference model unconditional to match the unconditional MeanFlow setting in our theory, and leave the conditional model, or the one with Classifier-free Guidance, to future work.

    \pagebreak

    {%
        \small
        \putbib[references]
    }

    \newpage
    \begin{figure}[t]
        \centering
        \includegraphics[width=\textwidth]{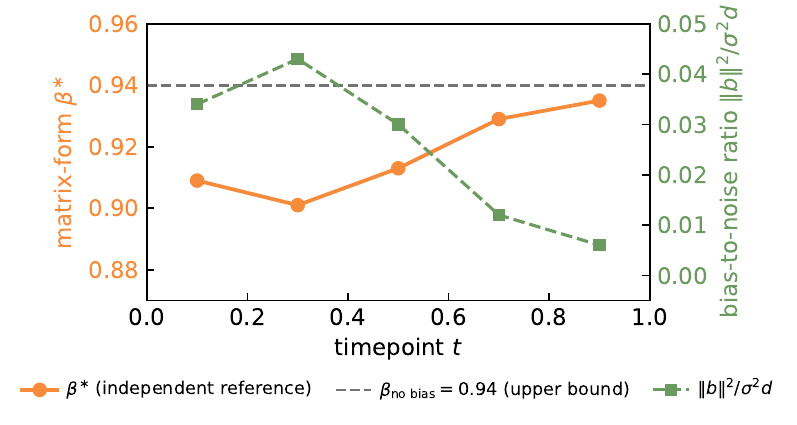}
        \caption{Non-Circular Bias Diagnostics against the Independent Reference $\vv_{\text{ref}}$. Matrix-form $\beta^{\ast}\!=\!\beta_{\text{no bias}}\!\cdot\!\text{shrinkage}$ (left axis, no-bias bound $\beta_{\text{no bias}}\!=\!0.94$ marked) and bias-to-noise ratio $\|\vb\|^{2}/\sigma^{2}d$ (right axis) per probed $t$, $1024$ samples each. The independent $\beta^{\ast}$ stays in $[0.901,0.935]$, just below the no-bias bound, and the ratio remains small ($\le\!0.043$).}
        \label{fig: vref}
    \end{figure}
    \begin{figure}[t]
        \centering
        \includegraphics[width=\textwidth]{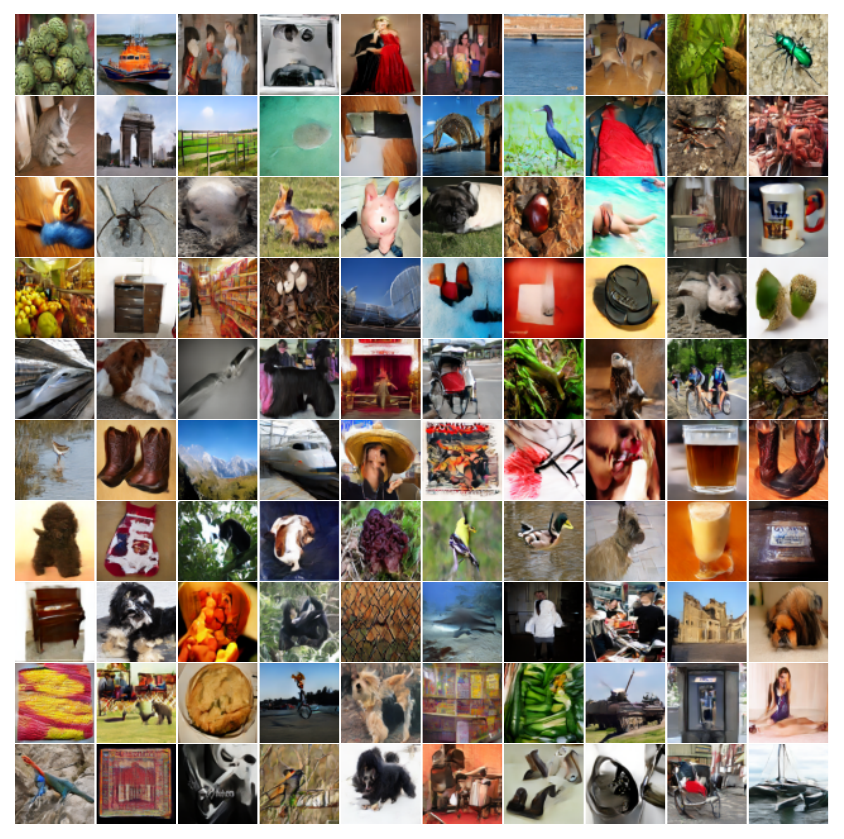}
        \caption{$100$ Class-Conditional Samples from the $\beta\!=\!0$ Baseline Checkpoint at Step $300$k (FID $11.37$). Same noise seed and labels as~\cref{fig: dit-qualitative-beta05,fig: dit-qualitative-beta1}.}
        \label{fig: dit-qualitative-beta0}
    \end{figure}

    \begin{figure}[t]
        \centering
        \includegraphics[width=\textwidth]{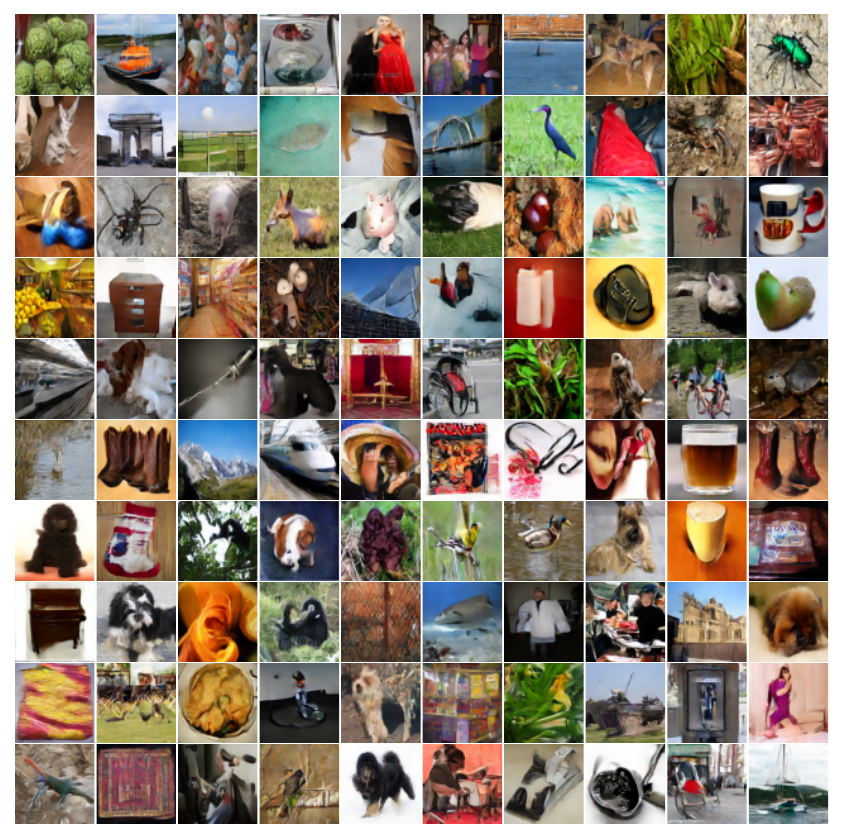}
        \caption{$100$ Class-Conditional Samples from the $\beta\!=\!0.5$ Checkpoint at Step $300$k (FID $12.51$). Same noise seed and labels as~\cref{fig: dit-qualitative-beta0,fig: dit-qualitative-beta1}.}
        \label{fig: dit-qualitative-beta05}
    \end{figure}

    \begin{figure}[t]
        \centering
        \includegraphics[width=\textwidth]{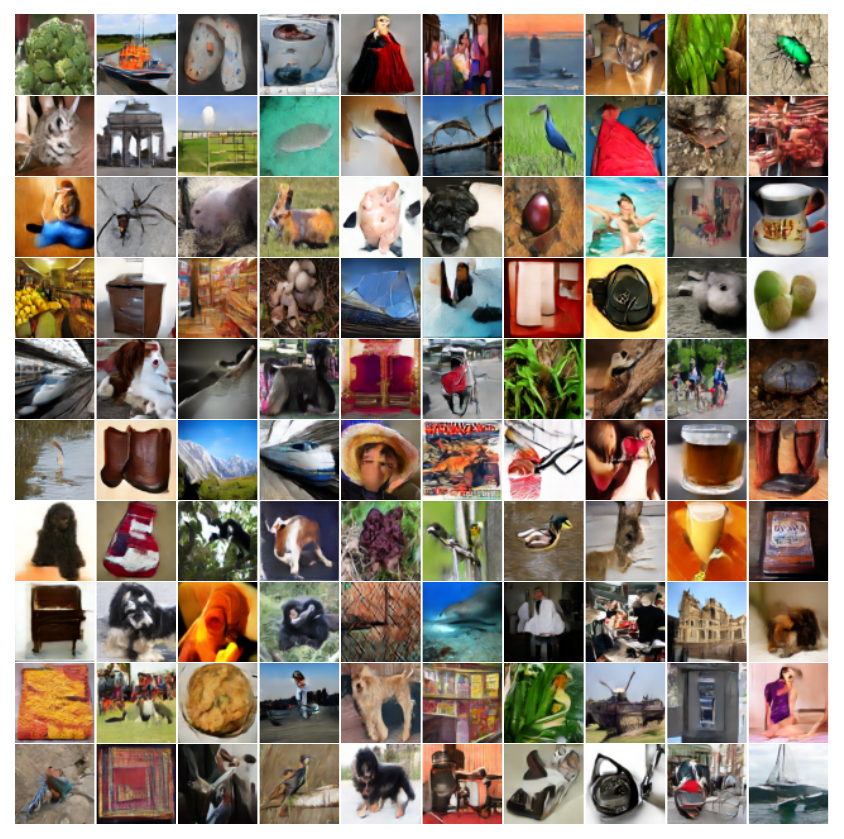}
        \caption{$100$ Class-Conditional Samples from the $\beta\!=\!1$ Corner Checkpoint at Step $300$k (FID $23.36$). Same noise seed and labels as~\cref{fig: dit-qualitative-beta0,fig: dit-qualitative-beta05}.}
        \label{fig: dit-qualitative-beta1}
    \end{figure}

\end{appendices}

%% file: assets/tables/notation.tex
\begin{table}[!ht]
    \caption{Table of notations.}
    \label{tab: notation}
    \centering
    \begin{tabular}{@{}ll@{}}
        \toprule
        \textbf{Notation}  &   \textbf{Description}   \\
        \midrule
        \multicolumn{2}{@{}l}{\emph{Scalars}} \\
        $d$ & Dimension of the data samples \\
        $p$ & Number of parameters in the model \\
        $c$ & Coefficient in the fitted gradient-MSE curve $c\beta^{2}$ \\
        \midrule
        \multicolumn{2}{@{}l}{\emph{State vectors and velocity fields}} \\
        $\rvx$, $\rvx_{t}$  & Random data sample / state at interpolant time $t$ \\
        $\vx$, $\vx_{t}$ & Realized state / state at time $t$\\
        $\vv(\vx,t)$ & Marginal velocity field \\
        $\vv_{\text{cond}}$ & Conditional velocity $\vv_{\text{cond}}\!\triangleq\!\vv(\vx,t|\vx_{0})$.\\
        $\vv^{\prime}$ & Conditional velocity fluctuation: $\vv^{\prime}=\vv_{\text{cond}}-\vv(\vx,t)$ \\
        $\hat{\vv}$ & Deterministic proxy for the marginal velocity field $\vv$ \\
        \midrule
        \multicolumn{2}{@{}l}{\emph{Model}} \\
        $\vu_{\vtheta}(\vx,r,t)$ & Two-parameter average velocity field with parameters $\vtheta$ \\
        $\vu_{\bar{\vtheta}}$ &  Copy of $\vu_{\vtheta}$ with exponential moving-average parameters $\bar{\vtheta}$ \\
        \midrule
        \multicolumn{2}{@{}l}{\emph{Spatial-Jacobian quantities}} \\
        $\partial_{\vx_{t}}\vu_{\vtheta}$ & Spatial Jacobian of $\vu_{\vtheta}$ with respect to state $\vx_{t}$ \\
        $\mJ$ & Jacobi factor: $\mJ=(t-r)\partial_{\vx_{t}}\vu_{\vtheta}-\mI_{d}\in\mathbb{R}^{d\times d}$ \\
        $\mA$ & Shorthand $\mA\!\triangleq\!\mJ+\mI_{d}=(t-r)\partial_{\vx_{t}}\vu_{\vtheta}$ \\
        $\kappa$ & Scalar-isotropic approximation for $\mJ$: $\mJ\!\approx\!\kappa\mI_{d}$ \\
        \midrule
        \multicolumn{2}{@{}l}{\emph{Bias and Variances}} \\
        $\Sigma_{\vv^{\prime}}$ & Covariance matrix of conditional velocity fluctuation: $\Cov_{\vx_{0}\mid\vx_{t}}[\vv^{\prime}]$ \\
        $\sigma^{2}d$ & Total variance of $\vv^{\prime}$ in scalar-isotropic case: $\Sigma_{\vv^{\prime}}\!\approx\!\sigma^{2}\mI_{d}$ \\
        $\vb$ & Proxy bias: $\vb\!\triangleq\!\hat{\vv}-\vv(\vx_{t},t)$ \\
        $\widehat{\|\vb\|_{2}^{2}}$ & EMA-tracking proxy for bias: $\hat{\|\vb\|_{2}^{2}}\!\triangleq\!\E_{\vx_{t}}\!\left[\|\vu_{\vtheta}(\vx_{t},t,t)-\vu_{\bar{\vtheta}}(\vx_{t},t,t)\|^{2}\right]$ \\
        \midrule
        \multicolumn{2}{@{}l}{\emph{Gradients}} \\
        $\nabla_{\vtheta}\ell_{\text{MF}}$ & Per-step gradient of the MeanFlow loss \\
        $\vg$ & Parameter Jacobian of the network output: $\vg\!\triangleq\!\nabla_{\vtheta}\vu_{\vtheta}\!\in\!\mathbb{R}^{d\times p}$ \\
        $\mG_{\vtheta}$ & Parameter-space Gram matrix: $\mG_{\vtheta}\!\triangleq\!\vg\vg^{\!\top}\!\in\!\mathbb{R}^{d\times d}$ \\
        \midrule
        \multicolumn{2}{@{}l}{\emph{Tangent-mixing coefficient}} \\
        $\beta\!\in\![0,1]$ & Tangent-mixing coefficient defined in~\eqref{eq: tangent-mix} \\
        $\beta^{\ast}$ & Closed-form scalar-isotropic minimizer of $M(\beta)$ in~\cref{thm: control-variate-optimum} \\
        $\beta^{\ast}_{\text{matrix}}$ & Matrix-form minimizer in \cref{appx: theorem-proof},~\eqref{eq: beta-star-matrix}\\
        $M(\beta)$ & Conditional MSE of the per-sample gradient~\eqref{eq: g-beta} at coefficient $\beta$ \\
        \midrule
        \multicolumn{2}{@{}l}{\emph{Loss components}} \\
        $\gL_{\text{MF}}$ & Vanilla MeanFlow loss defined by~\eqref{eq: mf-loss} \\
        $\gL_{\text{MF}}^{\text{EMA}}$ & EMA-tangent variant of the MeanFlow loss defined by~\eqref{eq: ema-mf-loss} \\
        $\gL_{\text{FM}}$ & Flow-matching anchor loss defined by~\eqref{eq: fm-anchor} \\
        $\gL_{\beta=1}$ & $\beta\!=\!1$ corner training loss defined by~\eqref{eq: vamf-obj} \\
        \midrule
        \multicolumn{2}{@{}l}{\emph{Operators}} \\
        $\texttt{sg}[\cdot]$ & Stop-gradient operator \\
        $\texttt{JVP}(\vu,\boldsymbol{x},\boldsymbol{v})$ & Jacobian-vector product of $\vu$ evaluated at $\boldsymbol{x}$ in tangent direction $\boldsymbol{v}$ \\
        $\Tr(\cdot)$, $\Cov[\cdot]$, $\Var[\cdot]$ & Trace, covariance, variance \\
        $\mathrm{div}(\vf)$ & Divergence of vector field $\vf$ \\
        \bottomrule
    \end{tabular}
\end{table}

%% file: assets/algorithms/vamf.tex
\begin{algorithm}[t]
    \caption{Training the $\beta\!=\!1$ instantiation with EMA tangent and FM anchor loss}
    \label{alg: va-mf}
    \begin{algorithmic}
        \Require Dataset $\gD$, EMA decay $\mu$, anchor weight $\lambda$, anchor interval $[\delta_{\min},\delta_{\max}]$, learning rate $\eta$
        \State Initialize $\bar{\vtheta}\gets\vtheta$
        \For{$k=1,2,\ldots$}
            \State Sample $(\vx_{0},\vx_{1})\sim\gD\times\gN(\bm{0},\mI_{d})$,\; $t\sim\gU[0,1]$,\; $r\sim\gU[0,t]$,\; $\delta\sim\gU[\delta_{\min},\delta_{\max}]$
            \State $\vx_{t}\gets(1-t)\vx_{0}+t\,\vx_{1}$,\quad $\vv_{\text{cond}}\gets\vx_{1}-\vx_{0}$
            \State $\vv_{\text{tang}}\gets\texttt{sg}\!\left[\vu_{\bar{\vtheta}}(\vx_{t},t,t)\right]$ \Comment{EMA tangent (one extra forward pass)}
            \State $V_{\vtheta}\gets\vu_{\vtheta}(\vx_{t},r,t)+(t{-}r)\,\texttt{sg}\!\left[\texttt{JVP}(\vu_{\vtheta},(\vx_{t},r,t),(\vv_{\text{tang}},0,1))\right]$
            \State $\ell_{\text{MF}}\gets\|V_{\vtheta}-\vv_{\text{cond}}\|_{2}^{2}$
            \State $\ell_{\text{FM}}\gets\|\vu_{\vtheta}(\vx_{t},t{-}\delta,t)-\vv_{\text{cond}}\|_{2}^{2}$ \Comment{FM anchor (one extra forward pass)}
            \State $\vtheta\gets\vtheta-\eta\,\nabla_{\vtheta}\!\left(\ell_{\text{MF}}+\lambda\,\ell_{\text{FM}}\right)$
            \State $\bar{\vtheta}\gets\mu\bar{\vtheta}+(1-\mu)\vtheta$ \Comment{EMA update}
        \EndFor
        \State \Return $\vtheta$ \Comment{Generate: $\hat{\vx}_{0}=\vx_{1}-\vu_{\vtheta}(\vx_{1},0,1)$}
    \end{algorithmic}
\end{algorithm}

%% file: assets/tables/toy.tex
\begin{table}[!t]
    \centering
    \caption{Sliced Wasserstein distance (lower is better) on the 2-D toy benchmark, averaged over three seeds $\{42,0,1\}$ at $200$k steps. SW$_p$ is evaluated on $4096$ samples with $500$ random projections; both metrics use the \emph{same} projection key per evaluation for variance-reduced paired estimates.}
    \label{tab: toy}
    \begin{tabular}{llcc}
        \toprule
        Dataset & Metric & MeanFlow ($\beta\!=\!0$) & $\beta\!=\!1$ \\
        \midrule
        \texttt{checkerboard}    & SW$_1$ & $0.175 \pm 0.036$ & $\mathbf{0.134 \pm 0.029}$ \\
                                 & SW$_2$ & $0.234 \pm 0.042$ & $\mathbf{0.175 \pm 0.046}$ \\
        \midrule
        \texttt{eight\_gaussians}& SW$_1$ & $0.098 \pm 0.016$ & $\mathbf{0.085 \pm 0.010}$ \\
                                 & SW$_2$ & $0.151 \pm 0.022$ & $\mathbf{0.146 \pm 0.023}$ \\
        \midrule
        \texttt{two\_moons}      & SW$_1$ & $0.093 \pm 0.028$ & $\mathbf{0.072 \pm 0.019}$ \\
                                 & SW$_2$ & $0.177 \pm 0.068$ & $\mathbf{0.093 \pm 0.025}$ \\
        \midrule
        \texttt{swiss\_roll}     & SW$_1$ & $0.098 \pm 0.058$ & $\mathbf{0.045 \pm 0.003}$ \\
                                 & SW$_2$ & $0.177 \pm 0.151$ & $\mathbf{0.060 \pm 0.007}$ \\
        \midrule
        \texttt{two\_spirals}    & SW$_1$ & $\mathbf{0.034 \pm 0.001}$ & $0.037 \pm 0.001$ \\
                                 & SW$_2$ & $\mathbf{0.044 \pm 0.001}$ & $0.046 \pm 0.001$ \\
        \midrule
        \texttt{pinwheel}        & SW$_1$ & $0.094 \pm 0.011$ & $\mathbf{0.059 \pm 0.005}$ \\
                                 & SW$_2$ & $0.128 \pm 0.014$ & $\mathbf{0.076 \pm 0.005}$ \\
        \bottomrule
    \end{tabular}
\end{table}

%% file: assets/tables/dgmm-beta-full.tex
\begin{table}[!t]
    \centering
    \caption{DGMM $\beta$-sweep: full grid of mean$\pm$SEM SW$_{1}$ at convergence ($200$k steps, three seeds $\{42,0,1\}$). Bolded cell per column marks the empirical $\beta^{\ast}\!=\!\arg\min_{\beta}\mathrm{SW}_{1}(\beta)$ at that dimension. The summary view is in~\Cref{tab: dgmm-beta-summary}.}
    \label{tab: dgmm-beta-full}
    \resizebox{\textwidth}{!}{%
    \begin{tabular}{c|cccccc}
        \toprule
        $\beta$ & \multicolumn{6}{c}{$d$} \\
        \cmidrule(lr){2-7}
            & $2$ & $4$ & $8$ & $16$ & $32$ & $64$ \\
        \midrule
        $0.0$ & $0.152{\scriptstyle\pm0.022}$ & $0.083{\scriptstyle\pm0.011}$ & $0.062{\scriptstyle\pm0.003}$ & $0.044{\scriptstyle\pm0.002}$ & $0.034{\scriptstyle\pm0.001}$ & $0.028{\scriptstyle\pm0.003}$ \\
        $0.1$ & $0.151{\scriptstyle\pm0.025}$ & $0.083{\scriptstyle\pm0.011}$ & $0.062{\scriptstyle\pm0.003}$ & $0.044{\scriptstyle\pm0.002}$ & $\mathbf{0.034{\scriptstyle\pm0.001}}$ & $0.027{\scriptstyle\pm0.003}$ \\
        $0.2$ & $0.149{\scriptstyle\pm0.024}$ & $0.083{\scriptstyle\pm0.011}$ & $0.062{\scriptstyle\pm0.003}$ & $0.044{\scriptstyle\pm0.002}$ & $0.037{\scriptstyle\pm0.003}$ & $0.028{\scriptstyle\pm0.003}$ \\
        $0.3$ & $0.148{\scriptstyle\pm0.022}$ & $0.083{\scriptstyle\pm0.011}$ & $0.062{\scriptstyle\pm0.003}$ & $0.044{\scriptstyle\pm0.002}$ & $0.038{\scriptstyle\pm0.004}$ & $0.028{\scriptstyle\pm0.003}$ \\
        $0.4$ & $0.149{\scriptstyle\pm0.022}$ & $0.083{\scriptstyle\pm0.011}$ & $0.062{\scriptstyle\pm0.003}$ & $0.044{\scriptstyle\pm0.002}$ & $0.041{\scriptstyle\pm0.007}$ & $\mathbf{0.025{\scriptstyle\pm0.000}}$ \\
        $0.5$ & $0.150{\scriptstyle\pm0.022}$ & $0.084{\scriptstyle\pm0.011}$ & $0.062{\scriptstyle\pm0.003}$ & $0.044{\scriptstyle\pm0.002}$ & $0.041{\scriptstyle\pm0.007}$ & $0.026{\scriptstyle\pm0.001}$ \\
        $0.6$ & $0.150{\scriptstyle\pm0.021}$ & $0.083{\scriptstyle\pm0.011}$ & $0.062{\scriptstyle\pm0.003}$ & $0.044{\scriptstyle\pm0.002}$ & $0.035{\scriptstyle\pm0.001}$ & $0.026{\scriptstyle\pm0.001}$ \\
        $0.7$ & $0.148{\scriptstyle\pm0.020}$ & $0.083{\scriptstyle\pm0.011}$ & $0.062{\scriptstyle\pm0.003}$ & $0.044{\scriptstyle\pm0.002}$ & $0.034{\scriptstyle\pm0.001}$ & $0.026{\scriptstyle\pm0.001}$ \\
        $0.8$ & $0.148{\scriptstyle\pm0.016}$ & $\mathbf{0.083{\scriptstyle\pm0.011}}$ & $0.062{\scriptstyle\pm0.003}$ & $0.044{\scriptstyle\pm0.002}$ & $0.036{\scriptstyle\pm0.002}$ & $0.027{\scriptstyle\pm0.002}$ \\
        $0.9$ & $0.148{\scriptstyle\pm0.015}$ & $0.083{\scriptstyle\pm0.011}$ & $0.062{\scriptstyle\pm0.003}$ & $0.044{\scriptstyle\pm0.002}$ & $0.037{\scriptstyle\pm0.003}$ & $0.027{\scriptstyle\pm0.001}$ \\
        $1.0$ & $\mathbf{0.146{\scriptstyle\pm0.015}}$ & $0.083{\scriptstyle\pm0.011}$ & $\mathbf{0.062{\scriptstyle\pm0.003}}$ & $\mathbf{0.044{\scriptstyle\pm0.003}}$ & $0.036{\scriptstyle\pm0.002}$ & $0.028{\scriptstyle\pm0.001}$ \\
        \bottomrule
    \end{tabular}%
    }
\end{table}

%% file: assets/tables/dit-fid.tex
\begin{table}[!t]
    \centering
    \caption{DiT-B/4 / ImageNet-$256$ FID$_{50\text{k}}$ (lower is better) at representative $5$k-step checkpoints across the four-point $\beta$-sweep. All four runs completed their scheduled $300$k-step training and have a matched-step eval at step $295$k. Bottom rows give the matched-step deltas. The coarse ordering ($\beta\!=\!1$ well above the interior runs, $\beta\!=\!0.5$ above the baseline) holds at every matched-step checkpoint after the early-training transient; the finer $\beta\!=\!0$ vs $\beta\!=\!0.25$ gap (sub-$0.5$ FID near convergence) is within across-seed variation.}
    \label{tab: dit-fid}
    \resizebox{\textwidth}{!}{%
    \begin{tabular}{lcccccccc}
        \toprule
        step (k)                                     & $30$    & $50$    & $100$   & $150$   & $200$   & $250$   & $275$   & $\mathbf{295}$ \\
        \midrule
        baseline ($\beta\!=\!0$)                     & $128.9$ & $54.5$  & $22.0$  & $15.5$  & $13.0$  & $11.9$  & $11.6$  & $\mathbf{11.37}$ \\
        $\beta\!=\!0.25$                             & $138.5$ & $61.6$  & $23.3$  & $16.2$  & $13.3$  & $12.1$  & $11.7$  & $\mathbf{11.76}$ \\
        $\beta\!=\!0.5$                              & $145.2$ & $66.9$  & $25.6$  & $17.6$  & $14.4$  & $13.2$  & $12.7$  & $\mathbf{12.51}$ \\
        $\beta\!=\!1$                                & $148.8$ & $79.9$  & $41.7$  & $32.1$  & $27.6$  & $25.1$  & $24.1$  & $\mathbf{23.36}$ \\
        \midrule
        $\Delta$ ($\beta\!=\!0.25$ $-$ baseline)     & $+9.5$  & $+7.1$  & $+1.3$  & $+0.7$  & $+0.4$  & $+0.2$  & $+0.1$  & $\mathbf{+0.39}$ \\
        $\Delta$ ($\beta\!=\!0.5$ $-$ baseline)      & $+16.2$ & $+12.4$ & $+3.6$  & $+2.1$  & $+1.4$  & $+1.3$  & $+1.0$  & $\mathbf{+1.14}$ \\
        $\Delta$ ($\beta\!=\!1$ $-$ baseline)        & $+19.9$ & $+25.4$ & $+19.7$ & $+16.6$ & $+14.6$ & $+13.2$ & $+12.5$ & $\mathbf{+11.99}$ \\
        \bottomrule
    \end{tabular}
    }
\end{table}

%% file: assets/tables/dit-bias-shrinkage.tex
\begin{table}[!t]
    \centering
    \caption{Direct DiT-checkpoint measurement of the bias-variance decomposition ingredients of~\Cref{thm: control-variate-optimum} at $t\!=\!0.5$. The EMA bias $\|\vb\|^{2}$ averages over $128$ synthetic-Gaussian $\vx_{t}$ samples and is reported for both methods. The shrinkage factor uses $\sigma^{2}d\!\approx\!8.2{\times}10^{3}$. The predicted $\beta^{\ast}$ uses $\kappa\!=\!1$.}
    \label{tab: dit-bias-shrinkage}
    \begin{tabular}{lcccc}
        \toprule
        step (k)                              & $\|\vb\|^{2}$ (avg) & $\sigma^{2}d/(\sigma^{2}d{+}\|\vb\|^{2})$ & $\beta^{\ast}$ \\
        \midrule
        $20$                                  & $\sim\!700$         & $0.92$                                    & $0.46$ \\
        $40$                                  & $\sim\!150$         & $0.98$                                    & $0.49$ \\
        $80$                                  & $\sim\!50$          & $0.99$                                    & $0.50$ \\
        \bottomrule
    \end{tabular}
\end{table}